\documentclass[11pt]{article}

\usepackage{tikz}
\usetikzlibrary{decorations.pathreplacing}
\usetikzlibrary{math}
\usepackage{pgfplots}

\usepackage[utf8]{inputenc} 
\usepackage[T1]{fontenc}    
\usepackage{lmodern}
\usepackage[table]{xcolor}

\input{setup.sty}
\newcommand{\bysest}{\what{\theta}_{\mathsf{B}}}
\newcommand{\trerr}{\mathsf{Train}}
\newcommand{\prerr}{\mathsf{Pred}}
\newcommand{\cost}{\mathsf{Cost}}
\newcommand{\model}{\mathscr{M}}
\newcommand{\Cov}{\mathsf{Cov}}
\newcommand{\de}{\mathrm{d}}

\newcommand{\tfunc}{\mathsf{Train}}
\newcommand{\jfunc}{\mathsf{J}}
\newcommand{\ifunc}{\mathsf{I}}
\newcommand{\mmse}{\mathsf{mmse}}

\usepackage{enumitem}
\usepackage{natbib}
\usepackage{mathrsfs}
\usepackage{setspace}
\usepackage{xr}

\title{Is Memorization Helpful or Harmful? \\ Prior Information Sets the Threshold}

\newcommand*\samethanks[1][\value{footnote}]{\footnotemark[#1]}

\author{%
	Chen Cheng\thanks{Department of Statistics, University of Chicago} \and Rina Foygel Barber\samethanks
}

\begin{document}
\maketitle

\begin{abstract}
    We examine the connection between training error and generalization error for arbitrary estimating procedures, working in an overparameterized linear model under general priors in a Bayesian setup. We find determining factors inherent to the prior distribution $\pi$, giving explicit conditions under which optimal generalization necessitates that the training error be (i) near interpolating relative to the noise size (i.e., memorization is necessary), or (ii) close to the noise level (i.e., overfitting is harmful). Remarkably, these phenomena occur when the noise reaches thresholds determined by the Fisher information and the variance parameters of the prior $\pi$.
    
\end{abstract}


\section{Introduction}

In this work, we consider the question of \emph{overfitting} in high-dimensional statistical models: should we allow our trained model to overfit to the training data, perhaps even interpolating the data perfectly, or should we instead constrain the training process to return low-complexity models that avoid overfitting? This core question lies at the heart of statistics and machine learning in practice, and has received substantial attention in the theoretical literature as well, over several decades. Despite its importance and ubiquity, however, this question is still not fully understood: there are a range of findings that are apparently at odds with each other.
In classical statistical machine learning, it is standard to train a model that exhibits some intrinsic low-dimensional structure, such as sparsity or low-rank, to ensure that we avoid overfitting~\citep{tibshirani1996regression, candes2008enhancing, candes2012exact}.
On the other hand, recent work by \citet{cheng2022memorize} suggests that, in an overparameterized regime, memorization is \emph{necessary} for generalization---that is, it is necessary to choose a model that overfits, in order to achieve optimal prediction error. 
And, another line of literature on the ``benign overfitting'' phenomenon can be viewed as lying in between these two extremes, establishing that overfitting is not harmful (but, perhaps not necessary) for good predictive performance \citep{belkin2018overfitting, belkin2019does, bartlett2020benign, hastie2022surprises, cheng2024dimension}. How can we reconcile these different views?

We examine this question from the perspective of a Bayesian framework, in the context of a specific problem: high-dimensional linear regression. Concretely, we consider the model
\[y = X\theta + \normal(0,\sigma^2 I_n),\]
where $\theta\in\R^d$ is drawn from a prior $\pi$, and where $d\geq n$ (the overparameterized regime).
In this model, an overfitted estimator $\what{\theta}$ is one that has (mean square) training error substantially below the noise level $\sigma^2$.
At a high level, our findings reveal that the question of whether overfitting is necessary or is instead harmful for optimal predictive peformance depends entirely on the nature of the prior $\pi$. If $\pi$ encodes some sort of latent low-dimensional structure (for instance, encouraging approximately sparse $\theta$) then overfitting is harmful, while if $\pi$ is uninformative then overfitting becomes necessary. Strikingly, our findings suggest the (asymptotically optimal) determining factors delineating these scenarios for the prior $\pi$ with a differentiable density $p>0$ are the Fisher information and variance parameters:\footnote{We note that $\jfunc_{\pi}$ denotes the Fisher information for $\pi$ alone~\cite[Sec.~C]{dembo2002information}, while the classical definition is for a parametric family of distributions $\{p_\xi(\cdot)\}_{\xi \in \R^n}$. Our definition is compatible with the Fisher information evaluated at $\xi = 0$ for the location model by $p_\xi(\cdot) = p(\cdot + \xi)$.}
\begin{subequations}
\begin{align}
    & \text{Fisher information:} & &   \jfunc_\pi =  \E \brk{\frac{1}{n}\norm{\nabla \log p(X\theta)}^2\mid X} , \label{eq:def-Fisher-parameter} \\
    & \text{Variance parameter:} & & \mathsf{V}_\pi =  \E\brk{\frac{1}{n}\norm{X\theta}^2\mid X}, \label{eq:def-variance-parameter}
\end{align}
\end{subequations}
where in order to define variance, from this point on we assume that $\pi$ has mean zero, without loss of generality.
Intuitively, $\jfunc_\pi$ measures spikiness of the prior $\pi$---and thus $\jfunc_\pi^{-1}$ acts as a local measure of effective dimension. In contrast, $\mathsf{V}_\pi$ measures a macroscopic notion of dimension of $\pi$ by computing a variance. Our key message is that comparing the noise level $\sigma^2$ to $\jfunc_\pi^{-1}$ determines when it is necessary to overfit, and comparing to $\mathsf{V}_\pi$ determines when it is harmful to overfit.

Indeed, the Fisher information and variance parameters depend directly on the push-forward distribution of $X\theta$. As we primarily
treat the design matrix $X$ as fixed, and therefore the prior $\pi$ on $\theta$ directly specifies a distribution on
$X\theta$, we refer to quantities depend on either the distribution of $X \theta$ or $\theta$ as information about the prior.

\subsection{Related work}
High-dimensional statistical models have played a central role in extending classical large-sample asymptotic theory to modern overparametrization settings. Pioneering work that exploits intrinsic low-rank and sparsity structures in data typically relies on explicit regularization mechanisms, such as $\ell_1$-penalization~\citep{tibshirani1996regression} in compressed sensing~\citep{candes2008enhancing} and nuclear-norm regularization in matrix completion~\citep{candes2012exact}.

The remarkable success of deep learning over the past two decades questions the presumed necessity of explicit regularization for the accurate recovery of low-rank structures. In particular, the phenomenon of ``implicit regularization''—whereby specific optimization procedures converge to good local minima exhibiting ``benign overfitting''—has been the subject of extensive investigation, especially in the context of large-scale datasets and deep neural networks~\citep{neyshabur2017implicit, gunasekar2017implicit}. Closely related to our work, \citet{hastie2022surprises} analyze the minimum-$\ell_2$-norm interpolating solution in high-dimensional linear regression, with subsequent extensions to broader settings including kernel regression~\citep{liang2020just}, random feature models~\citep{mei2022generalization}, and general Hilbert space frameworks~\citep{bartlett2020benign, cheng2024dimension}.

Memorization, or the necessity of interpolating models, still remains largely underexplored. \citet{feldman2020does} initiate the study of this question in the multi-class classification setting, where the data-generating distribution is modeled as a mixture of heavy-tailed subpopulations and, consequently, does not exhibit low-rank structure. \citet{cheng2022memorize} investigate the necessity of interpolation in overparameterized linear models, and here we extend their framework to (almost) arbitrary priors. Characterizing the fundamental limits of model capacity making interpolation necessary has also emerged in several recent works along this line of research~\citep{shah2025does, muller2025all}.

We finally point out that our analyses, largely inspired by prior research on information-theoretic inequalities~\citep{stam1959some, dembo2002information}, deeply connect to the well established results on the minimum mean square error (MMSE) in Gaussian channels~\citep{guo2004mutual, guo2011estimation, ledoux2016heat}.

\section{Main results} \label{sec:main}
\subsection{Problem setup}\label{sec:problem_setup}
We first specify our problem setting and introduce the corresponding quantities of interest.

\paragraph{Overparameterized linear model under a general prior $\pi$.} Consider a design matrix $X\in\R^{n\times d}$,  whose rows are vectors $x_1^\top, \cdots, x_n^\top \in \R^d$. The design matrix may be fixed or random, as we carry out all our calculations conditional on $X$. We assume that $d\geq n$ (the overparameterized regime), and that $X$ has full row rank (almost surely). 

Given a prior $\pi$ on $\R^d$ and a noise level $\sigma^2>0$, we assume the following model for the observed response vector $y\in\R^n$:
\[\textnormal{The Bayesian model $\model_X(\pi,\sigma^2)$: \ $y = X\theta + \sigma \tau$ where $(\theta,\tau)\sim \pi \times \normal(0, I_n)$.}\]
That is, this is a Gaussian linear model, with coefficients $\theta\in\R^d$ sampled from the prior $\pi$.

\paragraph{Training error and prediction error.}
We study estimating $\theta$ of the linear model given the design matrix $X$ and the observation $y$. Given an estimator $\what{\theta} = \what{\theta}(X,y)$, we define its expected training error as
\[\trerr(\what{\theta}) = \E \brk{\frac{1}{n} \norm{X \what{\theta} - y}^2 \mid X},\]
where implicitly, the expected value is computed with respect to the model $\model_X(\pi,\sigma^2)$. The training error will typically take values in the range $[0,\sigma^2]$. Indeed,
any estimator $\what{\theta}$ that interpolates the data will have $\trerr(\what{\theta})=0$---that is, perfect memorization. On the other hand, an estimator with zero overfitting will have $\trerr(\what{\theta})=\sigma^2$ (i.e., the true noise level).

We can also define its prediction error at a new test point $x'$ as $\E\brk{(x'{}^\top\what{\theta} - x'{}^\top \theta)^2\mid X}$. In particular, if $x'$ is random and mean-zero, and is independent from the training data $y$, we can equivalently define the prediction error as
\[\prerr_\Sigma(\what{\theta}) = \E\brk{\norm{\what{\theta} - \theta}_{\Sigma}^2\mid X},\]
where $\Sigma = \textnormal{Cov}(x')$ is the covariance of the random test point, and where $\|v\|_{\Sigma}:=\sqrt{v^\top \Sigma v}$. (If $\Sigma = I_d$, then this is simply the mean-squared error (MSE) of the estimator $\what{\theta}$.)
We will also write
\[\cost(\what{\theta}) = \prerr_\Sigma(\what{\theta}) - \prerr_\Sigma^*\textnormal{ \ where \ }\prerr_\Sigma^* = \inf_{\what{\theta}} \prerr_\Sigma(\what{\theta}),\]
i.e., this measures the excess prediction error of the estimator $\what{\theta}$.\footnote{Since $X$ is treated as fixed (i.e., we condition on $X$ throughout), we should interpret the definition of $\prerr_\Sigma^*$ as computing an infimum over all possible maps $y\mapsto \what{\theta}(X,y)$, i.e., functions $\R^n\to\R^d$. As a technical note, we will assume throughout that any estimator $\what{\theta}$ considered in this paper is square-integrable (with respect to the distribution of its input, the observed response $y$), so that all defined measures of training and prediction error are finite.}

The key question of this paper is the following: what is the relationship between the training error and the prediction error of an estimator---and, to ensure optimal predictive performance (i.e., minimal prediction error), what training error should we aim for? 
In particular, we are interested in identifying settings where these interesting phenomena may occur:
\begin{itemize}
    \item Settings where \textbf{memorization is necessary} for generalization: when is it true that any estimator $\what{\theta}$ that is (near) optimal for predictive error, must have training error close to zero, i.e.,
    \[\cost_\Sigma(\what{\theta})\approx  0 \ \Longrightarrow \ \trerr(\what{\theta}) \leq o(\sigma^2) \ ?\]
    \item Settings where \textbf{overfitting is harmful} for generalization: when is it true that any estimator $\what{\theta}$ that is (near) optimal for predictive error, must have training error close to $\sigma^2$, i.e.,
    \[\cost_\Sigma(\what{\theta}) \approx 0 \ \Longrightarrow \ \trerr(\what{\theta}) \geq \sigma^2 - o(\sigma^2) \ ?\]
\end{itemize}


\subsection{Comparing to the Bayes-optimal estimator}
Under our assumption of the Bayesian model $\model_X(\pi,\sigma^2)$, the Bayes estimator is given by the posterior mean,
\[\bysest(X,y) = \E [\theta \mid X, y].\]
The following proposition demonstrates that $\bysest$ plays a crucial role for the motivating questions of this paper.
\begin{proposition}\label{prop:compare_to_Bayes}
    Under the setting and notation above, for any positive definite $\Sigma\in\R^{d\times d}$, $\bysest$ achieves the optimal prediction error, i.e.,
\[\prerr_\Sigma(\bysest) = \inf_{\what{\theta}}\prerr_\Sigma(\what{\theta}) = \prerr_\Sigma^*.\]
Moreover, letting $\lambda_\Sigma = \frac{1}{n}\|X \Sigma^{-\half}\|^2$, for any estimator $\what{\theta}$, its excess prediction error satisfies
\[\cost_\Sigma(\what{\theta}) = \prerr_\Sigma(\what{\theta}) - \prerr_\Sigma^* = \E \brk{\norm{\what{\theta} - \bysest}_{\Sigma}^2\mid X} \geq \lambda_\Sigma^{-1}\prn{\sqrt{\trerr(\what{\theta})} - \sqrt{\trerr(\bysest)}}^2.\]
\end{proposition}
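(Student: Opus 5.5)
The plan is to prove the first two claims together with a standard orthogonality (Pythagorean) decomposition, and then obtain the lower bound by applying the triangle inequality in an $L^2$ space.

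\textbf{Step 1 (Pythagorean identity).} Conditional on $X$, $\bysest = \E[\theta\mid X,y]$. Fix any square-integrable estimator $\what{\theta}=\what{\theta}(X,y)$ and write $\what{\theta}-\theta = (\what{\theta}-\bysest) + (\bysest-\theta)$. Then
\[\E\brk{\norm{\what{\theta}-\theta}_\Sigma^2\mid X} = \E\brk{\norm{\what{\theta}-\bysest}_\Sigma^2\mid X} + \E\brk{\norm{\bysest-\theta}_\Sigma^2\mid X} + 2\,\E\brk{(\what{\theta}-\bysest)^\top\Sigma(\bysest-\theta)\mid X}.\]
The cross term vanishes: condition further on $y$, note that $\what{\theta}-\bysest$ is $(X,y)$-measurable, and use $\E[\bysest-\theta\mid X,y]=0$. (Integrability follows from Cauchy--Schwarz, since both $\what{\theta}$ and $\theta$, and hence $\bysest$, are square integrable.) This gives $\prerr_\Sigma(\what{\theta}) = \prerr_\Sigma(\bysest) + \E\brk{\norm{\what{\theta}-\bysest}_\Sigma^2\mid X}$. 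Since the last term is nonnegative, $\bysest$ attains the infimum, so $\prerr_\Sigma^*=\prerr_\Sigma(\bysest)$, and the displayed identity for $\cost_\Sigma$ follows directly.

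\textbf{Step 2 (lower bound via training error).} Work in the Hilbert space $L^2$ of random vectors in $\R^n$, conditional on $X$, with norm $\|Z\|_{L^2} = \sqrt{\E[\frac1n\norm{Z}^2\mid X]}$. In this notation $\sqrt{\trerr(\what{\theta})} = \|X\what{\theta}-y\|_{L^2}$ and $\sqrt{\trerr(\bysest)} = \|X\bysest-y\|_{L^2}$. The reverse triangle inequality gives
\[\Bigl|\sqrt{\trerr(\what{\theta})}-\sqrt{\trerr(\bysest)}\Bigr| \le \|X(\what{\theta}-\bysest)\|_{L^2}.\]
It remains to compare this with the $\Sigma$-norm. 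For any $v$,
\[\tfrac1n\norm{Xv}^2 = \tfrac1n\norm{X\Sigma^{-1/2}\Sigma^{1/2}v}^2 \le \tfrac1n\norm{X\Sigma^{-1/2}}^2\norm{v}_\Sigma^2 = \lambda_\Sigma\norm{v}_\Sigma^2.\]
Taking expectations, $\|X(\what{\theta}-\bysest)\|_{L^2}^2 \le \lambda_\Sigma \E\brk{\norm{\what{\theta}-\bysest}_\Sigma^2\mid X}$. Squaring the reverse-triangle bound and rearranging then yields the claim.

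There is no real obstacle here. The only points needing care are the measurability and integrability justification for the vanishing cross term in Step 1, and reading the operator norm correctly in the definition of $\lambda_\Sigma$. If $\trerr$ were defined without the square roots, a naive bound would fail, and it is the $L^2$ triangle inequality that makes the argument work.
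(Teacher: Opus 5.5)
Your proposal is correct and follows the paper's proof essentially step for step. Both use the orthogonality decomposition with the cross term killed by conditioning on $(X,y)$, then Minkowski's (reverse triangle) inequality in $L^2$, then the operator-norm comparison $\frac1n\|Xv\|^2\le\lambda_\Sigma\|v\|_\Sigma^2$. The paper writes the constant as $\|\Sigma^{-1/2}X^\top X\Sigma^{-1/2}/n\|$, which equals $\frac1n\|X\Sigma^{-1/2}\|^2$.
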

In other words, this standard result tells us that the Bayes estimator is optimal for prediction error, regardless of $\Sigma$. Moreover, the cost of choosing a different estimator (in terms of excess prediction error) can be lower bounded via the difference in training errors, $\trerr(\what{\theta})$ versus $\trerr(\bysest)$. In particular, any estimator $\what{\theta}$ that is close to optimal in terms of prediction error, must have a training error that is similar to that of the Bayes estimator:
\[\cost_\Sigma(\what{\theta}) \approx 0 \ \Longrightarrow \ \trerr(\what{\theta})\approx \trerr(\bysest).\]

\subsection{Training error of the Bayes estimator}
The result of Proposition~\ref{prop:compare_to_Bayes} suggests that, to characterize the regimes in which memorization is necessary or in which overfitting is harmful, we need to examine the training error $\trerr(\bysest)$ of the Bayes estimator. From this point on, we assume that $\pi$ has a positive and differentiable density, and we write $p$ as the density of $X\theta$ induced by $\theta\sim \pi$ (with $X$ treated as fixed). Recall the Fisher information (cf.~Eq.~\eqref{eq:def-Fisher-parameter}) and variance (cf.~Eq.~\eqref{eq:def-variance-parameter}) parameters $\jfunc_\pi$ and $\mathsf{V}_\pi$, which we assume are positive and finite in this section.
In particular, we note an important and well-known property:\footnote{Essentially this is the classical Cram\'{e}r-Rao bound. See this result in \cite[Eq.~(2.1)]{stam1959some}. Indeed, the upper bound by $\mathsf{V}_\pi$ can be sharpened into the entropy power. We refer the reader to Lemma~\ref{lem:EII} for details. }
\[\textnormal{For any $\pi$, it holds that $\mathsf{V}_\pi\geq \jfunc_\pi^{-1}$, with equality if and only if $X\theta\sim \normal(0,\nu^2 I_n)$ for some $\nu^2>0$.}\]

The following result characterizes the training error of $\bysest$ in terms of these two parameters.
\begin{theorem}\label{thm:Bayes_train_err}
Fix any $d\geq n\geq 1$. Let $\pi$ be any prior with positive and differentiable density, and let $X\in\R^{n\times d}$ have full row rank $n$. Then under the model $\model_X(\pi,\sigma^2)$, the training error of the Bayes estimator satisfies
    \begin{equation}\label{eqn:sandwich_ineq_thm:Bayes_train_err}\frac{\sigma^4}{\mathsf{V}_{\pi} + \sigma^2} \leq \trerr(\bysest) \leq  \frac{\sigma^4}{\jfunc_{\pi}^{-1} + \sigma^2}\textnormal{ \ for all $\sigma^2>0$}.\end{equation}
Moreover, the training error approaches the above upper bound for vanishing noise,
\begin{equation}\label{eqn:asymp_low_noise_thm:Bayes_train_err}\trerr(\bysest) = \frac{\sigma^4}{\jfunc_{\pi}^{-1} + \sigma^2} + o(\sigma^4) = \jfunc_{\pi}\sigma^4  + o(\sigma^4) \textnormal{ \ as \ }\sigma^2 \to 0,\end{equation}
and approaches the above lower bound for increasing noise,
\begin{equation}\label{eqn:asymp_high_noise_thm:Bayes_train_err}\trerr(\bysest) = \frac{\sigma^4}{\mathsf{V}_{\pi} + \sigma^2} + o(1) = \sigma^2 - \mathsf{V}_{\pi} + o(1) \textnormal{ \ as \ }\sigma^2 \to \infty.\end{equation}
\end{theorem}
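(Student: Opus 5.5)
The plan is to reduce everything to an $n$-dimensional Gaussian channel. Write $Z = X\theta \in \R^n$; its density is $p$, and $Y = y = Z + \sigma\tau$. Since $X\bysest = \E[X\theta\mid X,y] = \E[Z\mid Y]$, we have
\[\trerr(\bysest) = \tfrac{1}{n}\E\|Y - \E[Z\mid Y]\|^2 .\]
Let $p_Y = p * \phi_{\sigma}$ be the density of $Y$, which is smooth and positive. Tweedie's formula gives $\E[Z\mid Y] = Y + \sigma^2\nabla\log p_Y(Y)$. Hence
\[\trerr(\bysest) = \frac{\sigma^4}{n}\E\|\nabla \log p_Y(Y)\|^2 = \sigma^4\, \jfunc(Y)/n ,\]
where $\jfunc(Y)$ denotes the (trace) Fisher information of the law of $Y$, so that $\jfunc(Z)/n = \jfunc_\pi$. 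The sandwich \eqref{eqn:sandwich_ineq_thm:Bayes_train_err} is then a pair of classical information inequalities for $\jfunc(Y)/n$.

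\textbf{Upper bound.} I would invoke Stam's Fisher information inequality in its normalized form, $n/\jfunc(Z+W) \geq n/\jfunc(Z) + n/\jfunc(W)$ for independent $Z, W$. With $W = \sigma\tau$ we have $n/\jfunc(W) = \sigma^2$, so $\jfunc(Y)/n \le 1/(\jfunc_\pi^{-1}+\sigma^2)$. Multiplying by $\sigma^4$ gives the upper bound.

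\textbf{Lower bound.} This is Cram\'er--Rao via Cauchy--Schwarz. Integration by parts gives $\E\langle Y, \nabla\log p_Y(Y)\rangle = -n$, since $p_Y$ has Gaussian-type tails and $\E\|Y\|^2<\infty$. Therefore
\[n^2 \le \E\|Y\|^2\,\E\|\nabla \log p_Y(Y)\|^2 = n(\mathsf V_\pi+\sigma^2)\,\jfunc(Y),\]
i.e.\ $\jfunc(Y)/n \ge 1/(\mathsf V_\pi+\sigma^2)$.

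\textbf{Small noise, \eqref{eqn:asymp_low_noise_thm:Bayes_train_err}.} By the upper bound, it suffices to show $\liminf_{\sigma\to 0}\jfunc(Z+\sigma\tau) \ge \jfunc(Z)$. I would use lower semicontinuity of Fisher information under weak convergence, since $Z+\sigma\tau \Rightarrow Z$. A concrete route is the variational formula
\[\jfunc(\mu) = \sup_{\varphi\in C_c^\infty}\frac{(\E_\mu \operatorname{div}\varphi)^2}{\E_\mu\|\varphi\|^2},\]
which is a supremum of weakly continuous functionals and hence l.s.c. Combined with the upper bound, this gives $\jfunc(Y)/n \to \jfunc_\pi$. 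Consequently $\trerr(\bysest) = \jfunc_\pi\sigma^4 + o(\sigma^4)$, and the expression $\sigma^4/(\jfunc_\pi^{-1}+\sigma^2)$ agrees with this to $o(\sigma^4)$.

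\textbf{Large noise, \eqref{eqn:asymp_high_noise_thm:Bayes_train_err}.} Here the Fisher form is not convenient, so I would switch to the orthogonality decomposition. Let $\what Z = \E[Z\mid Y]$. Since $Y-\what Z$ is a function of $Y$ and $\what Z - Z$ is orthogonal to all such functions,
\[\sigma^2 = \tfrac1n\E\|Y-Z\|^2 = \trerr(\bysest) + \tfrac1n\E\|\what Z - Z\|^2 .\]
Moreover $\E\|\what Z - Z\|^2 = \E\|Z\|^2 - \E\|\what Z\|^2$. Hence
\[\trerr(\bysest) = \sigma^2 - \mathsf V_\pi + \tfrac1n\E\|\E[Z\mid Y]\|^2,\]
and it remains to show $\E\|\E[Z\mid Y]\|^2 \to 0$ as $\sigma\to\infty$. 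Note also that $\sigma^4/(\mathsf V_\pi+\sigma^2) = \sigma^2-\mathsf V_\pi + O(\sigma^{-2})$, so both forms in \eqref{eqn:asymp_high_noise_thm:Bayes_train_err} then follow.

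I expect this vanishing of $\E\|\E[Z\mid Y]\|^2$ to be the main obstacle. The plan is a truncation argument.
\begin{itemize}
\item Split $Z = Z\mathbf 1\{\|Z\|\le M\} + Z\mathbf 1\{\|Z\|>M\}$. By Jensen, the tail part contributes at most $\E\|Z\|^2\mathbf 1\{\|Z\|>M\}$, which is small for large $M$.
\item For the bounded part, write the posterior mean as a ratio of integrals against the weights $\exp(\langle Y,z\rangle/\sigma^2 - \|z\|^2/2\sigma^2)$. For $\|z\|\le M$ and typical $Y$ (of size $O(\sigma)$), these weights tend to $1$ uniformly as $\sigma\to\infty$. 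So $\E[Z\mathbf 1\{\|Z\|\le M\}\mid Y] \to \E[Z\mathbf 1\{\|Z\|\le M\}]$ in $L^2$, by dominated convergence since it is bounded by $M$.
\item The limit $\E[Z\mathbf 1\{\|Z\|\le M\}]$ has norm at most $\E\|Z\|\mathbf 1\{\|Z\|>M\}$, because $\E Z = 0$, and this is also small.
\end{itemize}
Letting $\sigma\to\infty$ and then $M\to\infty$ finishes the proof.
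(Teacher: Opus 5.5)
Your proposal is correct and follows the paper's proof essentially step for step. The shared steps are:
\begin{itemize}
\item the Tweedie/Fisher-information representation $\trerr(\bysest)=\sigma^4\jfunc(Y)/n$;
\item Stam's inequality for the upper bound;
\item Cauchy--Schwarz with integration by parts for the lower bound;
\item lower semicontinuity of Fisher information via a variational formula for $\sigma^2\to 0$ (your ratio form is the paper's $\sup_b\{-2\E[\nabla\cdot b]-\E\|b\|^2\}$ after optimizing over rescalings of $b$);
\item the orthogonality identity $\trerr(\bysest)=\sigma^2-\mathsf{V}_\pi+\frac{1}{n}\E\|X\bysest\|^2$ for $\sigma^2\to\infty$.
\end{itemize}
The only real difference is that you prove $\E\|\E[Z\mid Y]\|^2\to 0$ by truncation. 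This supplies the uniform integrability that the paper's single dominated-convergence step leaves implicit, and it needs only $\E\|X\theta\|^2<\infty$ rather than $\E\|\theta\|^2<\infty$. When you write it out, note that the normalizing integral ranges over all $z$, not just $\|z\|\le M$. So handle it by dominated convergence, using the bound $w_\sigma(Y,z)\le e^{\|Y\|^2/(2\sigma^2)}$ along $Y/\sigma\to\tau$, rather than by uniformity on the ball.
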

\noindent The asymptotic statements~\eqref{eqn:asymp_low_noise_thm:Bayes_train_err} and~\eqref{eqn:asymp_high_noise_thm:Bayes_train_err} can be interpreted as follows. With vanishing noise $\sigma^2\to 0$, the Bayes estimator exhibits memorization (since its training error is $\trerr(\bysest) = O(\sigma^4) = o(\sigma^2)$. In contrast, when $\sigma^2\to\infty$, the Bayes estimator instead shows a limited amount of overfitting, since $\sigma^2 - \trerr(\bysest) = \mathsf{V}_\pi + o(1) = o(\sigma^2)$.\footnote{We emphasize that in the asymptotic claims~\eqref{eqn:asymp_low_noise_thm:Bayes_train_err} and~\eqref{eqn:asymp_high_noise_thm:Bayes_train_err}, the statements hold for fixed $d,n$, fixed design matrix $X$, and a fixed choice of the prior $\pi$; the $o(\cdot)$ terms reveal the asymptotic dependence on the noise level $\sigma^2$ only.}

\subsection{The tradeoff between training error and prediction error}\label{sec:main_results_tradeoff}
With the results above in place, we are now ready to examine the tradeoff between training error and prediction error, for any estimator $\what{\theta}$.
\begin{theorem}\label{thm:main}
    Fix any $d\geq n\geq 1$. Let $\pi$ be any prior with positive and differentiable density, and let $X\in\R^{n\times d}$ have full row rank $n$. Fix any positive definite $\Sigma\in\R^{d\times d}$. Then under the model $\model_X(\pi,\sigma^2)$, for any $\sigma^2>0$ and any estimator $\what{\theta}$, 
    \begin{equation}\label{eqn:trerr_too_high__thm:main}\textnormal{If $\trerr(\what{\theta})\geq \frac{\sigma^4}{\jfunc_\pi^{-1}+\sigma^2}$ \ then \ }\cost_\Sigma(\what{\theta})\geq \lambda_\Sigma^{-1}\prn{\sqrt{\trerr(\what{\theta})}-\sqrt{\frac{\sigma^4}{\jfunc_\pi^{-1}+\sigma^2}}}^2,\end{equation}
    and
    \begin{equation}\label{eqn:trerr_too_low__thm:main}\textnormal{If $\trerr(\what{\theta})\leq \frac{\sigma^4}{\mathsf{V}_\pi+\sigma^2}$ then }\cost_\Sigma(\what{\theta})\geq \lambda_\Sigma^{-1}\prn{\sqrt{\frac{\sigma^4}{\mathsf{V}_\pi+\sigma^2}} - \sqrt{\trerr(\what{\theta})}}^2.\end{equation}
\end{theorem}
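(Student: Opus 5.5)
This follows by combining Proposition~\ref{prop:compare_to_Bayes} with the sandwich inequality~\eqref{eqn:sandwich_ineq_thm:Bayes_train_err} of Theorem~\ref{thm:Bayes_train_err}; no new probabilistic input is needed. Write $T = \trerr(\what{\theta})$, $T_{\mathsf B} = \trerr(\bysest)$, $U = \frac{\sigma^4}{\jfunc_\pi^{-1}+\sigma^2}$ and $L = \frac{\sigma^4}{\mathsf{V}_\pi+\sigma^2}$. Theorem~\ref{thm:Bayes_train_err} gives $L\le T_{\mathsf B}\le U$. Proposition~\ref{prop:compare_to_Bayes} gives, for any estimator,
\[\cost_\Sigma(\what{\theta}) \geq \lambda_\Sigma^{-1}\prn{\sqrt{T}-\sqrt{T_{\mathsf B}}}^2.\]

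For~\eqref{eqn:trerr_too_high__thm:main}, assume $T\ge U$. Since $T_{\mathsf B}\le U\le T$ and the square root is monotone, we get $\sqrt{T}-\sqrt{T_{\mathsf B}}\ge \sqrt{T}-\sqrt{U}\ge 0$. Squaring preserves this order between nonnegative quantities, so
\[\prn{\sqrt{T}-\sqrt{T_{\mathsf B}}}^2\ge \prn{\sqrt{T}-\sqrt{U}}^2,\]
and the claimed bound follows.

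For~\eqref{eqn:trerr_too_low__thm:main}, assume $T\le L$. Then $T\le L\le T_{\mathsf B}$, so $\sqrt{T_{\mathsf B}}-\sqrt{T}\ge \sqrt{L}-\sqrt{T}\ge 0$. Squaring again and inserting into the Proposition's lower bound gives the claim.

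There is no real obstacle here, since all the substantive work sits in Theorem~\ref{thm:Bayes_train_err} and Proposition~\ref{prop:compare_to_Bayes}. The only point needing care is the sign: the hypotheses $T\ge U$ (respectively $T\le L$) are exactly what put $\sqrt{T_{\mathsf B}}$ on the correct side of $\sqrt{T}$. Without them, replacing $T_{\mathsf B}$ by $U$ or $L$ inside the square would not be monotone.

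Since the argument rests on the Proposition, it helps to recall why it holds. The cost identity $\cost_\Sigma(\what{\theta})=\E\brk{\norm{\what{\theta}-\bysest}_\Sigma^2\mid X}$ is the usual orthogonality of the posterior mean. The training-error comparison then comes from the triangle inequality in $L^2$,
\[\sqrt{T}\le \sqrt{T_{\mathsf B}}+\sqrt{\E\tfrac1n\norm{X(\what{\theta}-\bysest)}^2},\]
together with $\tfrac1n\norm{Xv}^2\le \lambda_\Sigma\norm{v}_\Sigma^2$. The same triangle inequality also gives the symmetric bound with the roles of $T$ and $T_{\mathsf B}$ exchanged.
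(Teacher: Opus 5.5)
Your proposal is correct and follows the paper's own proof: combine the lower bound $\cost_\Sigma(\what{\theta})\geq\lambda_\Sigma^{-1}\bigl(\sqrt{\trerr(\what{\theta})}-\sqrt{\trerr(\bysest)}\bigr)^2$ from Proposition~\ref{prop:compare_to_Bayes} with the sandwich bounds~\eqref{eqn:sandwich_ineq_thm:Bayes_train_err}. Your explicit check that the hypotheses put $\sqrt{\trerr(\bysest)}$ on the correct side of $\sqrt{\trerr(\what{\theta})}$, so that the replacement inside the square is monotone, is the detail the paper leaves implicit.
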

\begin{proof}
    These results follow immediately from combining the bounds~\eqref{eqn:sandwich_ineq_thm:Bayes_train_err} in Theorem~\ref{thm:Bayes_train_err} (which calculates the training error of $\bysest$), with the result of Proposition~\ref{prop:compare_to_Bayes} (which tells us that optimal prediction error can only be achieved by an estimator whose training error matches that of $\bysest$).
\end{proof}

The two parts of this theorem immediately yield answers to our two key questions: under what regimes does it hold that \textbf{memorization is necessary}, or, that \textbf{overfitting is harmful}? The following two corollaries answer these questions.

\begin{figure}
    \centering
    \begin{tikzpicture}

\tikzmath{\ytick=0.15;}
\tikzmath{\xstart0 = 0; \xstart1 = 8; \xend0 = 5; \xend1 = 13; \xmid0 = 3.05; \xmid1 = 11.05; \xpoint0 = 1; \xpointnext0 = 2; \xpoint1 = 11.9; \xpointnext1 = 10.8;}
    \tikzmath{\xbrace = 0.1; \ybrace=0.1;}
\tikzmath{\xtexthi0 = 4.125; \xtexthinext0 = 4; \xtextlo1 = 9.95; \xtextlonext1 = 10.4;}

\draw[thick,->] (\xstart0,0) -- (\xend0+0.5,0);
\draw[thick,->] (\xstart1,0) -- (\xend1+0.5,0);
\draw[thick] (\xstart0,-\ytick) -- (\xstart0,-\ytick) node[anchor=north]{\small $0$};
\draw[thick,->] (\xstart0,0) -- (\xstart0,3);
\draw[thick,->] (\xstart1,0) -- (\xstart1,3);
\draw[thick] (\xend0,\ytick) -- (\xend0,-\ytick) node[anchor=north]{\small $\sigma^2$};
\draw[thick] (\xpoint0,\ytick) -- (\xpoint0,-\ytick) node[anchor=north]{\small $\jfunc_\pi \sigma^4$};
\draw[thick] (\xpointnext0,\ytick) -- (\xpointnext0,-\ytick) node[anchor=north]{\small $C \jfunc_\pi \sigma^4$};
\draw[thick] (\xstart1,-\ytick) -- (\xstart1,-\ytick) node[anchor=north]{\small $0$};
\draw[thick] (\xend1,\ytick) -- (\xend1,-\ytick) node[anchor=north] {\small $\sigma^2$};
\draw[thick] (\xpoint1,\ytick) -- (\xpoint1,-\ytick) node[anchor=north] {\small \quad $\sigma^2-\mathsf{V}_\pi$};
\draw[thick] (\xpointnext1,\ytick) -- (\xpointnext1,-\ytick) node[anchor=north] {\small $\sigma^2-C\mathsf{V}_\pi$\quad\quad};
\node[align=center] at (\xend0+1,-0.25) {\small $\trerr(\what{\theta})$};
\node[align=center] at (\xend1+1,-0.25) {\small $\trerr(\what{\theta})$};
\node[align=center] at (\xmid0,4) {Memorization is necessary when $\sigma^2 \leq \jfunc_\pi^{-1}$};
\node[align=center] at (\xmid1,4) {Overfitting is harmful when $\sigma^2 \geq \mathsf{V}_\pi$};

\draw [decorate,decoration={brace,amplitude=10pt}, thick](\xend0-\xbrace,-\ybrace-0.7) -- (\xpointnext0+\xbrace,-\ybrace-0.7);
\draw [decorate,decoration={brace,amplitude=10pt}, thick](\xpoint0+\xbrace,1.8+\ybrace) -- (\xend0-\xbrace,1.8+\ybrace);
\draw [decorate,decoration={brace,amplitude=10pt}, thick](\xstart1+\xbrace,1.8+\ybrace) -- (\xpoint1-\xbrace,1.8+\ybrace);
\draw [decorate,decoration={brace,amplitude=10pt}, thick](\xpointnext1-\xbrace,-\ybrace-0.7) -- (\xstart1+\xbrace,-\ybrace-0.7);

\node[align=center] at (\xtexthi0,2.7) {\small \begin{tabular}{c}any $\what{\theta}$ in this range\\ has $\cost_\Sigma(\what{\theta})>0$\end{tabular}};

\node[align=center] at (\xtextlo1,2.7) {\small \begin{tabular}{c}any $\what{\theta}$ in this range\\ has $\cost_\Sigma(\what{\theta})>0$\end{tabular}};

\node[align=center] at (\xtexthinext0,-1.6) {\small \begin{tabular}{c}The lower bound on $\cost_\Sigma(\what{\theta})$\\ grows linearly in this range\end{tabular}};

\node[align=center] at (\xtextlonext1,-1.6) {\small \begin{tabular}{c}The lower bound on $\cost_\Sigma(\what{\theta})$\\ grows quadratically in this range\end{tabular}};

\draw[draw=black] (\xstart0-0.7,-2.2) rectangle (\xend0+1.8,3.5);
\draw[draw=black] (\xstart1-0.7,-2.2) rectangle (\xend1+1.8,3.5);

\draw [red] plot [smooth] coordinates {(0,0.1) (0.4,0.025) (0.8,0.01) (1.2,0.025) (1.6,0.1) (2.4,0.4) (3.2,0.8) (4,1.2) (4.8,1.6) (5.2,1.8)};
\draw [red] plot [smooth] coordinates {(13.2,0.04) (12.8,0.02) (12.4,0.01) (12,0.02) (11.6,0.04) (10.8,0.1) (10,0.3) (9.2,0.72) (8.4,1.4) (8,1.9)};

\node[align=center,rotate=90,anchor=south] at (\xstart0,2.5) {\small $\cost_\Sigma(\what{\theta})$};
\node[align=center,rotate=90,anchor=south] at (\xstart1,2.5) {\small $\cost_\Sigma(\what{\theta})$};

\end{tikzpicture}
    \caption{An illustration of the results of Corollary~\ref{cor:trerr_too_high__thm:main} (describing the regime where memorization is necessary), and Corollary~\ref{cor:trerr_too_low__thm:main} (describing the regime where overfitting is harmful). Here $C>1$ is any constant.}
    \label{fig:extremes_results}
\end{figure}

First, we consider the question of when memorization is necessary: what is the cost of choosing an estimator whose training error is too high?
\begin{corollary}\label{cor:trerr_too_high__thm:main}
    In the setting of Theorem~\ref{thm:main}, suppose also that $\sigma^2 \leq \jfunc_\pi^{-1}$. Then
    \[\textnormal{If $\trerr(\what{\theta}) > \jfunc_\pi\sigma^4$ \ then \ }\cost_\Sigma(\what{\theta})>0.\]
    Moreover, any estimator that fails to memorize sufficiently will incur excess prediction error that is linear in $\trerr(\what{\theta})$: for any $C>1$,
    \[\textnormal{If $\trerr(\what{\theta}) \geq C\cdot \jfunc_\pi\sigma^4$ \ then \ }\cost_\Sigma(\what{\theta})\geq C' \cdot  \trerr(\what{\theta}),\]
    where $C'=\lambda_\Sigma^{-1}(1-C^{-1/2})^2$.
\end{corollary}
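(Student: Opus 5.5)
The plan is to deduce both claims from the bound~\eqref{eqn:trerr_too_high__thm:main} of Theorem~\ref{thm:main}, after comparing the threshold $\frac{\sigma^4}{\jfunc_\pi^{-1}+\sigma^2}$ with $\jfunc_\pi\sigma^4$.

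\textbf{Step 1: compare the two thresholds.} Write $T := \frac{\sigma^4}{\jfunc_\pi^{-1}+\sigma^2} = \frac{\jfunc_\pi\sigma^4}{1+\jfunc_\pi\sigma^2}$. Because $\jfunc_\pi>0$ and $\sigma^2>0$, the denominator exceeds one, so $T<\jfunc_\pi\sigma^4$ always holds. The hypothesis $\sigma^2\le\jfunc_\pi^{-1}$ is not needed for this step. It only makes the statement meaningful, since it ensures $\jfunc_\pi\sigma^4\le\sigma^2$, so the range $\trerr(\what{\theta})>\jfunc_\pi\sigma^4$ is the regime of insufficient memorization.

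\textbf{Step 2: first claim.} Suppose $\trerr(\what{\theta})>\jfunc_\pi\sigma^4$. Then $\trerr(\what{\theta})>T$, so the hypothesis of~\eqref{eqn:trerr_too_high__thm:main} holds. That bound gives
\[
\cost_\Sigma(\what{\theta})\ge\lambda_\Sigma^{-1}\bigl(\sqrt{\trerr(\what{\theta})}-\sqrt{T}\bigr)^2 .
\]
The difference inside the square is strictly positive. Also, $\lambda_\Sigma^{-1}>0$ is finite, because $X$ has full row rank and $\Sigma$ is positive definite, so $\|X\Sigma^{-1/2}\|>0$. Hence $\cost_\Sigma(\what{\theta})>0$.

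\textbf{Step 3: second claim.} Suppose $\trerr(\what{\theta})\ge C\jfunc_\pi\sigma^4$ with $C>1$. Step 1 gives $T<\jfunc_\pi\sigma^4\le C^{-1}\trerr(\what{\theta})$, so $\sqrt{T}\le C^{-1/2}\sqrt{\trerr(\what{\theta})}$. In particular $\trerr(\what{\theta})>T$, so~\eqref{eqn:trerr_too_high__thm:main} applies again, and
\[
\cost_\Sigma(\what{\theta})\ge\lambda_\Sigma^{-1}\bigl(1-C^{-1/2}\bigr)^2\,\trerr(\what{\theta}) = C'\cdot\trerr(\what{\theta}).
\]

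There is no real obstacle. The only points needing care are the strict inequality in Step 1, which relies on $\jfunc_\pi>0$, and the observation that $\lambda_\Sigma^{-1}$ is positive and finite.
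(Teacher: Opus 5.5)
Your proposal is correct and matches the paper's proof: it uses $\frac{\sigma^4}{\jfunc_\pi^{-1}+\sigma^2}\le\jfunc_\pi\sigma^4$ to bring both claims under the hypothesis of \eqref{eqn:trerr_too_high__thm:main}, then bounds $\sqrt{T}\le C^{-1/2}\sqrt{\trerr(\what{\theta})}$ for the linear-cost claim. You are also right that the assumption $\sigma^2\le\jfunc_\pi^{-1}$ plays no role in the argument; the paper's proof does not use it either.
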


Next, we turn to the question of when overfitting is harmful: what is the cost of choosing an estimator whose training error is too low?
\begin{corollary}\label{cor:trerr_too_low__thm:main}
    In the setting of Theorem~\ref{thm:main}, suppose also that $\sigma^2 \geq \mathsf{V}_\pi$. Then
    \[\textnormal{If $\trerr(\what{\theta}) < \sigma^2 - \mathsf{V}_\pi$ \ then \ }\cost_\Sigma(\what{\theta})>0.\]
    Moreover, any estimator that exhibits too much overfitting will incur excess prediction error that is quadratic in $\sigma^2 - \trerr(\what{\theta})$, i.e., in the amount of overfitting: for any $C>1$,
    \[\textnormal{If $\trerr(\what{\theta}) \leq  \sigma^2 - C\cdot \mathsf{V}_\pi$ \ then \ }\cost_\Sigma(\what{\theta})\geq \frac{C'}{\sigma^2}\cdot \prn{\sigma^2 - \trerr(\what{\theta})}^2,\]
    where $C'=(4\lambda_\Sigma)^{-1}(1-C^{-1})^2$.
\end{corollary}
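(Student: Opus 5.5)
Corollary~\ref{cor:trerr_too_low__thm:main} should follow directly from the second part~\eqref{eqn:trerr_too_low__thm:main} of Theorem~\ref{thm:main}, plus an elementary algebraic lower bound on the gap between square roots. Write $T=\trerr(\what{\theta})$ and $B=\frac{\sigma^4}{\mathsf{V}_\pi+\sigma^2}$. First, I will check that the hypothesis $T<\sigma^2-\mathsf{V}_\pi$ puts us in the regime of~\eqref{eqn:trerr_too_low__thm:main} with strict inequality. Indeed,
\[B-(\sigma^2-\mathsf{V}_\pi)=\frac{\sigma^4-(\sigma^4-\mathsf{V}_\pi^2)}{\mathsf{V}_\pi+\sigma^2}=\frac{\mathsf{V}_\pi^2}{\mathsf{V}_\pi+\sigma^2}\geq 0,\]
so $B\geq \sigma^2-\mathsf{V}_\pi>T$. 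Theorem~\ref{thm:main} then gives $\cost_\Sigma(\what{\theta})\geq\lambda_\Sigma^{-1}(\sqrt{B}-\sqrt{T})^2>0$, which is the first claim. The assumption $\sigma^2\geq\mathsf{V}_\pi$ only ensures that the threshold $\sigma^2-\mathsf{V}_\pi$ is nonnegative, so the statement is not vacuous.

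For the quantitative claim, suppose $T\leq\sigma^2-C\mathsf{V}_\pi$ with $C>1$. Then $T<\sigma^2-\mathsf{V}_\pi\leq B$, so the previous step applies. I will use the identity
\[\sqrt{B}-\sqrt{T}=\frac{B-T}{\sqrt{B}+\sqrt{T}}\]
and bound the denominator by $2\sigma$, since both $B\leq\sigma^2$ and $T\leq\sigma^2$. This gives
\[\cost_\Sigma(\what{\theta})\geq\frac{(B-T)^2}{4\lambda_\Sigma\sigma^2}.\]

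It then remains to show $B-T\geq(1-C^{-1})(\sigma^2-T)$. Using $B\geq\sigma^2-\mathsf{V}_\pi$, we have $B-T\geq(\sigma^2-T)-\mathsf{V}_\pi$. The hypothesis rearranges to $\mathsf{V}_\pi\leq(\sigma^2-T)/C$, so $B-T\geq(1-C^{-1})(\sigma^2-T)\geq 0$. Squaring and substituting yields
\[\cost_\Sigma(\what{\theta})\geq\frac{(1-C^{-1})^2}{4\lambda_\Sigma\sigma^2}(\sigma^2-T)^2,\]
which is the bound with $C'=(4\lambda_\Sigma)^{-1}(1-C^{-1})^2$.

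No step is a real obstacle. The one place needing care is the denominator bound $\sqrt{B}+\sqrt{T}\leq 2\sigma$, which relies on $T\leq\sigma^2$; this holds in the case considered because $T\leq\sigma^2-C\mathsf{V}_\pi$ and $\mathsf{V}_\pi>0$.
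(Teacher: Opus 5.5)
Your proof is correct and is essentially the paper's argument: both use $\frac{\sigma^4}{\mathsf{V}_\pi+\sigma^2}\geq\sigma^2-\mathsf{V}_\pi$ to enter the regime of~\eqref{eqn:trerr_too_low__thm:main}, and then lower bound a difference of square roots by (difference of radicands)$/(2\sigma)$. The only difference is the order of steps. The paper first shrinks $\sqrt{B}$ to $\sqrt{\sigma^2-\mathsf{V}_\pi}$ and then to $\sqrt{\sigma^2-C^{-1}(\sigma^2-T)}$, and only then applies $\sqrt{\sigma^2-a}-\sqrt{\sigma^2-b}\geq\frac{b-a}{2\sigma}$; you instead rationalize $\sqrt{B}-\sqrt{T}$ directly using $\sqrt{B}+\sqrt{T}\leq 2\sigma$ and bound $B-T$ afterwards.
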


These results are illustrated in Figure~\ref{fig:extremes_results}.

\begin{remark}
    We point out that our characterizations are the cleanest at the extremes, when $\sigma^2 \geq \mathsf{V}_\pi$ or $\sigma^2 \leq \mathsf{J}_\pi^{-1}$. As we will show in Fig.~\ref{fig:monotonic}, under the moderate noise level $\mathsf{J}_\pi^{-1} \leq \sigma^2 \leq \mathsf{V}_\pi$, the behavior of the optimal training error relative to the noise size $\trerr(\what{\theta})/\sigma^2$ can oscillate and be non-monotonic. It remains an interesting open question to investigate for a similar clean characterization for the necessity or harmfulness of overfitting in this regime.
\end{remark}

\subsubsection{The role of the prior: connections to effective dimension}\label{sec:discuss_effective_dim}
In the results above, we have seen that if $\sigma^2$ is close to zero then memorization appears necessary, while if $\sigma^2$ is large then overfitting is harmful. How does this relate to the connection between overfitting and model complexity? In particular, we might expect the following:
\begin{itemize}
    \item If $\theta$ is assumed to lie in a low-complexity model class (e.g., under sparsity), then any accurate estimator $\what{\theta}$ should have minimal overfitting.
    \item On the other hand, if we are in a truly high-dimensional setting (with no assumptions such as sparsity that would induce low-dimensional structure), then we may prefer to overfit, i.e., ``benign overfitting''.
\end{itemize}
In the Bayesian setting considered here, our results can be related to these principles by considering the parameters $\mathsf{V}_\pi$ and $\jfunc_\pi^{-1}$. Essentially, these parameters capture a notion of \emph{effective dimension}.

For intuition, suppose $n=d$ and $X$ is the identity.
First, suppose our prior $\pi$ is quite flat and uninformative. In this case, we would expect $\mathsf{V}_\pi$ and $\jfunc_\pi^{-1}$ to both be $O(1)$ (for instance, if $\pi=\normal(0,I_d)$ then $\mathsf{V}_\pi=\jfunc_\pi^{-1}=1$). In particular, if the noise level $\sigma^2$ is $o(1)$ then memorization becomes necessary: Corollary~\ref{cor:trerr_too_high__thm:main} tells us that cost is high for any estimator with training error $\gtrsim\sigma^4$.

At the other extreme, we can consider a setting where $\pi$ encodes some knowledge of low-dimensional structure in $\theta$---for instance, $\pi$ may be concentrated near some low-dimensional subspace or manifold in $\R^d$. In such a setting, we would expect that overfitting would be harmful, since an optimal estimator should lie near this low-dimensional region and therefore would not have the capacity to overfit. For this type of prior $\pi$, $\jfunc_\pi^{-1}$ would typically be quite low, and the regime in Corollary~\ref{cor:trerr_too_high__thm:main} (i.e., the regime in which our results show that memorization is necessary) becomes negligible. However, there is a subtlety here: if $\sigma^2$ is extremely close to zero (i.e., $\sigma^2\leq \jfunc_\pi^{-1} = o(1)$), then we again find that memorization is necessary. This is an artifact of our Bayesian framework: even if $\pi$ is strongly concentrated around some low-dimensional region within $\R^d$, since we require $\pi$ to have a positive and differetiable density, if the noise level is extremely low then $\pi$ is still effectively constant within any sufficiently small neighborhood. Thus if $\sigma^2$ is extremely small, $\pi$ again acts (locally) as an uninformative prior (see Figure~\ref{fig:illustrate_prior_zoom_in} for an illustration). On the other hand, if we choose $\pi$ to be a distribution that places all its mass on a low-dimensional support (and therefore, $\pi$ cannot have a density), this phenomenon would not occur---but this is beyond the scope of our framework.

In the next section, we explore these ideas through concrete examples to demonstrate the role of effective dimension in this problem.

\begin{figure}[t]
    \centering
    \includegraphics[width=0.55\textwidth]{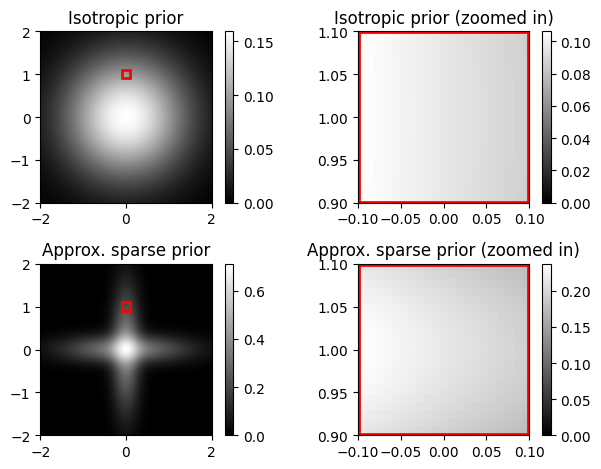}
    \caption{An illustration of the phenomenon discussed in Section~\ref{sec:discuss_effective_dim}. In the top row, we plot an isotropic prior, $\pi = \normal(0,I_2)$, while the bottom row shows a prior that encourages approximate $1$-sparsity, $\pi = 0.5 \normal(0,e_1e_1^\top + \eta e_2e_2^\top) + 0.5 \normal(0,\eta e_1 e_1^\top +  e_2 e_2^\top)$, for $\eta = 0.05$. However, when we zoom in to  the neighborhood of a single point $(0,0.5)$, the two priors are both essentially constant.}
    \label{fig:illustrate_prior_zoom_in}
\end{figure}

\section{Examples with differing levels of complexity} \label{sec:applications}

In this section, we apply our results to investigate concrete setups of $\model_X(\pi, \sigma^2)$ under different priors for the key determining parameters $ \mathsf{V}_{\pi}, \jfunc_{\pi}, \lambda_{\Sigma}$ in Theorem~\ref{thm:main}.

Throughout this section, we will work in a setting where the training data points $x_i$ and the test data point $x'$ all assumed to have mean zero and covariance $\Sigma = I_d$.
To characterize high dimensional behaviors by exact limits, we leverage tools in high dimensional random matrix theory under proportional asymptotics~\citep{bai2010spectral}, working under the following distributional assumptions on the training data features $x_1, \cdots, x_n \in \R^d$ and overparameterized proportional asymptotics. 
\begin{assumption}[Overparameterized proportional asymptotics] \label{assmp:proportional-asymptotics} For a given $\gamma \in (1, +\infty)$, consider a deterministic sequence of positive integers $d(n)$ indexed by $n$, such that $d/n \to \gamma$. For an infinite array of i.i.d.\ random variables $\{X_{ij}\}_{i,j=1}^\infty$ such that $\E[X_{ij}]=0$, $\var(X_{ij})=1$ with bounded fourth-moment $\E[X_{ij}^4] < +\infty$, define $X(n)=(X_{ij})_{i \in [n], j \in [d]} \in \R^{n \times d}$. We consider the sequence of linear models $\model_X(\pi, \sigma^2)=\model_{X(n)}(\pi(n), \sigma^2)$ indexed by $n$. 
\end{assumption}
\subsection{Isotropic Gaussian prior} \label{sec:example_1}
We start with the simplest prior of the isotropic Gaussian $\pi = \normal(0, I_d/d)$. The following proposition shows the implications of our general results for this isotropic setting.\footnote{This is the same setup considered in~\cite{cheng2022memorize}, except that the authors originally examine $X(n)$ whose rows have a general covariance structure $\Sigma(n)$ with bounded condition numbers and converging empirical spectrum distributions (and measure test error against the same covariance $\Sigma(n)$); however, the qualitative conclusions remain the same by taking $\Sigma(n) = I_d$. See Appendix~\ref{proof:applications-isotropic} for extensions of our results to the case of a general $\Sigma(n)$.}

\begin{proposition} \label{prop:applications-isotropic} Let Assumption~\ref{assmp:proportional-asymptotics} hold. For $\sigma^2 > 0$ and $\Sigma = I_d$, we have almost surely
\begin{align*}
    &\mathsf{V}_{\pi} \to 1, \qquad \jfunc_{\pi} \to \frac{\gamma}{\gamma - 1}, \qquad \lambda_{\Sigma} \to (1 + \sqrt{\gamma})^2,
\end{align*}
\end{proposition}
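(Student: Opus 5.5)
With $\pi=\normal(0,I_d/d)$ and $X$ fixed, the pushforward is Gaussian: $X\theta\sim\normal(0,K)$ with $K=XX^\top/d$. All three quantities therefore reduce to spectral functionals of the Wishart-type matrix $XX^\top$, and I would compute each limit from standard random matrix results under Assumption~\ref{assmp:proportional-asymptotics}. Full row rank holds almost surely for all large $n$, since the smallest eigenvalue of $XX^\top/d$ converges to $(1-\gamma^{-1/2})^2>0$ by Bai--Yin. This makes $K$ invertible, so $p$ is a positive, differentiable Gaussian density and $\jfunc_\pi$ is well defined.

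\emph{Variance parameter.} We have $\mathsf{V}_\pi=\frac1n\mathrm{tr}(K)=\frac{1}{nd}\sum_{i,j}X_{ij}^2$. This is an average of $nd$ i.i.d.\ unit-mean variables. Under the finite fourth moment, the strong law along the triangular array gives $\mathsf{V}_\pi\to1$ almost surely. A Borel--Cantelli argument works, since the variance of the average is $O(1/(nd))=O(n^{-2})$, which is summable.

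\emph{Fisher information.} For the Gaussian density, $\nabla\log p(z)=-K^{-1}z$. Hence $\jfunc_\pi=\frac1n\E\|K^{-1}X\theta\|^2=\frac1n\mathrm{tr}(K^{-1})=\frac{d}{n}\cdot\frac1n\mathrm{tr}\big((XX^\top)^{-1}\big)\cdot n$. Equivalently, $\jfunc_\pi=\frac{d}{n}\int s^{-1}\,\de\mu_n(s)$, where $\mu_n$ is the empirical spectral distribution of $XX^\top/d$. By Marchenko--Pastur, $\mu_n$ converges almost surely to the MP law with ratio $1/\gamma$, supported on $[(1-\gamma^{-1/2})^2,(1+\gamma^{-1/2})^2]$. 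Its mean of $s^{-1}$ equals $\frac{1}{1-\gamma^{-1}}$, using the standard identity $\int s^{-1}\de\mathrm{MP}_c=1/(1-c)$ for $c<1$. This gives $\jfunc_\pi\to\gamma\cdot\frac{1}{1-\gamma^{-1}}$. I need to check the normalisation carefully: $\frac1n\mathrm{tr}(K^{-1})=\frac1n\sum_i 1/s_i=\int s^{-1}\de\mu_n\to\frac{\gamma}{\gamma-1}$. So $\jfunc_\pi\to\gamma/(\gamma-1)$ directly, with no extra factor of $d/n$; my earlier expression double-counted that factor.

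\emph{Operator-norm parameter.} With $\Sigma=I_d$, $\lambda_\Sigma=\frac1n\|X\|^2=\frac{d}{n}\,s_{\max}(XX^\top/d)$. The Bai--Yin theorem, which requires only the finite fourth moment, gives $s_{\max}\to(1+\gamma^{-1/2})^2$ almost surely. Hence $\lambda_\Sigma\to\gamma(1+\gamma^{-1/2})^2=(1+\sqrt\gamma)^2$.

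\emph{Main obstacle.} The delicate step is the Fisher information limit. Weak convergence of $\mu_n$ does not by itself give convergence of $\int s^{-1}\de\mu_n$, because $s^{-1}$ is unbounded near zero. I would handle this with the Bai--Yin lower edge: almost surely, eventually $s_{\min}\ge\frac12(1-\gamma^{-1/2})^2$. On that event the integrand is bounded and continuous on the relevant support, so weak convergence suffices. The remaining work is routine bookkeeping, together with the Marchenko--Pastur inverse moment computed via the Stieltjes transform at zero.
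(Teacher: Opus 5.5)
Your proposal is correct and follows essentially the same route as the paper: reduce to $X\theta\sim\normal(0,XX^\top/d)$, get $\mathsf{V}_\pi\to 1$ from the law of large numbers, get $\lambda_\Sigma=\frac{d}{n}s_{\max}\to(1+\sqrt{\gamma})^2$ from Bai--Yin, and get $\jfunc_\pi=\frac1n\Tr\bigl((XX^\top/d)^{-1}\bigr)=\int s^{-1}\,\de\mu_n(s)\to\gamma/(\gamma-1)$ from the Marchenko--Pastur inverse moment. Your corrected normalization (no extra $d/n$ factor) is the right one. Your use of the Bai--Yin lower edge to justify passing the unbounded $s^{-1}$ through weak convergence supplies a step the paper only makes explicit in the low-rank example, not in this proof.
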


Recalling the results of Section~\ref{sec:main_results_tradeoff} (and Corollary~\ref{cor:trerr_too_high__thm:main} in particular), we see that memorization is necessary when the noise is not too high, since $\mathsf{V}_{\pi}$ and $\jfunc_\pi$ are $O(1)$. We can interpret this finding as showing that, when $\pi$ is essentially uninformative, we are in a genuinely overparameterized regime: in the absence of any implicit structural knowledge of $\theta$, overfitting is unavoidably necessary, and strong predictive performance requires memorization.
\subsection{Approximately-low-rank Gaussian prior} \label{sec:example_2}
In the second example, we begin to investigate scenarios characterized by an underlying underparameterized structure, in which the model lacks sufficient capacity to overfit the noise. To capture this structure while still ensure $X\theta$ has a density $p$, we consider an approximately low-dimensional prior:
\begin{align}\label{eqn:pi_when_eta>0}
    \pi = \pi(n) = \normal(0, \Omega(n)), \qquad \Omega = \Omega(n) = \frac{1-\eta}{r} \begin{bmatrix} I_r & 0 \\ 0 & 0 \end{bmatrix} + \frac{\eta}{d} I_d, \qquad 0 < \eta \leq 1.
\end{align}
For small $\eta$, this prior is approximately supported on a low-rank subspace (spanned by the first $r$ basis vectors). As $\eta \to 0+$, if $r<n$ then the model approaches an underdetermined linear model (where $\trerr(\what{\theta})$ cannot be zero for any estimator), while if $r\geq n$ then the model essentially approaches the same setting as the isotropic Gaussian example of Section~\ref{sec:example_1}, except with $r$ in place of $d$.
\begin{proposition} \label{prop:applications-perturbed-low-rank-positive-eta} Let Assumption~\ref{assmp:proportional-asymptotics} hold, $r/n \to \rho \in (0, +\infty)$, and $\pi$ defined as in~\eqref{eqn:pi_when_eta>0} above. Define the constant $C=C(\gamma, \rho ,\eta) =\eta^2(\gamma-\rho-1)+\eta(\gamma \rho-\rho^2-\rho)+\gamma(\rho-1)$. For $\sigma^2 > 0$ and $\Sigma = I_d$,
\begin{align*}
    &\mathsf{V}_{\pi} \to 1, \qquad \jfunc_{\pi} \to \frac{-C+\sqrt{C^2+4 \eta(\gamma-1) \gamma(\eta+\rho)}}{2 \eta(\gamma-1)}, \qquad \lambda_{\Sigma} \to (1 + \sqrt{\gamma})^2
\end{align*}
almost surely.
In particular, we have the following limits as $\eta \to 0+$:
\begin{align*}
    \text{Underparameterization limit: if $\rho < 1$,} & \qquad \qquad \jfunc_{\pi} = \frac{\gamma(1-\rho)}{\eta(\gamma - 1)} + o(\eta^{-1}); \\
    \text{Interpolation threshold limit: if $\rho = 1$,} & \qquad \qquad \jfunc_{\pi} = \sqrt{\frac{\gamma}{\eta(\gamma-1)}} + o(\eta^{-\half});\\
    \text{Overparameterization limit: if $\rho > 1$,} & \qquad \qquad \jfunc_{\pi} = \frac{\rho}{\rho-1} + o(1).
\end{align*}
\end{proposition}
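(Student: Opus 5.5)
Since $\pi=\normal(0,\Omega)$ is Gaussian, $X\theta\sim\normal(0,X\Omega X^\top)$ conditionally on $X$. For a Gaussian density $p$ with covariance $K$ we have $\nabla\log p(z)=-K^{-1}z$, so
\[\jfunc_\pi=\tfrac1n\operatorname{tr}\big((X\Omega X^\top)^{-1}\big),\qquad \mathsf V_\pi=\tfrac1n\operatorname{tr}(X\Omega X^\top).\]
This reduces the statement to three random-matrix limits.

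The variance and $\lambda_\Sigma$ limits are routine. Since $\operatorname{tr}\Omega=(1-\eta)+\eta=1$, the strong law gives $\mathsf V_\pi=\frac1n\sum_{i}x_i^\top\Omega x_i\to 1$ almost surely; we write $x_i^\top\Omega x_i$ as a weighted average of $X_{ij}^2$ and use the bounded fourth moment to get almost-sure convergence across the triangular array. For $\lambda_\Sigma=\frac1n\|X\|^2$, the Bai--Yin theorem under finite fourth moment gives $\|X\|^2/n\to(1+\sqrt\gamma)^2$.

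The main step is $\jfunc_\pi$. Split $X=[X_1\ X_2]$ with $X_1\in\R^{n\times r}$. Then
\[X\Omega X^\top=\tfrac{1-\eta}{r}X_1X_1^\top+\tfrac{\eta}{d}XX^\top=\tfrac{1}{n}\Big(a\,X_1X_1^\top+b\,X_2X_2^\top\Big),\]
where $a=\frac{n(1-\eta)}{r}+\frac{n\eta}{d}\to\frac{1-\eta}{\rho}+\frac{\eta}{\gamma}$ and $b=\frac{n\eta}{d}\to\eta/\gamma$. Equivalently, $X\Omega X^\top=\frac1n\sum_j w_j z_jz_j^\top$, with columns $z_j$ and weights $w_j$ taking two values, on $r$ and $d-r$ columns respectively. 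This is a generalized Marchenko--Pastur (sample covariance with population weights) matrix. By Silverstein's theorem, its Stieltjes transform $m(z)$ converges almost surely to the solution of
\[m(z)=\Big(-z+\tfrac{r}{n}\tfrac{a}{1+a\,e(z)}+\tfrac{d-r}{n}\tfrac{b}{1+b\,e(z)}\Big)^{-1},\qquad e(z)=m(z),\]
where $e=m$ because the aspect normalization is by $n$. We need $\frac1n\operatorname{tr}(\cdot)^{-1}=m(0)$. Setting $z=0$ gives a scalar equation in $m_0$:
\[1=\rho\frac{a m_0}{1+am_0}+(\gamma-\rho)\frac{bm_0}{1+bm_0}.\]
Clearing denominators yields a quadratic in $m_0$. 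Its positive root should simplify, after substituting $a,b$, to the stated formula with $C(\gamma,\rho,\eta)$.

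The obstacle is justifying evaluation at $z=0$, i.e., at the edge of the support. We need the smallest eigenvalue of $X\Omega X^\top$ to stay bounded away from $0$ almost surely. This holds because $X\Omega X^\top\succeq\frac{\eta}{d}XX^\top$ and, by Bai--Yin, $\lambda_{\min}(XX^\top/d)\to(1-\gamma^{-1/2})^2>0$ since $\gamma>1$. Uniform integrability of $1/\lambda$ on the spectrum then lets weak convergence plus this eigenvalue bound pass to $\frac1n\operatorname{tr}(\cdot)^{-1}\to m_0$. We also select the root by positivity (or by continuity from $z\to-\infty$).

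Finally, the $\eta\to0^+$ limits follow by expanding the root. As $\eta\to0$, $C\to\gamma(\rho-1)$ and the discriminant term is $4\eta(\gamma-1)\gamma\rho+O(\eta^2)$.
\begin{itemize}
\item If $\rho<1$, then $C<0$, so the numerator is $\approx 2|C|$ and $\jfunc_\pi\approx\gamma(1-\rho)/(\eta(\gamma-1))$.
\item If $\rho>1$, expanding the square root gives $\jfunc_\pi\approx\gamma\rho/C=\rho/(\rho-1)$.
\item If $\rho=1$, then $C=O(\eta)$ and the square-root term dominates, giving $\sqrt{\gamma/(\eta(\gamma-1))}$.
\end{itemize}
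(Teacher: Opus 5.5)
Your route is the paper's route: the Gaussian score gives $\jfunc_\pi=\frac1n\Tr\bigl((X\Omega X^\top)^{-1}\bigr)$, then the generalized Marchenko--Pastur equation at $z=0$, control of the lower spectral edge, and expansion in $\eta$. Your fixed-point equation with $a=\frac{1-\eta}{\rho}+\frac{\eta}{\gamma}$ and $b=\frac{\eta}{\gamma}$ is correct, and the edge bound $X\Omega X^\top\succeq\frac{\eta}{d}XX^\top$ plus Bai--Yin is a clean way to justify evaluating at $z=0$.

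The genuine gap is the sentence asserting that the positive root ``should simplify'' to the stated formula. It does not. Clearing denominators gives
\[(\gamma-1)ab\,m_0^2+\bigl[(\rho-1)a+(\gamma-\rho-1)b\bigr]m_0-1=0.\]
The stated expression is instead the positive root of $\eta(\gamma-1)m^2+Cm-\gamma(\eta+\rho)=0$. That is exactly your quadratic multiplied by $\gamma(\rho+\eta)$, but with $a$ replaced by $1/(\rho+\eta)$. Since the positive root is strictly decreasing in $a$, the two disagree whenever $\frac{1-\eta}{\rho}+\frac{\eta}{\gamma}\neq\frac{1}{\rho+\eta}$.

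A quick sanity check exposes this. At $\eta=1$ (allowed by the statement), or at $\rho=\gamma$, the prior is exactly $\normal(0,I_d/d)$, so the limit must be $\gamma/(\gamma-1)$. Your quadratic returns this. The stated formula with $\gamma=2$, $\rho=1/2$, $\eta=1$ gives $(1+\sqrt{193})/8\approx 1.86\neq 2$. The paper reaches its closed form via a limiting spectral measure for $d\Omega$ that drops the factor $1-\eta$ (atom at $\gamma/\rho+\eta$ instead of $(1-\eta)\gamma/\rho+\eta$), plus further algebra slips in the fixed-point equation. So the exact finite-$\eta$ formula cannot be proved by your argument or any other. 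What your argument actually establishes is convergence of $\jfunc_\pi$ to the positive root of the quadratic displayed above.

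The $\eta\to0^+$ conclusions do survive, but you should derive them from the correct quadratic rather than by expanding the $C$-formula. Multiplying your quadratic by $\gamma\rho$ gives coefficients $(\gamma-1)\eta\bigl(1-\eta+\eta\rho/\gamma\bigr)$, $\gamma(\rho-1)(1-\eta)+\eta\rho(\gamma-2)$, and $-\gamma\rho$. These agree with $\eta(\gamma-1)$, $C$, and $-\gamma(\eta+\rho)$ to the order your case analysis uses: $\eta(\gamma-1)(1+O(\eta))$, $\gamma(\rho-1)+O(\eta)$, and $-\gamma\rho+O(\eta)$. At $\rho=1$ the linear coefficient is $\eta(\gamma-2)+O(\eta^2)$ in both and contributes only $O(1)=o(\eta^{-1/2})$. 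So your three expansions go through verbatim on the correct equation. Your arguments for $\mathsf{V}_\pi\to1$ and $\lambda_\Sigma\to(1+\sqrt{\gamma})^2$ are fine as written.
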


In the regime $\rho>1$ (i.e., $r>n$), even as $\eta\to 0+$ the model is still overparameterized, and essentially reduces to the same result as the isotropic Gaussian, in Proposition~\ref{prop:applications-isotropic}: we have $\jfunc_\pi\to\frac{\rho}{\rho-1}$ in place of $\frac{\gamma}{\gamma-1}$, since the effective dimension is $r=\rho n$ instead of $d = \gamma n$. In particular, memorization is again necessary, since the results of Corollary~\ref{cor:trerr_too_high__thm:main} apply as soon as $\sigma^2\leq\jfunc_\pi^{-1} \asymp 1$. 

On the other hand, if $\rho<1$ (i.e., $r<n$), we see different behavior. Indeed, if $\eta$ is close to zero (i.e., the prior is strongly concentrated near the low-rank subspace), then if the noise $\sigma^2$ is not too small, intuitively we would expect overfitting to be harmful since we have knowledge of low-rank structure in $\theta$ (as expressed by $\pi$).
This is confirmed by the theory: by Proposition~\ref{prop:applications-perturbed-low-rank-positive-eta} and Corollary~\ref{cor:trerr_too_high__thm:main}, as $\eta\to0+$ we see that memorization is necessary once $\sigma^2\leq \jfunc_\pi^{-1} = O(\eta)$, but if $\eta\to0+$ then any positive noise level will not fall into this regime. 

For comparison, we can consider a prior with exact low-rank structure,
\begin{equation}\label{eqn:pi_when_eta=0}(\theta_1,\dots,\theta_r)^\top \sim \normal(0,I_r/r), \qquad (\theta_{r+1},\dots,\theta_d)=0,\end{equation}
which we can view as a limit of~\eqref{eqn:pi_when_eta>0} by setting $\eta=0$.
This prior does not have a density on $\R^d$ and thus lies outside the framework of our main results in Theorem~\ref{thm:Bayes_train_err}, which bounds the training error of the Bayes estimator---but in this simple setting, we can instead compute $\trerr(\bysest)$ directly.

\begin{proposition} \label{prop:applications-perturbed-low-rank-zero-eta} Let Assumption~\ref{assmp:proportional-asymptotics} hold, let $r/n \to \rho \in (0, 1)$, and let $\sigma^2>0$ be fixed. Then, for the low-rank prior $\pi$ defined in~\eqref{eqn:pi_when_eta=0}, it holds almost surely that
$\trerr(\bysest) \to \sigma^2 \cdot (1-\rho) +\sigma^4 \cdot \frac{\rho^2}{1-\rho}  + O(\sigma^6)$.
\end{proposition}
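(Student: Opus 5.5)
The plan is to compute $\trerr(\bysest)$ in closed form, since everything is Gaussian and the prior lives on the first $r$ coordinates. Write $X = [X_S, X_{S^c}]$ with $X_S\in\R^{n\times r}$. Under~\eqref{eqn:pi_when_eta=0} we have $X\theta = X_S\theta_S$ with $\theta_S\sim\normal(0,I_r/r)$, so the model reduces to an \emph{underparameterized} Gaussian linear model. There the posterior mean is the ridge estimator
\[
X\bysest = H y,\qquad H = X_S\prn{X_S^\top X_S + r\sigma^2 I_r}^{-1}X_S^\top .
\]
Since $r<n$ and $X_S$ has full column rank almost surely for large $n$, I will take the SVD $X_S = U\,\mathrm{diag}(s_1,\dots,s_r)V^\top$ and complete $U$ to an orthonormal basis of $\R^n$. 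I will then compute the residual $(I-H)y$ coordinatewise in this basis.

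\textbf{Orthogonal complement.} On the $(n-r)$-dimensional complement of the column space of $X_S$, $H$ vanishes and $y$ is pure noise. This part contributes exactly $\sigma^2(n-r)/n \to \sigma^2(1-\rho)$.

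\textbf{Column space.} Along the $i$-th singular direction, $y_i = s_i\theta'_i + \sigma\tau_i$ with $\theta'_i\sim\normal(0,1/r)$ independent of $\tau_i$. The shrinkage factor of $I-H$ in this direction is $r\sigma^2/(s_i^2+r\sigma^2)$. A one-line computation gives the expected squared residual
\[
\frac{(r\sigma^2)^2}{(s_i^2+r\sigma^2)^2}\prn{\frac{s_i^2}{r}+\sigma^2} = \frac{\sigma^4}{s_i^2/r+\sigma^2}.
\]
Let $\mu_i$ denote the eigenvalues of $X_S^\top X_S/n$, so that $s_i^2/r = \mu_i/\rho_n$ with $\rho_n = r/n$. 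The column-space contribution is then
\[
\frac1n\sum_{i=1}^r \frac{\sigma^4}{\mu_i/\rho_n + \sigma^2} = \rho_n^2\,\sigma^4\cdot\frac1r\sum_{i=1}^r \frac{1}{\mu_i+\rho_n\sigma^2}.
\]

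\textbf{Asymptotics.} By Assumption~\ref{assmp:proportional-asymptotics}, the empirical spectral distribution of $X_S^\top X_S/n$ converges almost surely to the Marchenko--Pastur law with ratio $\rho<1$. By the Bai--Yin theorem (which needs only the finite fourth moment assumed), the smallest eigenvalue converges almost surely to $(1-\sqrt\rho)^2>0$. Hence $x\mapsto 1/(x+\rho\sigma^2)$ is bounded and continuous on a neighborhood of the limiting support, so the normalized sum converges almost surely to $\int (x+\rho\sigma^2)^{-1}\,dF_{\mathrm{MP},\rho}(x)$.

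Expanding
\[
\frac{1}{x+\rho\sigma^2} = \frac1x - \frac{\rho\sigma^2}{x(x+\rho\sigma^2)}
\]
isolates the leading term. The remainder is bounded by $\rho\sigma^2/(1-\sqrt\rho)^4$, which yields the $O(\sigma^6)$ term after multiplying by $\rho^2\sigma^4$. The leading term uses the standard Marchenko--Pastur moment $\int x^{-1}\,dF_{\mathrm{MP},\rho} = 1/(1-\rho)$, giving $\sigma^4\rho^2/(1-\rho)$. Adding the two contributions proves the claim.

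The main obstacle is only the justification of exchanging limits with the unbounded function $1/x$. That is exactly what the almost-sure lower bound on the smallest eigenvalue provides. I would cite Bai--Yin~\citep{bai2010spectral} for it, together with the explicit inverse moment of the Marchenko--Pastur law.
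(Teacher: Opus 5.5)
Your proposal is correct and is essentially the paper's argument: the closed-form ridge posterior mean gives $\trerr(\bysest)=\frac{1}{n}\sum_{\lambda\in\mathrm{spec}(X_SX_S^\top/r)}\frac{\sigma^4}{\lambda+\sigma^2}$, with the $n-r$ zero eigenvalues producing $\sigma^2(1-\rho)$, followed by Marchenko--Pastur convergence at fixed $\sigma^2>0$. The only difference is in the final expansion, and it is minor: you use the inverse moment $\int x^{-1}\,dF_{\mathrm{MP},\rho}=1/(1-\rho)$ with an explicit remainder bound, whereas the paper Taylor-expands the closed-form Stieltjes transform $m(-\sigma^2)$. Your remainder bound only uses the support of the limiting law, and the integrand is already bounded on $[0,\infty)$ for $\sigma^2>0$, so the Bai--Yin step you identify as the main obstacle is not actually needed.
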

Consequently, any near-optimal estimator $\what{\theta}$ must have, at most, a bounded amount of overfitting: by Proposition~\ref{prop:compare_to_Bayes}, we will have $\cost_\Sigma(\what{\theta})\gtrsim \sigma^2$ if $\sigma^2 - \trerr(\what{\theta}) \geq c\sigma^2$ for any constant $c>\rho$.

\subsection{Mixture of approximately-sparse priors}\label{sec:example_3}

We now move beyond the Gaussian prior setting, in which the absence of a closed-form expression for $\nabla \log p(y)$ makes it challenging for exact evaluation of the key Fisher information quantity $\jfunc_\pi$. Nevertheless, in certain cases we can still derive bounds that characterize its behavior. In particular, we consider the mixture of the collection of perturbed $K$-sparse priors:
\begin{align}\label{eqn:pi_approx_sparse}
    \pi = \pi(n) = \frac{1}{\binom{d}{K}} \sum_{S \subset [d], |S|=K} \normal(0, \Omega_S(n)), \qquad \Omega_S = \Omega_S(n) = \frac{1-\eta}{K}\sum_{i \in S} e_ie_i^\top + \frac{\eta}{d} I_d.
\end{align}
This prior is appropriate for a setting where we believe $\theta$ is $K$-sparse, but have no knowledge of its support. The distribution of $X\theta$ induced by $\theta \sim \pi$ then has the density
\begin{align*}
    p(y) = \frac{1}{\binom{d}{K}} \sum_{S \subset [d], |S|=K} \phi(y; 0, X \Omega_S X^\top),
\end{align*}
where $\phi(\cdot \, ; \, \mu, \Sigma)$ denotes the density of the $\normal(\mu,\Sigma)$ distribution.

\begin{proposition} \label{prop:applications-perturbed-sparse} Let Assumption~\ref{assmp:proportional-asymptotics} hold, and $\pi$ defined as in~\eqref{eqn:pi_approx_sparse} above. Assume also that $K/n \to 0$. For $\sigma^2 > 0$ and $\Sigma = I_d$, we have almost surely
\[\frac{1}{\eta} \cdot e \prn{1-\frac{1}{\gamma}}^{\gamma - 1} \leq \lim\inf_{n\to\infty} \jfunc_\pi \leq \lim\sup_{n\to\infty} \jfunc_\pi \leq \frac{\gamma}{\eta(\gamma-1)}.\]
In addition, when $\gamma \to +\infty$, both sides converge to $1/\eta$.
\end{proposition}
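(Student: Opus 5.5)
The plan is to sandwich $\jfunc_\pi$ between two quantities that depend only on the spectrum of $XX^\top$: convexity of Fisher information gives the upper bound, and Stam's inequality together with an entropy bound gives the lower bound. Both sides then reduce to Marchenko--Pastur limits. Throughout, write $A_S = X\Omega_S X^\top = \frac{1-\eta}{K}X_SX_S^\top + \frac{\eta}{d}XX^\top$, where $X_S$ is the $n\times K$ submatrix of columns in $S$. Note that $A_S \succeq \frac{\eta}{d}XX^\top \succ 0$ because $X$ has full row rank.

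\textbf{Upper bound.} Fisher information (with the $\frac1n$ normalization of~\eqref{eq:def-Fisher-parameter}) is convex in the density. This is the standard Cauchy--Schwarz argument: $\|\nabla \sum_S w_S p_S\|^2/\sum_S w_S p_S \le \sum_S w_S \|\nabla p_S\|^2/p_S$ pointwise. Hence $\jfunc_\pi \le \binom{d}{K}^{-1}\sum_S \jfunc(\normal(0,A_S))$. For a Gaussian, $\jfunc(\normal(0,A_S)) = \frac1n\operatorname{tr}(A_S^{-1})$. Using $A_S^{-1}\preceq (\frac{\eta}{d}XX^\top)^{-1}$, we get
\[
\jfunc_\pi \le \frac{1}{\eta}\cdot\frac1n\operatorname{tr}\big((XX^\top/d)^{-1}\big).
\]
Under Assumption~\ref{assmp:proportional-asymptotics}, $XX^\top/d$ has a Marchenko--Pastur limit with ratio $1/\gamma<1$. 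By Bai--Yin its smallest eigenvalue stays bounded away from zero almost surely, so the normalized trace of the inverse converges to $\frac{1}{1-1/\gamma}=\frac{\gamma}{\gamma-1}$. This gives the $\limsup$ bound.

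\textbf{Lower bound.} Stam's inequality (the sharpening of $\mathsf{V}_\pi\ge \jfunc_\pi^{-1}$ referenced as Lemma~\ref{lem:EII}) gives $\jfunc_\pi \ge 1/N(p)$, where $N(p) = \frac{1}{2\pi e}\exp(\frac{2}{n}h(p))$ is the entropy power. So it suffices to upper bound the entropy $h(p)$. For a uniform mixture over $\binom{d}{K}$ components,
\[
h(p)\le \binom{d}{K}^{-1}\sum_S h(\normal(0,A_S)) + \log\binom{d}{K}.
\]
The last term is at most $K\log(ed/K)$, which is $o(n)$ because $K/n\to0$ and $d/n\to\gamma$. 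Each component entropy is $\frac12\log\det(2\pi e A_S)$. To control it, factor
\[
\det A_S = \det\!\big(\tfrac{\eta}{d}XX^\top\big)\,\det\!\Big(I_K + \tfrac{(1-\eta)d}{K\eta}X_S^\top(XX^\top)^{-1}X_S\Big).
\]
The matrix $X_S^\top(XX^\top)^{-1}X_S$ is a compression of the projection $X^\top(XX^\top)^{-1}X$, so its norm is at most $1$. The second determinant is therefore at most $(1+\frac{(1-\eta)d}{K\eta})^K$, whose log is $O(K\log(d/K))=o(n)$. Combining these,
\[
\liminf \jfunc_\pi \ge \frac1\eta \exp\Big(-\lim \frac1n\log\det(XX^\top/d)\Big).
\]
The Marchenko--Pastur log-determinant limit with ratio $c=1/\gamma$ is $-1-(\gamma-1)\log(1-1/\gamma)$. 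Substituting, this yields exactly $\frac{1}{\eta}e(1-\frac1\gamma)^{\gamma-1}$.

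\textbf{Limit $\gamma\to\infty$ and main obstacle.} As $\gamma\to\infty$, $(\gamma-1)\log(1-1/\gamma)\to -1$, so the lower bound tends to $1/\eta$. Also $\frac{\gamma}{\gamma-1}\to1$, so the upper bound tends to $1/\eta$ as well. The main technical point I expect is justifying almost-sure convergence of $\frac1n\operatorname{tr}((XX^\top/d)^{-1})$ and $\frac1n\log\det(XX^\top/d)$. Both functionals are unbounded near $0$, so weak convergence of the spectral distribution is not enough by itself. I would handle this by invoking Bai--Yin: under the finite fourth moment condition, $\lambda_{\min}(XX^\top/d)\to(1-\gamma^{-1/2})^2>0$ almost surely. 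Then both functionals are bounded continuous functions on a compact set containing the spectrum eventually, and convergence follows. A secondary check is that the entropy subadditivity and Stam steps only require $p$ to be smooth with finite second moments, which holds for a Gaussian mixture with nondegenerate covariances.
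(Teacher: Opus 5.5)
Your proposal is correct and follows essentially the same route as the paper: the upper bound comes from pointwise Cauchy--Schwarz convexity with $X\Omega_SX^\top \succeq \frac{\eta}{d}XX^\top$. The lower bound comes from the entropic isoperimetric inequality together with $h(p)\le h(p\mid S)+\log\binom{d}{K}$, the determinant factorization with $\|X_S^\top(XX^\top)^{-1}X_S\|\le 1$, and the Marchenko--Pastur limits of $\frac1n\Tr\big((XX^\top/d)^{-1}\big)$ and $\frac1n\log\det(XX^\top/d)$. Two of your steps are slightly cleaner than the paper's: the norm bound via a principal submatrix of an orthogonal projection, and the explicit appeal to Bai--Yin to handle the unbounded functionals $1/\lambda$ and $\log\lambda$, which the paper leaves implicit here.
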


Assuming $K/n\to 0$ ensures that, under the prior $\pi$, the model is essentially low-dimensional. Similarly to the results of Section~\ref{sec:example_2} (for the underparameterized case $r<n$ considered there), here we see that, provided $\eta\approx 0$---i.e., the prior is strongly concentrated on approximately-sparse values of $\theta$---the results only suggest that memorization is necessary if the noise level $\sigma^2$ is extremely small.

\section{Examining the training error of the Bayes estimator} \label{sec:fine-grained}
As implied by Theorem~\ref{thm:main}, quantifying the cost of overfitting (or not overfitting) hinges on characterization of the training error achieved by the optimal Bayes estimator under a given $\model_X$ specification. In this regard, Theorem~\ref{thm:Bayes_train_err} constitutes the principal driving mechanism for the central results presented in this work. 
In this section we examine this training error more closely:
\begin{enumerate}
    \item In Sec.~\ref{sec:Tweedie}, we present Tweedie's formula and a Fisher information expression for $\trerr(\bysest)$ valid for all $\sigma^2 > 0$, which serve as the foundation in the proof of Theorem~\ref{thm:Bayes_train_err} in Appendix~\ref{proof:Bayes_train_err}.
    \item In Sec.~\ref{sec:monotonicity}, we analyze monotonicity properties of $\trerr(\bysest)$ w.r.t.\ the noise level $\sigma^2$.
\end{enumerate} 

To simplify notation, throughout this section, we denote the density of $\normal(0, \sigma^2 I_n)$ by $\phi_{\sigma^2}$. Let $\pi'$ be the probability distribution of $X\theta$ in $\R^n$, and let $\pi'_{\sigma^2} := \pi'*\normal(0, \sigma^2 I_n)$, which is the marginal distribution of $y$. We also assume that the density $p > 0$ of $\pi'$ always exists and is differentiable, and continue to assume that $\pi$ has mean zero, as before. Write $p_{\sigma^2} = p\ast \phi_{\sigma^2}$ as the density of $\pi'_{\sigma^2}$.\footnote{The differentiability assumption can be relaxed in certain cases, which we will discuss in the Appendix.}  Then
\begin{align*}
    \trerr\prn{\bysest} & = \frac{1}{n}\E_{y \sim \pi'_{\sigma^2}} \brk{\norm{X \bysest - y}^2 \mid X} = \frac{1}{n}\E_{\theta \sim \pi, \tau \sim \normal(0, I_n)} \brk{\ltwo{X \bysest(X, X\theta + \sigma \tau) - X \theta - \sigma \tau}^2 \mid X}.
\end{align*}
The above manner in which $\trerr(\bysest)$ depends on the specifications of the linear model $\model_X$ is not transparent. Thanks to the remarkable representation of the posterior mean through the score function, known as Tweedie’s formula~\citep{efron2011tweedie} in Bayesian estimation, we can concisely express $\trerr(\bysest)$ in terms of the Fisher information of $\pi_{\sigma^2}'$.
\subsection{Tweedie's formula and Fisher information} \label{sec:Tweedie}
By Tweedie's formula,
$$
\E [X\theta \mid X, y] = y + \sigma^2 \nabla \log p_{\sigma^2}(y),
$$
and from this we can immediately see how to characterize the training error by the Fisher information of $\pi'_{\sigma^2}$ by rearranging terms and taking expectation over $y$. 
\begin{lemma} \label{lem:bayes-train-err-representation}
It holds for all noise level $\sigma^2 > 0$ that
\begin{subequations}
\begin{align} \label{eq:bayes-train-err-cgf}
\trerr\prn{\bysest} = \frac{\sigma^2}{n} \cdot \E_{\theta \sim \pi, \tau \sim \normal(0, I_n)} \brk{\norm{\frac{\partial}{\partial \tau} \log \E_{\theta' \sim \pi} \brk{\exp \brc{-\frac{\norm{X \theta' - X\theta - \sigma \tau}^2}{2 \sigma^2}} \mid X, y}}^2 \mid X}.
\end{align}
Moreover, defining the Fisher information matrix of $\pi'_{\sigma^2}$ as
\[
\ifunc(\sigma^2) := \E_{y \sim \pi'_{\sigma^2}} \brk{ \nabla\log  p_{\sigma^2}(y) \nabla \log  p_{\sigma^2}^\top(y)} 
= - \E_{y \sim \pi'_{\sigma^2}} \brk{ \nabla^2 \log  p_{\sigma^2}(y)},
\]
it holds that
\begin{align}  \label{eq:bayes-train-err-fisher}
\trerr\prn{\bysest} = \frac{\sigma^4}{n} \cdot  \E_{y \sim \pi'_{\sigma^2}} \brk{ \norm{\nabla \log  p_{\sigma^2}(y)}^2} = \frac{\sigma^4}{n} \cdot  \Tr \prn{\ifunc(\sigma^2)}.
\end{align}
\end{subequations}
\end{lemma}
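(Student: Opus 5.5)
The plan is to reduce everything to the posterior mean of $X\theta$ and then apply Tweedie's formula. Since $\bysest = \E[\theta\mid X,y]$ and $X$ is fixed, linearity of conditional expectation gives $X\bysest = \E[X\theta\mid X,y]$. Tweedie's formula then yields $X\bysest - y = \sigma^2\nabla\log p_{\sigma^2}(y)$. Squaring, dividing by $n$, and averaging over $y\sim\pi'_{\sigma^2}$ gives immediately
\[
\trerr(\bysest) = \frac{\sigma^4}{n}\E_{y\sim\pi'_{\sigma^2}}\brk{\norm{\nabla\log p_{\sigma^2}(y)}^2}.
\]
Taking the trace of the outer product gives $\norm{\nabla\log p_{\sigma^2}}^2 = \Tr(\nabla\log p_{\sigma^2}\nabla\log p_{\sigma^2}^\top)$, which is the first equality in \eqref{eq:bayes-train-err-fisher} together with the trace expression. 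For completeness I would verify Tweedie's formula directly. We have $p_{\sigma^2}(y) = \E_{\theta'}[\phi_{\sigma^2}(y - X\theta')]$, and differentiating under the integral gives $\sigma^2\nabla p_{\sigma^2}(y) = \E_{\theta'}[(X\theta' - y)\phi_{\sigma^2}(y-X\theta')]$. Dividing by $p_{\sigma^2}(y)$ recovers $\E[X\theta\mid y]-y$.

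For \eqref{eq:bayes-train-err-cgf}, I would rewrite the score in the variable $\tau$. Write
\[
p_{\sigma^2}(y) = (2\pi\sigma^2)^{-n/2}\,\E_{\theta'}\brk{\exp\{-\norm{X\theta' - y}^2/(2\sigma^2)\}},
\]
with $y = X\theta+\sigma\tau$. The normalizing constant does not depend on $y$, so it drops out of the logarithmic gradient. By the chain rule, $\partial/\partial\tau = \sigma\nabla_y$, so $\nabla_y\log p_{\sigma^2}(y) = \sigma^{-1}\,\partial_\tau\log\E_{\theta'}[\cdots]$. Substituting into the previous display turns the factor $\sigma^4$ into $\sigma^2$, and the expectation becomes one over $(\theta,\tau)\sim\pi\times\normal(0,I_n)$, which is exactly the claimed form.

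The remaining task is the equality $\ifunc(\sigma^2) = -\E[\nabla^2\log p_{\sigma^2}]$. I would use the identity $\nabla^2\log p = \nabla^2 p/p - \nabla\log p\,\nabla\log p^\top$. Taking expectations under $p_{\sigma^2}$, this reduces the claim to showing $\int\nabla^2 p_{\sigma^2}\,dy = 0$. This vanishing, and more generally all the differentiation under the integral, is the main technical point. Because $p_{\sigma^2}$ is a Gaussian convolution, it is smooth, and $\nabla^2 p_{\sigma^2} = \E_{\theta'}[\nabla^2\phi_{\sigma^2}(\cdot - X\theta')]$. By Fubini and the fact that $\int\nabla^2\phi_{\sigma^2} = 0$ (Gaussian tails kill the boundary terms), the integral vanishes.

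Two domination conditions need to be checked. First, differentiating under $\E_{\theta'}$ is justified because $\norm{X\theta'-y}^k\phi_{\sigma^2}(y-X\theta')$ is uniformly bounded in $\theta'$ and locally uniformly in $y$. Second, I need finiteness of $\E\norm{\nabla\log p_{\sigma^2}}^2$. This follows from Jensen's inequality: $\sigma^4\norm{\nabla\log p_{\sigma^2}(y)}^2 \le \E[\norm{X\theta - y}^2\mid y]$, and the right-hand side has expectation $n\sigma^2$. Neither condition requires any moment assumption on $\pi$ beyond what is used for $\bysest$ to be defined.
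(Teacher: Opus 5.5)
Your proposal is correct and follows essentially the same route as the paper. You differentiate the Gaussian-smoothed marginal under the integral, justified by uniform boundedness of $\norm{X\theta'-y}^k\phi_{\sigma^2}(y-X\theta')$, to obtain $X\bysest - y = \sigma^2\nabla\log p_{\sigma^2}(y)$, and then use $\partial_\tau = \sigma\nabla_y$ to pass to the form \eqref{eq:bayes-train-err-cgf}; the paper does the same two steps in the opposite order. Your extra checks that $\int\nabla^2 p_{\sigma^2}=0$ (giving the Hessian form of $\ifunc(\sigma^2)$) and that $\E\norm{\nabla\log p_{\sigma^2}}^2\le n/\sigma^2$ by Jensen fill in details the paper leaves implicit.
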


\begin{remark} \label{remark:suggested-asymptotics} Lemma~\ref{lem:bayes-train-err-representation} suggests that the trace of the Fisher information matrix $\ifunc(\sigma^2)$ governs the training error of the Bayes estimator $\bysest$. In the low noise regime $\sigma^2 \to 0^+$, we anticipate
\[
\trerr \prn{\bysest} = (1+o(1)) \frac{\sigma^4}{n} \lim_{\sigma^2 \to 0^+} \Tr \prn{\ifunc(\sigma^2)}.
\]
Higher-order expansions of $\trerr \prn{\bysest}$ can be obtained from the asymptotics of $\Tr \prn{\ifunc(\sigma^2)}$ as $\sigma^2 \to 0+$, which we defer to Appendix~\ref{sec:higher-order}. As $\sigma^2 \to +\infty$, the distribution $\pi'_{\sigma^2}$ converges to $\normal(0,\sigma^2 I_n)$, and the Fisher information satisfies $\ifunc(\sigma^2)= (1+o(1))I_n/\sigma^2$. Hence, as $\sigma^2\to+\infty$,
\[
\trerr \prn{\bysest}
= (1+o(1))\frac{\sigma^4}{n} \cdot \frac{n}{\sigma^2}
= (1+o(1))\sigma^2.
\]
\end{remark}

\subsection{Monotonicity of the training error} \label{sec:monotonicity}

Overloading notation, we next define $\tfunc(\sigma^2)$ as a function on $(0, +\infty)$ to represent the training error of $\trerr(\bysest)$ at noise level $\sigma^2$---that is, $\tfunc(\sigma^2)$ is defined as the right-hand side of~\eqref{eq:bayes-train-err-cgf}. We also define an auxiliary function $\jfunc$ on $(0, +\infty)$ for the Fisher information as
\begin{align}
\jfunc(\sigma^2) & =  \frac{1}{n} \cdot  \E_{y \sim \pi'_{\sigma^2}} \brk{ \norm{\nabla \log  p_{\sigma^2}(y)}^2}  = \frac{1}{n} \Tr (\ifunc(\sigma^2)). \label{eq:jfunc}
\end{align}
By the equivalence of Eqs.~\eqref{eq:bayes-train-err-cgf} and \eqref{eq:bayes-train-err-fisher} from Lemma~\ref{lem:bayes-train-err-representation}, we have $\tfunc(\sigma^2) = \sigma^4 \jfunc(\sigma^2)$. We also extend the definition to $\jfunc(0) = \jfunc_{\pi}$ and $\tfunc(0) = 0$ if $\pi$'s push-forward distribution $\pi'$ has finite Fisher information. 

A natural direction for deepening our understanding of the behavior of the Bayes estimator is to investigate the evolution---in particular, monotonicity---of $\tfunc(\sigma^2)$ as a function of the noise variance parameter $\sigma^2$. Intuitively, $\tfunc(\sigma^2)$ should increase as the additional noise makes the optimal learning procedure less confident. Employing powerful tools for Gaussian channels~\citep{guo2004mutual, guo2011estimation} enables us to conclude the following.
\begin{proposition}
\label{prop:monotonicity} 
    If $\jfunc_\pi$ is finite, then $\tfunc(\sigma^2)$ is monotonically non-decreasing on $[0, +\infty)$, and $\tfunc(\sigma^2)/\sigma^4 = \jfunc(\sigma^2)$ is monotonically non-increasing on $[0, +\infty)$.
\end{proposition}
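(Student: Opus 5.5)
We need to prove two facts: $\jfunc(\sigma^2)$ is non-increasing and $\tfunc(\sigma^2)=\sigma^4\jfunc(\sigma^2)$ is non-decreasing. We treat them separately.

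\textbf{Monotonicity of $\jfunc$.} We use the semigroup structure of Gaussian convolution. For $0\le s<t$ we have $p_t = p_s * \phi_{t-s}$, i.e.\ $y_t = y_s + \sqrt{t-s}\,\xi$ with $\xi$ independent of $y_s$. The standard conditional-expectation representation of the score gives
\[
\nabla\log p_t(y_t)=\E[\nabla \log p_s(y_s)\mid y_t],
\]
obtained by differentiating $p_t(y)=\int p_s(y-z)\phi_{t-s}(z)\,dz$ under the integral. By Jensen's inequality (conditional variance is nonnegative), $\E\|\nabla\log p_t(y_t)\|^2\le \E\|\nabla\log p_s(y_s)\|^2$, so $\jfunc(t)\le\jfunc(s)$. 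At $s=0$ the same argument applies with $p_0=p$, since $\jfunc_\pi<\infty$. This is the classical fact that Fisher information decreases under convolution (Stam).

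\textbf{Monotonicity of $\tfunc$.} Here we combine Tweedie's formula with the I-MMSE/MMSE monotonicity of \citet{guo2011estimation}. Write $\mmse(\sigma^2)=\frac1n\E\|X\theta-\E[X\theta\mid y]\|^2$. Tweedie gives $\E[X\theta\mid y]-y=\sigma^2\nabla\log p_{\sigma^2}(y)$, and the Gaussian integration by parts $\E[\langle\tau,\nabla\log p_{\sigma^2}(y)\rangle]$ yields the identity
\[
\sigma^4\jfunc(\sigma^2)=\sigma^2-\mmse(\sigma^2),
\]
i.e.\ $\tfunc(\sigma^2)=\sigma^2-\mmse(\sigma^2)$. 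It therefore suffices to show $\frac{d}{d\sigma^2}\mmse(\sigma^2)\le 1$. Differentiating via the heat equation (or de Bruijn's identity $\frac{d}{dt}\jfunc$), we get
\[
\frac{d}{d\sigma^2}\tfunc=\frac{d}{dt}\big(t^2\jfunc(t)\big)=2t\jfunc(t)+t^2\jfunc'(t).
\]
A known formula (Guo–Shamai–Verdú) gives $\frac{d}{dt}\jfunc(t)=-\frac1n\E\|\nabla^2\log p_t\|_F^2$. Hence nonnegativity amounts to
\[
\frac{t^2}{n}\E\|\nabla^2\log p_t\|_F^2\le 2t\jfunc(t).
\]
To prove this, we write $\nabla^2\log p_t=t^{-2}\Cov(X\theta\mid y)-t^{-1}I$. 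Expanding, the inequality becomes
\[
\frac{1}{n}\E\|\Cov(X\theta\mid y)\|_F^2\le \frac{2t}{n}\E\Tr\Cov(X\theta\mid y)-t^2+t^2\cdot(\text{terms}).
\]
A cleaner route is to express everything in the snr parametrization, where \citet{guo2011estimation} show $\mmse$ is decreasing in snr. A more direct alternative: the substitution $\tfunc(t)=t-\mmse(t)$ requires $\mmse(t)-\mmse(s)\le t-s$. This follows by coupling, since adding noise $\sqrt{t-s}\,\xi$ to $y_s$ makes the estimator $\E[X\theta\mid y_s]$ unavailable; we then need the reverse comparison, which we obtain through $\mmse(t)\le$ error of the estimator $\E[X\theta\mid y_t]$ evaluated appropriately.

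\textbf{Main obstacle.} The step I expect to be hardest is justifying $\mmse'(t)\le 1$. I would first try the explicit derivative $\mmse'(t)=\frac{1}{nt^2}\E\|\Cov(X\theta\mid y)\|_F^2$ from \citet{guo2011estimation}. I would then bound $\E\|\Cov\|_F^2\le t^2 n$ using the posterior covariance identity $\Cov(X\theta\mid y)=tI+t^2\nabla^2\log p_t\preceq tI$, because $p_t=p*\phi_t$ makes $\log p_t+\|y\|^2/(2t)$ convex (log-MGF). Since $0\preceq\Cov\preceq tI$, this gives $\|\Cov\|_F^2\le t\Tr\Cov\le$ what we need, namely $\mmse'(t)\le \mmse(t)/t\le 1$, where the final inequality uses $\mmse(t)\le t$ (the estimator $y$ achieves $t$). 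Justifying differentiation under the integral and the endpoint $t=0$ (using $\jfunc_\pi<\infty$ and continuity) is the remaining technical care.
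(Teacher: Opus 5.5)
\paragraph{The $\jfunc$ half is fine.} Writing $\nabla\log p_t(y_t)=\E[\nabla\log p_s(y_s)\mid y_t]$ and applying conditional Jensen is the standard proof that Fisher information decreases under Gaussian convolution. The paper instead invokes Stam's inequality $\jfunc(t)^{-1}\ge\jfunc(s)^{-1}+(t-s)$, which is stronger but gives the same monotonicity.

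\paragraph{The $\tfunc$ half fails at $\Cov(X\theta\mid y)=tI+t^2\nabla^2\log p_t\preceq tI$.} Convexity of the log-MGF $\log p_t(y)+\|y\|^2/(2t)$ gives $\nabla^2\log p_t\succeq -I/t$. That is $\Cov(X\theta\mid y)\succeq 0$, the trivial lower bound, not the upper bound you need. The upper bound $\Cov\preceq tI$ would require $p_t$ to be log-concave, and it fails for any multimodal prior. For example, at the midpoint between two well-separated atoms at distance $\delta$, the posterior is split evenly and its variance is about $\delta^2/4\gg t$. Your coupling remark does not supply a substitute argument either.

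\paragraph{No repair is possible, because the $\tfunc$ claim is false.} The inequality you need, $\frac1n\E\|\Cov(X\theta\mid y)\|_F^2\le t^2$ (equivalently $\tfunc'(t)\ge 0$), does not hold under the sole hypothesis $\jfunc_\pi<\infty$. Take $n=d=1$, $X=1$, and $p=\frac1M\sum_{k=1}^M\normal(m_k,\eta)$, with centered, equally spaced $m_k$ of spacing $\delta$ and $\eta\ll\delta^2$, so that $\jfunc_\pi\le 1/\eta$.
\begin{itemize}
\item For $\eta\ll t\ll\delta^2$, the components of $p_t$ are well separated. Then $\jfunc(t)\approx 1/(t+\eta)$ and $\tfunc(t)\approx t$: the Bayes estimator snaps to the nearest atom and does not overfit.
\item For $\delta^2\ll t\ll (M\delta)^2$, $p_t$ is, up to ripples of relative size about $e^{-2\pi^2 t/\delta^2}$, a uniform density on an interval of length $L\approx M\delta$ smoothed at scale $\sqrt t$. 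The score is therefore negligible except within $O(\sqrt t)$ of the two edges. This gives $\jfunc(t)=O(1/(L\sqrt t))$ and $\tfunc(t)=O(t^{3/2}/L)$: the prior looks flat and the Bayes estimator memorizes.
\end{itemize}
With $t_1=\delta^2/100$, $t_2=100\delta^2$ and $M$ large, this gives $\tfunc(t_2)<\tfunc(t_1)$.

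So only the $\jfunc$ half of the statement can be proved. The paper's own argument for $\tfunc$ has the same hole. After the correct bound $\Cov(y'\mid y)\preceq\E[(y'-y)(y'-y)^\top\mid y]$, it replaces $\E\|\E[(y'-y)(y'-y)^\top\mid y]\|_F^2$ by $\|tI_n\|_F^2=nt^2$. By Jensen, that replacement goes in the wrong direction ($\ge$, not $\le$).
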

However, the function $\tfunc(\sigma^2)/\sigma^2$ (i.e., the training error relative to the noise level) can in general be non-monotonic: see Fig.~\ref{fig:monotonic} for an example\footnote{See the code for the experiment in \hyperlink{blue}{https://github.com/Moriartycc/is-memorization-helpful-or-harmful}}.
\begin{figure} 
    \centering
    \includegraphics[width=.95\linewidth]{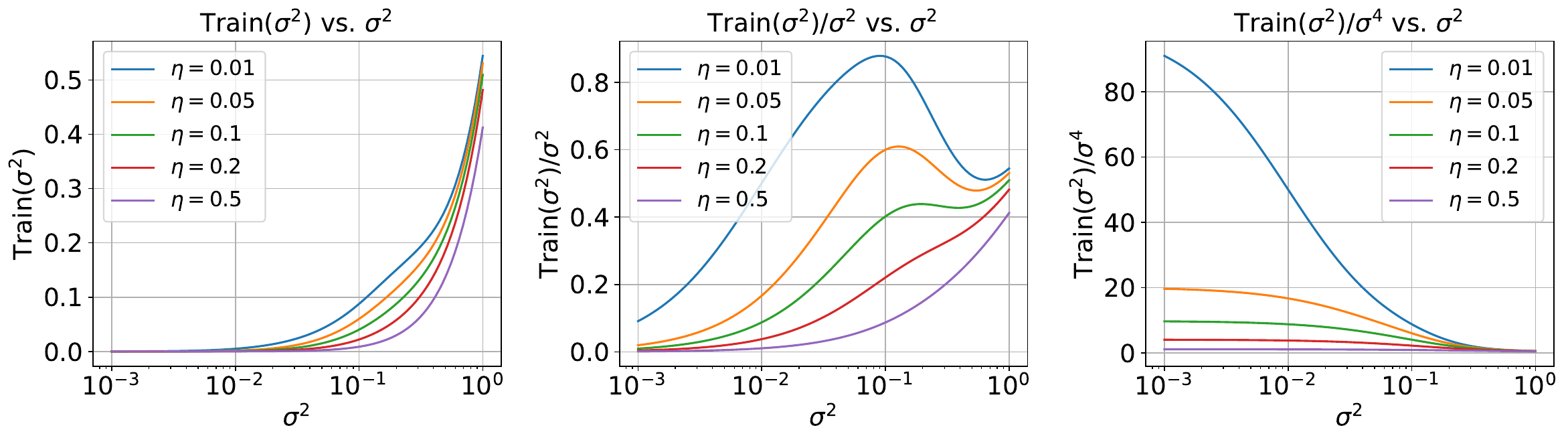}
    \caption{Numerical simulations for $\pi' = 0.5 \normal(-1, \eta) + 0.5 \normal(1, \eta)$. 
    From left to right, we plot $\tfunc(\sigma^2)$, $\tfunc(\sigma^2)/\sigma^2$ and $\tfunc(\sigma^2)/\sigma^4$ vs. $\sigma^2$.} \label{fig:monotonic}
\end{figure}

\section{Discussion} \label{sec:discussion}
Our results, extending the memorization phenomenon in~\citet{cheng2022memorize} from Gaussian priors to general priors, provide summarizing parameters $\jfunc_\pi^{-1}$ and $\mathsf{V}_\pi$, which are intrinsic to the underlying parameter distribution $\pi$ and capture a notion of effective dimension. These parameters yield asymptotically optimal guaranties for characterizing when interpolation is necessary or harmful. Furthermore, Theorem~\ref{thm:Bayes_train_err} does not rely on proportional asymptotics nor assume a high-dimensional ambient space, and thus holds in a fully general setting.

For linear models, the hypothesis class consists essentially of the parametric family  $\mathcal{F}_\Theta = \{x^\top \theta \mid \theta \in \Theta = \mathbb{R}^d\}$. It is of considerable interest to extend the present analysis to more general model classes \(\mathcal{F}_\Lambda = \{f_\lambda\}\), where \(\Lambda\) may be either a parametric or a nonparametric index set; it would also be valuable to investigate analogous phenomena outside the Bayesian framework without imposing a prior distribution on \(\Theta\).  

Finally, the role of \(\lambda_\Sigma\) is inherently tied to the finite-dimensional ambient space \(d < \infty\). In Hilbert spaces~\citep{bartlett2020benign, cheng2024dimension}, one has \(\lambda_\Sigma = \infty\), while the cited works show that benign overfitting occurs for arbitrary noise levels in such settings. A future theory of memorization in Hilbert spaces will therefore require a fundamentally different analytical approach.

\section*{Acknowledgment}
C.C. and R.F.B. were supported by the Office of Naval Research via grant N00014-24-1-2544. R.F.B. was additionally supported by the National Science Foundation via grant DMS-2023109.

\bibliography{bib}
\bibliographystyle{abbrvnat}

\newpage

\appendix

\section{Technical lemmas} \label{proof:technical}

In this section, we gather some technical lemmas from existing work, that will be helpful in the main proofs.

The first result concerns the computation of moments of the multivariate normal distribution. \citeauthor{isserlis1918formula} originally established the general formula for arbitrary moments, whereas we require only the following identity for the fourth-order moment.
\begin{lemma}[Isserlis's theorem~\citep{isserlis1918formula}] \label{lem:isserlis}
    Let $X=(X_1,X_2,X_3,X_4)$ be a zero-mean Gaussian random vector. The following identity holds:
    \begin{align*}
        \E[X_1X_2X_3X_4] = \E[X_1 X_2] \E[X_3 X_4]+\E[X_1 X_3] \E[X_2 X_4]+\E[X_1 X_4] \E[X_2 X_3] .
    \end{align*}
\end{lemma}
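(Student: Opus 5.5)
We would prove Isserlis's identity by reading off a coefficient in the moment generating function of $X$. Write $C=(C_{ij})_{i,j=1}^4$ for the covariance matrix of $X$, so that $C_{ij}=\E[X_iX_j]$. Since $X$ is a zero-mean Gaussian vector (possibly with singular $C$), its moment generating function is finite on all of $\R^4$ and equals
\[
M(t)=\E\brk{e^{t^\top X}}=\exp\prn{\tfrac12 t^\top C t},\qquad t\in\R^4 .
\]

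\textbf{Step 1: the fourth moment is a derivative of $M$.} Because $M$ is finite everywhere, $M$ is real-analytic. Moreover, $e^{t^\top X}$ together with all its polynomial multiples is dominated, uniformly for $t$ in a neighbourhood of $0$, by an integrable function of the form $\prod_i (1+|X_i|)^4 e^{\sum_i|X_i|}$. This dominated convergence argument justifies differentiating under the expectation, giving
\[
\E[X_1X_2X_3X_4]=\frac{\partial^4 M}{\partial t_1\partial t_2\partial t_3\partial t_4}(0),
\]
which is the coefficient of the monomial $t_1t_2t_3t_4$ in the power series of $M$ about $0$. This is the only analytic point in the argument, and it is routine.

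\textbf{Step 2: locate the monomial in the series.} Set $q(t)=t^\top C t=\sum_{i,j}C_{ij}t_it_j$, a homogeneous quadratic. Then
\[
M(t)=\sum_{m\ge0}\frac{q(t)^m}{2^m m!}
\]
and $q^m$ is homogeneous of degree $2m$. The degree-four monomial $t_1t_2t_3t_4$ can therefore only come from the $m=2$ term, $q^2/8$.

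\textbf{Step 3: count.} Expand
\[
q^2=\sum_{i,j,k,l}C_{ij}C_{kl}\,t_it_jt_kt_l .
\]
The monomial $t_1t_2t_3t_4$ arises exactly from the $24$ ordered tuples $(i,j,k,l)$ that are permutations of $(1,2,3,4)$. Each such tuple contributes $C_{ij}C_{kl}$, which corresponds to one of the three pair partitions $\{\{1,2\},\{3,4\}\}$, $\{\{1,3\},\{2,4\}\}$, $\{\{1,4\},\{2,3\}\}$.

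By symmetry of $C$, each partition is realised by exactly $8$ tuples: $2$ choices for which pair comes first, times $2\cdot2$ internal orderings of the two pairs. Hence the coefficient of $t_1t_2t_3t_4$ in $q^2$ is $8(C_{12}C_{34}+C_{13}C_{24}+C_{14}C_{23})$. Dividing by $8$ yields
\[
\E[X_1X_2X_3X_4]=C_{12}C_{34}+C_{13}C_{24}+C_{14}C_{23},
\]
which is the claim.

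The combinatorial count in Step 3 is the only place where care is needed, and it is elementary.
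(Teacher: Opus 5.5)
Your proof is correct. The fourth mixed partial of $M(t)=\exp(\tfrac12 t^\top C t)$ at $0$ is exactly the coefficient of $t_1t_2t_3t_4$, since the multi-index factorial is $1$. Only the $q^2/8$ term can contribute, and each of the three pairings is realised by $2\cdot 2\cdot 2=8$ ordered tuples, so dividing by $8$ gives the identity.

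The paper gives no proof of this lemma; it simply quotes Isserlis (1918). It uses the lemma only in the proof of Proposition~\ref{prop:bayes-train-informal}, to compute fourth moments of coordinates $(Z_i,Z_j,Z_k,Z_l)$ of a standard Gaussian vector with possibly repeated indices. Your explicit allowance for a singular covariance $C$ is what makes the lemma applicable there: a vector such as $(Z_1,Z_1,Z_2,Z_2)$ has degenerate covariance, and the MGF formula you start from still holds for it.

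The other standard route is Gaussian integration by parts, $\E[X_1 f(X)]=\sum_j C_{1j}\,\E[\partial_j f(X)]$, valid for any centred Gaussian vector by writing $X=AZ$. Taking $f=X_2X_3X_4$ gives the three terms in one line. Your generating-function argument is a little longer, but it extends verbatim to the full Wick formula for $\E[X_1\cdots X_{2k}]$, which is how Isserlis's theorem is usually stated. There only $q^k/(2^k k!)$ contributes, and each perfect matching is realised by exactly $2^k k!$ ordered tuples.
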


The next result is \citeauthor{stam1959some}'s Fisher information inequality~\citep{stam1959some} for sums of independent random variables, based on \citeauthor{shannon1948mathematical}'s entropy power inequality~\citep{shannon1948mathematical}. While \citeauthor{stam1959some}'s original result is stated for scalar random variables, a vector-valued extension can be found in~\cite[Thm.~13]{dembo2002information}. We state an equivalent version given 
in \cite[Eq.~(37)]{dembo2002information} as follows.

\begin{lemma}[Fisher information inequality] \label{lem:FII} For two independent vectors $X, Y \in \R^n$ with densities $p, q$, let $Z = X + Y$ with density $p*q$. The following inequality holds whenever all quantities are well-defined:
\begin{align*}
    \frac{1}{\E_{Z} \brk{\norm{\nabla \log (p*q)(Z)}^2}} \geq \frac{1}{\E_{X} \brk{\norm{\nabla \log p(X)}^2}} + \frac{1}{\E_{Y} \brk{\norm{\nabla \log q(Y)}^2}}.
\end{align*}
\end{lemma}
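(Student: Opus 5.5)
The plan is the classical conditional-expectation (projection) argument of Stam and Blachman, carried out in $\R^n$. Write $s_X = \nabla\log p$, $s_Y = \nabla \log q$ and $s_Z = \nabla \log(p*q)$, and set $J_X = \E\brk{\norm{s_X(X)}^2}$, $J_Y = \E\brk{\norm{s_Y(Y)}^2}$, $J_Z = \E\brk{\norm{s_Z(Z)}^2}$. If $J_X$ or $J_Y$ is infinite, the corresponding reciprocal is zero and the claim reduces to the one-sided bound below, so assume both are finite and positive.

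\textbf{Step 1: the score of the sum is a conditional expectation.} Differentiate $(p*q)(z) = \int p(x) q(z-x)\,\de x$ under the integral sign, putting the derivative on $p$. This gives
\[\nabla (p*q)(z) = \int \nabla p(x)\, q(z-x)\,\de x.\]
Dividing by $(p*q)(z)$ yields $s_Z(Z) = \E\brk{s_X(X)\mid Z}$. Putting the derivative on $q$ instead gives, by symmetry, $s_Z(Z) = \E\brk{s_Y(Y)\mid Z}$. Hence for any $a,b$ with $a+b=1$,
\[s_Z(Z) = \E\brk{a\, s_X(X) + b\, s_Y(Y)\mid Z}.\]

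\textbf{Step 2: conditional Jensen and independence.} By conditional Jensen applied to $\norm{\cdot}^2$,
\[J_Z \le \E\brk{\norm{a s_X(X) + b s_Y(Y)}^2} = a^2 J_X + b^2 J_Y + 2ab\,\E\brk{s_X(X)}^\top \E\brk{s_Y(Y)},\]
where the expectation of the cross term factorizes because $X$ and $Y$ are independent. Each score has mean zero, since $\E\brk{s_X(X)} = \int \nabla p = 0$, so the cross term vanishes.

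\textbf{Step 3: optimize over $a$.} Choose $a = J_Y/(J_X+J_Y)$ and $b = J_X/(J_X+J_Y)$. Then
\[a^2 J_X + b^2 J_Y = \frac{J_X J_Y}{J_X + J_Y},\]
so $J_Z \le J_XJ_Y/(J_X+J_Y)$. Taking reciprocals gives exactly $1/J_Z \ge 1/J_X + 1/J_Y$.

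\textbf{Main obstacle: technical justifications.} The algebra above is routine; the delicate points are regularity conditions, which is what ``whenever all quantities are well-defined'' is meant to cover.
\begin{enumerate}
\item Step 1 needs differentiation under the integral. I would justify it by noting that $\int \norm{\nabla p(x)}\, q(z-x)\,\de x$ is finite for almost every $z$, because $\int\norm{\nabla p} = \E\norm{s_X(X)} \le \sqrt{J_X} < \infty$ and $q$ is integrable.
\item Step 2 needs $\int \nabla p = 0$. This holds because $p$ is differentiable with $\nabla p$ integrable, so $p$ is absolutely continuous along lines; together with $p \in L^1$, this forces the boundary contributions to vanish.
\end{enumerate}
If these fail, I would fall back on the standard device of first smoothing $X$ and $Y$ by a small independent Gaussian, for which all steps are clean, and then passing to the limit using lower semicontinuity of Fisher information.
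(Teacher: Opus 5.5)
Your proof is correct. It is the classical Stam--Blachman argument: write the score of the sum as the conditional expectation of either summand's score, apply conditional Jensen with the mean-zero cross term, then optimize the weight. That is the standard proof of the result, which the paper does not prove itself but simply cites from Stam and Dembo--Cover--Thomas.

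The only soft spot is your justification in Step~1. Almost-everywhere finiteness of $\int \norm{\nabla p(x)}\,q(z-x)\,\de x$ does not by itself license pointwise differentiation under the integral. You can close this by verifying $\nabla(p*q)=(\nabla p)*q$ in the weak sense (Fubini plus integration by parts against test functions), which suffices under the lemma's ``well-defined'' proviso.
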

In the same paper~\citep{stam1959some}, \citeauthor{stam1959some} provides a lower bound for the Fisher information by the entropy power of a random variable, sharpening the Cram\'{e}r-Rao bound by the variance. The result, often referred to as entropic isoperimetric inequality, is then later extended into the vector form. Here we present the version of the result stated in~\cite[Thm.~16]{dembo2002information}.
\begin{lemma}[Entropic isoperimetric inequality] \label{lem:EII} For a random vector $X \in \R^n$ with density $p$ and for which the Fisher information exists,
\begin{align*}
    \frac{1}{n} \E_{X} \brk{\norm{\nabla \log p(X)}^2} \geq 2\pi e \cdot \exp \brc{-\frac{2}{n} \E_{X} \brk{-\log p(X)}}.
\end{align*}   
The equality holds iff $X \sim \normal(0, \nu^2 I_n)$ for some $\nu^2 > 0$.
\end{lemma}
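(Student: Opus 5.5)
The plan is to derive the inequality from the entropy power inequality (EPI) by differentiating along the heat flow, which is Stam's original route. Write $h(X)=\E_X[-\log p(X)]$ for the differential entropy, $N(X)=\frac{1}{2\pi e}\exp\{\frac{2}{n}h(X)\}$ for the entropy power, and $\jfunc(X)=\E_X[\norm{\nabla\log p(X)}^2]$. With this notation the claim is exactly
\[
\frac{1}{n}\jfunc(X)\,N(X)\geq 1 .
\]
For $t\geq 0$ let $X_t=X+\sqrt{t}Z$, where $Z\sim\normal(0,I_n)$ is independent of $X$.

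\textbf{Step 1 (EPI).} Since $N(\sqrt t Z)=t$, Shannon's EPI gives $N(X_t)\geq N(X)+t$ for all $t>0$. Hence
\[
\liminf_{t\to0^+}\frac{N(X_t)-N(X)}{t}\geq 1 .
\]

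\textbf{Step 2 (de Bruijn's identity).} By de Bruijn, $\frac{\de}{\de t}h(X_t)=\frac12\jfunc(X_t)$. Right-continuity of $t\mapsto \jfunc(X_t)$ at $t=0$, under the assumption that $\jfunc(X)$ exists and is finite, gives the right derivative at $0$:
\[
\frac{\de}{\de t}N(X_t)\Big|_{t=0^+}=N(X)\cdot\frac{2}{n}\cdot\frac12\jfunc(X)=\frac{1}{n}N(X)\jfunc(X).
\]
Combining with Step 1 yields $\frac1n\jfunc(X)N(X)\geq1$. Rearranging gives $\frac1n\jfunc(X)\geq 2\pi e\exp\{-\frac2n h(X)\}$, which is the claim.

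\textbf{Step 3 (main obstacle: regularity).} The delicate part is justifying de Bruijn's identity at $t=0$ and the continuity of $\jfunc(X_t)$ as $t\to0^+$ when only differentiability of $p$ and finiteness of $\jfunc(X)$ are assumed. I would argue as follows:
\begin{itemize}
\item Stam's inequality, Lemma~\ref{lem:FII}, gives $\jfunc(X_t)\leq\jfunc(X)$.
\item Lower semicontinuity of Fisher information under weak convergence gives $\liminf_{t\to0}\jfunc(X_t)\geq\jfunc(X)$.
\item Together these give continuity. Integrating de Bruijn over $(0,t)$, where it holds rigorously because $X_t$ is smooth, then gives $h(X_t)-h(X)=\frac12\int_0^t\jfunc(X_s)\,\de s$. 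The right derivative at $0$ follows, provided $h(X_t)\to h(X)$; this I would verify separately using finite second moments.
\end{itemize}

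\textbf{Step 4 (equality case).} If $X\sim\normal(0,\nu^2 I_n)$, then $\jfunc(X)=n/\nu^2$ and $N(X)=\nu^2$, so equality holds by direct computation. For the converse I would use Costa's concavity of $t\mapsto N(X_t)$, which follows from Lemma~\ref{lem:FII} together with de Bruijn. By Step 2, equality means the right derivative at $0$ equals $1$. Concavity then gives $N(X_t)\le N(X)+t$, and combined with the EPI this forces $N(X_t)=N(X)+t$ for all $t>0$. So equality holds in the EPI for the pair $(X,\sqrt t Z)$. The equality case of the EPI requires $X$ to be Gaussian with covariance proportional to that of $Z$, i.e.\ $X\sim\normal(\mu,\nu^2I_n)$. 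Under the paper's mean-zero convention, $\mu=0$.
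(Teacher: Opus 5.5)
The paper gives no proof of its own here. It cites Stam and Dembo--Cover--Thomas (Thm.~16), and your argument is precisely the classical derivation behind that citation, so the proposal is correct: the EPI bound $N(X+\sqrt{t}Z)\geq N(X)+t$ differentiated at $t=0^+$ via de Bruijn's identity, plus concavity of $t\mapsto N(X_t)$ (which does follow from Stam's inequality and de Bruijn) and the EPI equality case for the converse.

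The only hypothesis you add is finite second moments in Step~3, used to get de Bruijn's identity and $h(X_t)\to h(X)$. The lemma as stated does not impose this, though it holds wherever the paper applies the lemma; the fully general case follows, e.g., from Carlen's Euclidean log-Sobolev inequality applied to $f=\sqrt{p}$. Also, as you note, the equality case is really $\normal(\mu,\nu^2 I_n)$ for arbitrary $\mu$.
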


The following lemmas cover several classical results concerning high-dimensional sample covariance matrices within the framework of random matrix theory that will be useful in our analyses for specific examples in Sec.~\ref{sec:applications}. These include (i) the Bai–Yin law, which characterizes the asymptotic behavior of the largest and smallest eigenvalues, and (ii) the Marchenko–Pastur law, which describes the limiting empirical spectral distribution. We describe our results in a more general setting than Assumption~\ref{assmp:proportional-asymptotics}, relaxed to also incorporate the underparameterized regime $\gamma \in (0, +\infty)$. We also assume the data features $X=Z\Sigma^{\half}$ such that $Z$ has i.i.d.\ entries with zero mean and unit variance and bounded fourth-moment, and we let $\Sigma= \Sigma(n)$ form a sequence of p.s.d.\ matrices. Let $\what{S}=XX^\top / d \in \R^{n \times n}$ and denote by the nonzero eigenvalues of $\what{S}$ by $\lambda_1 \geq \cdots \lambda_{n \vee d} > 0 = \lambda_{n \vee d + 1} = \cdots = \lambda_n$ and the empirical spectrum distribution (e.s.d.)\ by $\mu_n(\de \lambda) = \frac{1}{n} \sum_{i=1}^n \delta_{\lambda_i}$, we have the standard Bai-Yin and Marchenko-Pastur theorems for $\Sigma = I$ (cf.~\cite[Thm.~3.6 and Thm.~5.10]{bai2010spectral}).
\begin{lemma}[Marchenko-Pastur and Bai-Yin laws] \label{lem:MP-BY-law}
    Let $d/n \to \gamma \in (0, +\infty)$. The largest and smallest nonzero eigenvalues of $\what{S}$ satisfies the Bai-Yin theorem:
    \begin{align*}
        \lambda_1 \cas  \prn{1 + \gamma^{-\half}}^2  =: \lambda^+_\gamma, \qquad \lambda_n \cas  \prn{1 - \gamma^{-\half}}^2 =:\lambda^-_\gamma.
    \end{align*}
    With probability one, the e.s.d.\ $\mu_n$ of $\what{S}$ converges weakly to the Marchenko-Pastur law:
    \begin{align*}
        \mu_n(\de \lambda) \cd \mu_{\mathsf{MP}, \gamma}(\de \lambda) = (1 - \gamma)_+ \delta_0 + \frac{\gamma}{2\pi} \cdot \frac{\sqrt{(\lambda^+_\gamma-\lambda)(\lambda - \lambda^-_\gamma)}}{\lambda} \ind_{\lambda^-_\gamma \leq \lambda \leq \lambda^+_\gamma} \de \lambda 
    \end{align*}
\end{lemma}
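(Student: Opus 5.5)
The plan is to reduce the statement directly to the classical theorems in \cite[Thm.~3.6 and Thm.~5.10]{bai2010spectral} by matching the normalization. Those theorems concern a sample covariance matrix $S_N = \frac{1}{N} Y Y^\top$, where $Y\in\R^{p\times N}$ has i.i.d.\ entries with mean zero, unit variance and (for the extreme eigenvalues) finite fourth moment, and $p/N \to y\in(0,\infty)$. I take $Y = X$, so $p=n$, $N=d$, and $\what{S} = XX^\top/d = S_N$ with ratio $y = \lim n/d = \gamma^{-1}$. Since the entries of $X(n)$ are the top-left $n\times d$ block of a single infinite i.i.d.\ array, this is exactly the nested-array setting in which those almost-sure statements are proved. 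No triangular-array extension is needed.

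First I would check the Marchenko--Pastur part. The limiting law for ratio $y$ has edges $(1\pm\sqrt{y})^2 = (1\pm\gamma^{-1/2})^2 = \lambda^\pm_\gamma$. Its point mass at zero is $(1-1/y)_+ = (1-\gamma)_+$. Its density is $\frac{1}{2\pi x y}\sqrt{(\lambda^+_\gamma - x)(x-\lambda^-_\gamma)} = \frac{\gamma}{2\pi x}\sqrt{(\lambda^+_\gamma - x)(x-\lambda^-_\gamma)}$. This matches the displayed $\mu_{\mathsf{MP},\gamma}$ term by term. The almost sure weak convergence of $\mu_n$ then follows from \cite[Thm.~3.6]{bai2010spectral}, which needs only the second-moment assumption.

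Next comes the Bai--Yin part. For $\lambda_1$, \cite[Thm.~5.8/5.10]{bai2010spectral} gives $\lambda_{\max}(S_N)\to(1+\sqrt y)^2$ almost surely, and the finite fourth moment is exactly the hypothesis required there. For the smallest nonzero eigenvalue, I split into two cases.
\begin{itemize}
\item If $\gamma>1$ (i.e.\ $y<1$), $\what{S}$ is almost surely of full rank $n$. Bai--Yin gives $\lambda_{\min}(S_N)\to(1-\sqrt y)^2$ directly.
\item If $\gamma<1$, I pass to the $d\times d$ companion matrix $X^\top X/n$. It has the same nonzero eigenvalues as $\frac{d}{n}\what{S}$, and its ratio is $d/n\to\gamma<1$, so its smallest eigenvalue converges to $(1-\sqrt\gamma)^2$. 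Rescaling by $n/d\to\gamma^{-1}$ gives $\gamma^{-1}(1-\sqrt\gamma)^2 = (1-\gamma^{-1/2})^2$, and similarly for the largest.
\end{itemize}
The boundary case $\gamma=1$ gives limit $0$, which is consistent with $\lambda^-_1=0$.

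The main obstacle is purely bookkeeping. I need to confirm that the cited theorems are stated for the nested (single infinite array) almost-sure setting rather than only in probability. I also need to track the index convention for the smallest nonzero eigenvalue (the rank is $n\wedge d$) through the transposition. No new analytic argument should be required.
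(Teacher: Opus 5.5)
Your proposal is correct and is essentially the paper's own argument. The paper just invokes \cite[Thm.~3.6 and Thm.~5.10]{bai2010spectral}, and your work fills in what that citation leaves implicit: matching the aspect ratio $y=\lim n/d=\gamma^{-1}$ (edges, atom $(1-\gamma)_+$, density), and treating the smallest nonzero eigenvalue for $\gamma<1$ via the companion matrix.

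\textbf{Two small loose ends.}

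The boundary case $\gamma=1$ is not covered by the cited smallest-eigenvalue theorem, which needs ratio in $(0,1)$. Your line that it ``gives limit $0$'' needs a reason, but one line suffices from the Marchenko--Pastur part. The law $\mu_{\mathsf{MP},1}$ charges every $[0,\epsilon]$, while $\mu_n(\{0\})\to 0$ by the portmanteau theorem. So eventually some nonzero eigenvalue lies below $\epsilon$.

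For $\gamma>1$, you do not need $\widehat{S}$ to have full rank for every $n$, and with discrete entries this can fail with positive probability. The positive Bai--Yin limit already makes $\lambda_n$ eventually nonzero.
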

We then consider the case when $\Sigma$ is a deterministic sequence of p.s.d.\ matrices whose e.s.d.\ converges. In particular, let $\sigma_1 \geq \cdots \geq \sigma_d \geq 0$ denote the eigenvalues of $\Sigma$ and $\nu_n(\de \lambda) = \frac{1}{d} \sum_{i=1}^d \delta_{\sigma_i}$. To facilitate the statement, we introduce the Stieltjes transform $m_{\mu}(z): \mathbb{C}_+ \to \mathbb{C}_+$ of a probability measure $\mu$ on $\R$, where $\mathbb{C}_+=\{u+iv \mid v>0\}$. $m_{\mu}(z)$ is then
\begin{align} \label{eq:Stieltjes-transform}
    m_{\mu}(z) = \int \frac{1}{\lambda - z} \mu(\de \lambda).
\end{align}
It is well-defined everywhere on $\C_+$ since $|(\lambda-z)^{-1}| = |(\lambda - u) + iv|/((\lambda-u)^2 + v^2) \leq 1/|\Im z|$. The Stieltjes transform uniquely determines any finite signed measure on $\R$, see the inversion formula in~\cite[Thm.~B.8]{bai2010spectral} to recover $\mu$ for $m_{\mu}(z)$. The following generalized version of Lemma~\ref{lem:MP-BY-law} then holds (cf.~\cite[Thm.~4.3 and Thm.~6.3]{bai2010spectral}).
\begin{lemma}[Generalized Marchenko-Pastur and Bai-Yin laws] \label{lem:generalized-MP-BY-law}
    Let $d/n \to \gamma \in (0, +\infty)$, $\Sigma_n$ have bounded spectral norm as $n\to+\infty$, and assume $\nu_n \cd \nu$ with bounded support. With probability one, the e.s.d.\ $\mu_n$ of $\what{S}$ converges weakly to $\mu$ determined by $m_{\mu}(z)$ from the following fixed-point equation on $\C_+$:
    \begin{align*}
        m_\mu(z) = - \prn{z-\int \frac{\lambda}{1 + \lambda m_\mu(z) \gamma^{-1}} \nu(\de \lambda)}^{-1}.
    \end{align*}
    In addition when $\gamma > 1$, if the smallest and largest eigenvalues of $\Sigma_n$ converge to the smallest and largest numbers in the support of $\nu$, then $\lambda_1$ and $\lambda_n$ converge almost surely to the smallest and largest numbers in the support of $\mu$.
\end{lemma}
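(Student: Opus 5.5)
This lemma is a classical result cited from \cite[Thm.~4.3 and Thm.~6.3]{bai2010spectral}, so in the paper we would simply invoke it. To reconstruct it, I would use the Stieltjes transform method for the weak convergence and the Bai--Silverstein ``no eigenvalues outside the support'' argument for the extreme eigenvalues.

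\textbf{Step 1 (reduction).} First truncate, centralize and renormalize the entries of $Z$ at level $\delta_n\sqrt{n}$ with $\delta_n\to0$. The bounded fourth moment together with a rank inequality for the e.s.d.\ shows that this changes $\mu_n$ by $o(1)$ in L\'evy distance almost surely. By bounded spectral norm of $\Sigma_n$ and a perturbation inequality, we may also assume $\nu_n$ is supported in a fixed compact set. Weak convergence of $\mu_n$ then reduces to showing that $m_{\mu_n}(z)\to m_\mu(z)$ almost surely for each fixed $z\in\C_+$; by a diagonal argument over a countable dense set this gives the convergence simultaneously for all $z$.

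\textbf{Step 2 (approximate fixed-point equation).} Write $\what S=\frac1d Z\Sigma Z^\top$ and set $R(z)=(\what S-zI_n)^{-1}$, so that $m_{\mu_n}(z)=\frac1n\Tr R(z)$. Two tools drive this step.
\begin{itemize}
\item \emph{Concentration.} A martingale-difference decomposition along the rows of $Z$, combined with Burkholder's inequality and the bound $|(\lambda-z)^{-1}|\le 1/\Im z$, gives $m_{\mu_n}(z)-\E m_{\mu_n}(z)\to0$ almost surely, with fourth-moment control summable in $n$.
\item \emph{Companion resolvent.} It is convenient to pass to the $d\times d$ companion matrix $\frac1d\Sigma^{1/2}Z^\top Z\Sigma^{1/2}=\frac1d\sum_i \Sigma^{1/2}z_iz_i^\top\Sigma^{1/2}$, where the $z_i$ are the rows of $Z$. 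Its Stieltjes transform is an explicit affine function of $m_{\mu_n}$, because the nonzero spectra coincide up to the factor $n/d$.
\end{itemize}
I would then apply the Sherman--Morrison identity to remove one rank-one term at a time. The quadratic forms $\frac1d z_i^\top\Sigma^{1/2}R_{(i)}\Sigma^{1/2}z_i$ concentrate around $\frac1d\Tr(\Sigma R_{(i)})$, using the trace lemma $\E|w^\top Aw-\Tr A|^2\lesssim \|A\|_F^2$. Replacing $R_{(i)}$ by the full resolvent costs $O(1/(d\,\Im z))$. This yields the approximate equation $m_{\mu_n}(z)=-\bigl(z-\int\frac{\lambda}{1+\lambda m_{\mu_n}(z)\gamma^{-1}}\nu_n(\de\lambda)\bigr)^{-1}+o(1)$.

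\textbf{Step 3 (uniqueness and limit).} Any subsequential limit $m$ of the tight family $m_{\mu_n}(z)$ solves the stated fixed-point equation, since $\nu_n\to\nu$ and the integrand is bounded on $\C_+$. I would show that the equation has a unique solution in $\C_+$ by taking imaginary parts of the difference of two solutions: this gives a contraction factor strictly less than one, as in Silverstein--Bai. Hence $m_{\mu_n}\to m_\mu$, and the Stieltjes inversion formula gives $\mu_n\Rightarrow\mu$.

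\textbf{Step 4 (extreme eigenvalues; main obstacle).} For $\gamma>1$ the matrix $\what S$ has no zero eigenvalues, and I would show that almost surely no eigenvalues lie outside $[a-\epsilon,b+\epsilon]$, where $[a,b]$ is the support of $\mu$. The plan is to refine Step 2 to a rate $\E m_{\mu_n}(z)-m^0_n(z)=O(1/n^2)$ uniformly for $z=x+iv_n$ with $v_n\asymp n^{-1/68}$ and $x$ outside the support, where $m^0_n$ solves the fixed-point equation with $\nu_n$ in place of $\nu$. I would then use the moment method on the truncated matrix for $\|\what S\|$, and the convergence of the extreme eigenvalues of $\Sigma_n$ to the endpoints of the support of $\nu$, so that the support of $\mu$ computed from $\nu_n$ matches that from $\nu$. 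Weak convergence already forces $\liminf\lambda_1\ge b$ and $\limsup\lambda_n\le a$, so this step yields the stated limits. The hard part is this sharp $O(n^{-2})$ bound near the real axis, which requires high moments of the martingale differences and is genuinely technical. Here I would defer to \cite[Thm.~6.3]{bai2010spectral} rather than reproduce it.
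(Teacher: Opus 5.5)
Your proposal is consistent with the paper. The paper gives no proof of its own and simply cites \cite[Thm.~4.3 and Thm.~6.3]{bai2010spectral}. Your Steps 1--3 are the standard Stieltjes-transform argument behind that citation; the equation you get via the $d\times d$ companion and Sherman--Morrison does reduce to the stated one. For the extreme eigenvalues you defer to the same theorem the paper uses.

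One concrete claim in Step 4 is false, though, and should be dropped from the plan. The bias bound $\E m_{\mu_n}(z)-m_n^0(z)=O(n^{-2})$ fails in this real-valued setting.
\begin{itemize}
\item Already for Gaussian entries and $\Sigma=I_d$, one computes exactly $\frac1n\E\Tr(\what S^2)=1+\frac nd+\frac1d$. The second moment of the Marchenko--Pastur deterministic equivalent is only $1+\frac nd$.
\item Correspondingly, $n(\E m_{\mu_n}-m_n^0)$ converges to a nonzero function analytic off the support. This is the mean term in the Bai--Silverstein CLT for linear spectral statistics. So the bias is of order $1/n$, not $n^{-2}$.
\end{itemize}

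Nor is $O(n^{-2})$ needed. What the argument requires at $z=x+iv_n$ is $\sup_x|m_{\mu_n}-m_n^0|=o(1/(nv_n))$ almost surely. Bai--Silverstein obtain this from two pieces: an $O(1/n)$ bias bound, and an $o(1/(nv_n))$ almost-sure bound on $m_{\mu_n}-\E m_{\mu_n}$. Since $nv_n^{68}=1$, this is $o(v_n^{67})$. Taken at the heights $\sqrt{k}\,v_n$, $k=1,\dots,34$, it is exactly what the divided-difference argument needs to exclude eigenvalues from $[a,b]$.

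Three smaller slips:
\begin{itemize}
\item $[a,b]$ should be the convex hull of $\mathrm{supp}(\mu)$, which can have gaps (e.g.\ for a two-atom $\nu$).
\item $\what S$ can be singular for $\gamma>1$ if $\nu$ has a large atom at $0$. This is harmless, since $0$ then lies in $\mathrm{supp}(\mu)$.
\item The quadratic forms in Step 2 involve the leave-one-out resolvent of the $d\times d$ companion matrix, not of the $n\times n$ resolvent $R(z)$.
\end{itemize}
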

We can also efficiently compute the limit of $\lambda_1$ and $\lambda_n$ by the following lemma~\cite[Thm.~4.2]{silverstein1995analysis}.
\begin{lemma} \label{lem:edge-equations}
    Under the same setup of Lemma~\ref{lem:generalized-MP-BY-law} for $\gamma > 1$. Let $\lambda_{\mu}^+$ and $\lambda_{\mu}^-$ be the largest and smallest numbers to satisfy the following equation
    \begin{align*}
        \frac{1}{m_{\mu}(\lambda)^2} = \frac{1}{\gamma} \int \frac{\lambda^2}{(1+(\lambda / \gamma) m_{\mu}(\lambda))^2} \nu(d \lambda),
    \end{align*}
    in addition to the fixed-point equation in Lemma~\ref{lem:generalized-MP-BY-law} and $m_{\mu}(\lambda) < 0$. Then $\lambda_1 \to \lambda_\mu^+$ and $\lambda_n \to \lambda_\mu^-$ with probability one.
\end{lemma}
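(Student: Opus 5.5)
This result is the edge characterization of \cite[Thm.~4.2]{silverstein1995analysis}. My plan is to reprove it from the fixed-point equation in Lemma~\ref{lem:generalized-MP-BY-law} by studying the functional inverse of the Stieltjes transform, and then to transfer it to the extreme eigenvalues using the second half of that lemma.

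\textbf{Step 1: invert the fixed-point equation.} I would rewrite the equation of Lemma~\ref{lem:generalized-MP-BY-law} as an explicit formula for $z$ in terms of $m=m_\mu(z)$:
\[
z(m) = -\frac{1}{m} + \int \frac{\lambda}{1+\lambda m/\gamma}\,\nu(\de\lambda).
\]
This is a real-analytic function of $m$ on the set of real $m$ avoiding $0$ and the poles $\{-\gamma/\lambda : \lambda \in \operatorname{supp}\nu\}$.

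\textbf{Step 2: characterize the complement of the support.} Here I would follow the Silverstein--Choi argument. A real point $x$ lies outside $\operatorname{supp}\mu$ if and only if $x = z(m)$ for some real $m$ in this domain with $z'(m) > 0$. In that case $m_\mu$ extends analytically and real-valued across $x$, and equals the increasing local inverse of $z$.

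For the forward direction, the boundary value $m_\mu(x+i0)$ is real off the support. Differentiating $z(m_\mu(x)) = x$ gives $z'(m_\mu(x))\,m_\mu'(x) = 1$. Since $m_\mu'(x) = \int (\lambda-x)^{-2}\mu(\de\lambda) > 0$, we get $z'>0$.

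For the converse, given a real $m_0$ with $z'(m_0)>0$, I would take the local inverse of $z$ near $m_0$ and extend it to a neighborhood in $\C_+$. I would then check that it maps into $\C_+$ and still solves the fixed-point equation. Uniqueness of the $\C_+$ solution then identifies this inverse with $m_\mu$, so $m_\mu$ is real near $x_0$ and $\mu$ puts no mass there by the inversion formula.

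\textbf{Step 3: locate the edges.} Consequently, the endpoints of $\operatorname{supp}\mu$ are images of boundary points of the set $\{m : z'(m)>0\}$, that is, zeros of $z'$. Computing the derivative,
\[
z'(m) = \frac{1}{m^2} - \frac{1}{\gamma}\int \frac{\lambda^2}{(1+\lambda m/\gamma)^2}\,\nu(\de\lambda).
\]
Setting $z'(m)=0$ is exactly the displayed edge equation.

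\textbf{Step 4: select the right branch.} The sign restriction $m<0$ picks out the critical points lying on the branch relevant to the outer region. I would need to check carefully whether this is $m_\mu$ itself or the companion transform; $m_\mu$ is negative to the right of the support, which covers $\lambda_\mu^+$. Since $\gamma>1$ the limit has no atom at $0$, so a positive left edge exists. On $(0,\lambda_\mu^-)$ the sign is the main delicate point: I would verify it by tracking $z$ along the interval between the pole structure and $m=-\infty$, possibly passing to the $d\times d$ companion matrix. Among these critical values, the largest and smallest are $\lambda_\mu^+$ and $\lambda_\mu^-$.

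\textbf{Step 5: pass to the extreme eigenvalues.} Under the assumption that the extreme eigenvalues of $\Sigma_n$ converge to the endpoints of $\operatorname{supp}\nu$, Lemma~\ref{lem:generalized-MP-BY-law} gives that $\lambda_1$ and $\lambda_n$ converge almost surely to $\sup\operatorname{supp}\mu$ and $\inf\operatorname{supp}\mu$. Combining this with Step 3 yields $\lambda_1\to\lambda_\mu^+$ and $\lambda_n\to\lambda_\mu^-$.

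\textbf{Main obstacle.} I expect Steps 2 and 4 to be the hardest: proving the converse direction of the support characterization, and pinning down the sign convention so that $m<0$ selects exactly the two outer edges and not interior gap edges.
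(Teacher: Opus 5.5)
Your Steps 1--3 and 5 reproduce the Silverstein--Choi inverse-function argument that the paper simply cites without proof: invert the fixed-point equation, characterize $\mathrm{supp}(\mu)^c$ by $z'>0$, take critical values as edges, then apply the second half of Lemma~\ref{lem:generalized-MP-BY-law}.

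\textbf{Step 4 fails at the left edge.} The sign you plan to ``verify'' on $(0,\lambda_\mu^-)$ is the wrong one. For every $x<\inf\mathrm{supp}\,\mu$ the definition gives $m_\mu(x)=\int(t-x)^{-1}\mu(\mathrm{d}t)>0$, so the critical point behind the left edge is positive.

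Already for $\nu=\delta_1$ one has $z(m)=-1/m+\gamma/(\gamma+m)$, and $z'(m)=0$ iff $(\gamma+m)^2=\gamma m^2$. The only roots are $m=-\gamma/(\sqrt{\gamma}+1)$, with $z=(1+\gamma^{-1/2})^2$, and $m=\gamma/(\sqrt{\gamma}-1)>0$, with $z=(1-\gamma^{-1/2})^2$. Under the constraint $m<0$ the ``smallest'' and ``largest'' solutions both equal $\lambda_\gamma^+$. If $\mu$ has a spectral gap, the smallest one is instead an interior gap edge. So $\lambda_n\to\lambda_\mu^-$ is false as literally stated, and no argument can establish it.

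Neither of your rescues works:
\begin{itemize}
\item For $m<-\gamma/\inf\mathrm{supp}\,\nu$, every term obeys $\lambda^2(1+\lambda m/\gamma)^{-2}>\gamma^2/m^2$. Hence $z'<0$ there when $\gamma\ge 1$ (and $z<0$ there too), so this interval contributes no critical points.
\item The $d\times d$ companion transform $-(1-\gamma^{-1})/x+\gamma^{-1}m_\mu(x)$ is not the function whose inverse is $z$, so the displayed equation says nothing about it.
\end{itemize}

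What you can prove is a corrected statement. The function $m^2z'(m)=1-\gamma^{-1}\int\bigl(\lambda m/(1+\lambda m/\gamma)\bigr)^2\nu(\mathrm{d}\lambda)$ is strictly decreasing on $(0,\infty)$, from $1$ to $1-\gamma<0$ (when $\nu(\{0\})=0$). So $z'$ has a unique positive zero $m^-$, and $\lambda_\mu^-=z(m^-)$. Likewise $\lambda_\mu^+=z(m^+)$, where $m^+$ is the root on $(-\gamma/\sup\mathrm{supp}\,\nu,\,0)$; there $m^2z'$ is strictly decreasing in $|m|$.

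\textbf{Step 3 has a further hole.} A boundary point of $\{m:z'(m)>0\}$ need not be a zero of $z'$; it can be the pole $-\gamma/b$ with $b=\sup\mathrm{supp}\,\nu$. Suppose $\nu$ has no atom at $b$ and $\gamma\int\lambda^2(b-\lambda)^{-2}\nu(\mathrm{d}\lambda)\le 1$, for example a density vanishing fast at $b$ with most mass far below $b$. Then, by the monotonicity just noted, $z'>0$ on all of $(-\gamma/b,0)$. The right edge of $\mu$ is then $z(-\gamma/b^{+})$, and the edge equation has no root on that interval.

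So your argument, and the lemma itself, needs an extra hypothesis such as $\int\lambda^2(b-\lambda)^{-2}\nu(\mathrm{d}\lambda)=\infty$. This holds automatically when $\nu$ has an atom at $b$, as for the two-atom $\nu$ in the paper's application. The left edge is unaffected, since $z$ has no poles on $(0,\infty)$.
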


\section{Proofs for Section~\ref{sec:main}}

\subsection{Proof of Proposition~\ref{prop:compare_to_Bayes}}
    For any estimator $\what{\theta}$,
\begin{align}
    \prerr_\Sigma(\what{\theta}) & = \E \brk{ \norm{\what{\theta} - \theta}_\Sigma^2 \mid X} = \E \brk{ \norm{\what{\theta} - \bysest+ \bysest-\theta}_\Sigma^2 \mid X} \nonumber \\
    & = \E \brk{ \norm{\what{\theta} - \bysest}_{\Sigma}^2 + \norm{\bysest -\theta}_\Sigma^2 \mid X} = \prerr_\Sigma(\bysest) + \E \brk{\norm{\what{\theta} - \bysest}_{\Sigma}^2\mid X}, \nonumber 
\end{align}
where the next-to-last step holds since $\bysest=\E\brk{\theta\mid X,y}$ and so
\[\E\brk{(\what{\theta}-\bysest)^\top\Sigma(\bysest-\theta)\mid X,y} = (\what{\theta}-\bysest)^\top\Sigma\E\brk{\bysest-\theta\mid X,y}=0.\]This verifies the equality $\prerr_\Sigma(\what{\theta}) - \prerr_\Sigma^* = \E \brk{\norm{\what{\theta} - \bysest}_{\Sigma}^2\mid X}$. On the other hand, applying the Minkowski inequality in $\mc{L}^2$ yields
\begin{multline*}
    \left|\sqrt{\trerr(\what{\theta})} - \sqrt{\trerr(\bysest)} \right| = \left|\sqrt{\frac{1}{n} \E \brk{\norm{X \what{\theta} - y}^2 \mid X}} -  \sqrt{\frac{1}{n} \E \brk{\norm{X \bysest - y}^2 \mid X}}\right| \nonumber \\
     \leq \sqrt{\frac{1}{n} \E \brk{\norm{X (\what{\theta} - \bysest)}^2\mid X}}
    \leq \norm{\Sigma^{-\half} \cdot \frac{X^\top X}{n} \cdot \Sigma^{-\half}}^{\half} \cdot \sqrt{\E \brk{\norm{\theta - \bysest}_\Sigma^2\mid X}}.
\end{multline*}

\subsection{Proof of Theorem~\ref{thm:Bayes_train_err}} \label{proof:Bayes_train_err}
We divide our proofs into three parts, addressing the general bound in Eq.~\eqref{eqn:sandwich_ineq_thm:Bayes_train_err}, the low noise asymptotic expansion in Eq.~\eqref{eqn:asymp_low_noise_thm:Bayes_train_err} and the high noise expansion in Eq.~\eqref{eqn:asymp_high_noise_thm:Bayes_train_err} respectively. The proofs, especially the first two parts, rely heavily on several remarkable results from the theory of information inequalities. We first refer the reader to the necessary backgrounds in Sec.~\ref{sec:Tweedie} in which we define the important auxiliary functions $\jfunc(\sigma^2)$ and $\tfunc(\sigma^2)$ such that $\trerr(\bysest) = \jfunc(\sigma^2) = \sigma^4 \tfunc(\sigma^2)$ and importantly $\jfunc(0) = \jfunc_\pi$, whose continuity $\jfunc(0) = \jfunc(0+)$ will be a main focus of our proof. In this proof, we will work under the notation that $t=\sigma^2$. We shall also use $\pi'$ to denote the probability distribution of $X\theta$ on $\R^n$ induced by $\pi$.

\paragraph{Part I: The general lower and upper bounds.} 
The Fisher information upper bound is a direct corollary of Stam's Fisher information inequality in Lemma~\ref{lem:FII}, applying to $\pi'_t = \pi' * \normal(0, t I_n)$ and implying $\jfunc(t)^{-1} \geq \jfunc(0)^{-1} +t$. Therefore,
\begin{align*}
    \tfunc(t) = t^2 \jfunc(t) \leq \frac{t^2}{\jfunc(0)^{-1} +t} = \frac{t^2}{\jfunc_{\pi}^{-1} +t}.
\end{align*}
The lower bound follows from the classical scalar version of Cramer-Rao bound~\cite[Chp.~2]{lehmann1998theory}. We provide a simple two-line proof. Since by Cauchy-Schwarz and integration by parts,
\begin{align*}
    (\mathsf{V}_{\pi'} + t) \jfunc(t) & = \prn{\frac{1}{n}\E_{y \sim \pi_t'} \brk{\norm{y}^2}} \cdot \prn{\frac{1}{n} \E_{y \sim \pi_t'} \brk{\norm{\nabla \log p_t(y)}^2}} \geq \prn{\frac{1}{n}\int y \cdot \nabla \log p_t(y) p_t(y) \de y }^2 \nonumber \\
    & = \prn{-\frac{1}{n}\int \underbrace{\nabla \cdot y}_{=n} p_t(y) \de y }^2 = \prn{\int  p_t(y) \de y }^2 = 1.
\end{align*}
Thus we establish the lower bound:
\begin{align*}
    \tfunc(t) = t^2 \jfunc(t) \geq \frac{t^2}{\mathsf{V}_{\pi} + t}.
\end{align*}

\paragraph{Part II: Asymptotic expansion when $\sigma^2 \to 0+$.} Since $\tfunc(t) = t^2 \jfunc(t)$, we only need to show the continuity of $\jfunc(t)$ at $0+$, i.e.\
\begin{align*}
    \lim_{t \to 0+} \jfunc(t) = \lim_{\sigma^2 \to 0+} \frac{1}{n} \cdot  \E_{y \sim \pi'_{t}} \brk{ \norm{\nabla \log  p_{t}(y)}^2} = \frac{1}{n} \cdot  \E_{y \sim \pi'} \brk{ \norm{\nabla \log  p(y)}^2} = \jfunc(0).
\end{align*}
The right continuity of $\jfunc$ at $0$, seemingly trivial at first glance, indeed requires deep technical tools from information theory. Since $p_t = p * \phi_t$, we first use \citeauthor{stam1959some}'s Fisher information inequality in Lemma~\ref{lem:FII}, which implies $\jfunc(t)^{-1} \geq \jfunc(0)^{-1} + t$ and
\begin{align} \label{eq:mid-low-noise-asymptotics-1}
    \limsup_{t \to 0+} \jfunc(t) \leq \jfunc(0).
\end{align}
It boils down to proving $\liminf_{t \to 0+} \jfunc(t) \geq \jfunc(0)$. Since $\pi_{t}'$ converges weakly to $\pi'$ trivially from $X\theta + \sqrt{t} \tau \cd X \theta$, the desired inequality holds by the lower semi-continuity of Fisher information under weak convergence. For scalar random variables, this is known in~\cite[Prop.~3.1]{bobkov2014fisher} and~\cite[P.~78]{doi:https://doi.org/10.1002/9780470434697.ch4} by the convexity of Fisher information under vague topology. This is indeed true in general dimensions, and we provide a proof in what follows for completeness, using the similar variational principle in~\cite{doi:https://doi.org/10.1002/9780470434697.ch4} and \cite[Thm.~4]{artstein2004solution}. Let $p_0 = p$, we have the following lemma.
\begin{lemma}[Variational principle of the Fisher information] \label{lem:variational-principle} Let $\mc{C}^\infty_c(\R^n; \R^n)$ be the family of all compactly supported and smooth vector fields from $\R^n$ to $\R^n$, then for all $t \geq 0$,
\begin{align*}
    \jfunc(t) = \sup _{b \in \mc{C}_c^{\infty}(\R^n; \R^n)}\left\{- \frac{2}{n} \int p_t(y) \nabla \cdot b(y) \de y- \frac{1}{n}\int p_t(y) \norm{b(y)}^2 \de y\right\} .
\end{align*}
    
\end{lemma}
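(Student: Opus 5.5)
We prove the two inequalities separately, writing $q=p_t$ (positive, smooth, with $\int \norm{\nabla q}^2/q<\infty$) and defining
\[
\Phi(b) = -\frac{2}{n}\int q\,\nabla\cdot b - \frac{1}{n}\int q\norm{b}^2 .
\]

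\emph{Upper bound.} Fix $b\in\mc{C}_c^\infty(\R^n;\R^n)$. Since $b$ is compactly supported, integration by parts gives $-\int q\,\nabla\cdot b = \int \nabla q\cdot b = \int q\,(\nabla\log q)\cdot b$. Completing the square,
\[
\Phi(b) = \frac{1}{n}\int q\norm{\nabla\log q}^2 - \frac{1}{n}\int q\norm{b-\nabla\log q}^2 \leq \jfunc(t).
\]
Hence $\sup_b\Phi(b)\le\jfunc(t)$.

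\emph{Lower bound.} It suffices to find $b_k\in\mc{C}_c^\infty$ with $\int q\norm{b_k-\nabla\log q}^2\to0$. By the identity above this forces $\Phi(b_k)\to\jfunc(t)$. Since $\nabla\log q\in L^2(q)$ and $q\,\de y$ is a finite Borel measure on $\R^n$, the space $\mc{C}_c^\infty$ is dense in $L^2(q;\R^n)$, and we may take such a sequence. Concretely, truncate to $(\nabla\log q)\ind_{\norm{y}\le R}$, which converges in $L^2(q)$ as $R\to\infty$ by dominated convergence. Then approximate this truncation on a compact set by smooth compactly supported fields, by mollification combined with Lusin-type density of continuous functions.

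\emph{Infinite Fisher information.} If $\jfunc(t)=\infty$, the supremum must also be infinite. Here integration by parts only needs $q\in W^{1,1}_{\mathrm{loc}}$, which holds because $q$ is differentiable, so $\Phi(b)=\frac1n\int(2\nabla q\cdot b - q\norm{b}^2)$. Choose $b=\chi_R\,\psi_\epsilon\ast(\nabla q/q)$, clipped in norm, where $\chi_R$ is a smooth cutoff and $\psi_\epsilon$ a mollifier. By monotone convergence, $\Phi(b)$ then tends to $\frac1n\int\norm{\nabla q}^2/q=\infty$. Alternatively, apply the finite case to truncated fields $g_M=(\nabla\log q)\ind_{\norm{\nabla\log q}\le M,\norm{y}\le R}$, for which $\int q\,(2\nabla\log q\cdot g_M-\norm{g_M}^2)=\int q\norm{g_M}^2\uparrow\infty$.

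\emph{Main obstacle.} The main obstacle is justifying the integration by parts and the density step when $q$ is only differentiable rather than smooth. For $t>0$ this is harmless, since $p_t=p\ast\phi_t$ is $\mc{C}^\infty$ and positive. The delicate case is $t=0$, where we rely on the standing assumption that $p$ is differentiable and positive with $\nabla p$ locally integrable. Compact support of $b$ removes all boundary terms, so no decay conditions on $q$ are needed.
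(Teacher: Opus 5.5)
Your proposal is correct and is essentially the paper's argument. Both integrate by parts against the compactly supported $b$ to write the functional as $\jfunc(t)-\frac{1}{n}\int p_t\norm{b-\nabla\log p_t}^2$, and both obtain equality by making this residual small; you justify that step through density of $\mc{C}_c^\infty$ in $L^2(p_t\,\de y;\R^n)$, whereas the paper simply asserts that $\epsilon_b$ can be made arbitrarily small.

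Your separate treatment of $\jfunc(t)=\infty$ is not needed, since the paper assumes $\jfunc_\pi<\infty$ and $\jfunc(t)\le 1/t$ for $t>0$. Be aware also that mere differentiability does not give $W^{1,1}_{\mathrm{loc}}$. In the finite case, local integrability of $\nabla p$ comes instead from Cauchy--Schwarz, $\int_K\norm{\nabla p}\le(\int_K p)^{1/2}(\int_K\norm{\nabla p}^2/p)^{1/2}$.
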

\begin{proof}
    By nonnegativity
    \begin{align*}
        \frac{1}{n}\int p(y)\norm{b(y) - \nabla \log p_t(y)}^2 \de y = \epsilon_b \geq 0,
    \end{align*}
    we immediately have $\jfunc(t) \geq \epsilon_b + \frac{2}{n} \int p_t(y) b(y) \cdot \nabla \log p_t(y) \de y - \frac{1}{n}\int p_t(y) \norm{b(y)}^2 \de y$. While $b$ is smooth and compactly supported, we can integrate by parts and obtain
    \begin{align*}
        \frac{2}{n} \int p_t(y) b(y) \cdot \nabla \log p_t(y) \de y = \frac{2}{n}  \int b(y) \cdot \nabla p_t(y) \de y = -  \frac{2}{n}  \int p_t(y) \nabla \cdot b(y) \de y.
    \end{align*}
    Here $\nabla \cdot b(y) := \sum_{i=1}^n \frac{\partial}{\partial y_i} b(y)$ denotes the divergence of $b(y)$. Thus taking supremum over all $b \in \mc{C}_c^{\infty}(\R^n; \R^n)$,
    \begin{align*}
        \jfunc(t) & \geq \sup _{b \in \mc{C}_c^{\infty}(\R^n; \R^n)}\left\{ \epsilon_b - \frac{2}{n} \int p_t(y) \nabla \cdot b(y) \de y- \frac{1}{n}\int p_t(y) \norm{b(y)}^2 \de y \right\} \\
        & \geq \sup _{b \in \mc{C}_c^{\infty}(\R^n; \R^n)}\left\{- \frac{2}{n} \int p_t(y) \nabla \cdot b(y) \de y- \frac{1}{n}\int p_t(y) \norm{b(y)}^2 \de y\right\}.
    \end{align*}
    The equality holds since $\epsilon_b$ can be arbitrarily small.
\end{proof}

We are now ready to show lower semi-continuity using Lemma~\ref{lem:variational-principle}. Indeed, since $p_t(y) \de y \cd p(y) \de y$, by weak convergence, for any compactly supported smooth $b$,
\begin{align*}
    &\lim_{t \to 0+} \prn{- \frac{2}{n} \int p_t(y) \nabla \cdot b(y) \de y- \frac{1}{n}\int p_t(y) \norm{b(y)}^2 \de y} \nonumber \\
    &= - \frac{2}{n} \int p(y) \nabla \cdot b(y) \de y- \frac{1}{n}\int p(y) \norm{b(y)}^2 \de y.
\end{align*}
Thus
\begin{align*}
    \liminf_{t \to 0+} \jfunc(t) \geq - \frac{2}{n} \int p(y) \nabla \cdot b(y) \de y- \frac{1}{n}\int p(y) \norm{b(y)}^2 \de y.
\end{align*}
Taking supremum on both sides for $b \in \mc{C}_c^{\infty}(\R^n; \R^n)$ yields
\begin{align} \label{eq:mid-low-noise-asymptotics-2}
    \liminf_{t \to 0+} \jfunc(t) \geq \jfunc(0).
\end{align}
We show $\lim_{t \to 0+} \jfunc(t) = \jfunc(0)$ combining Eqs.~\eqref{eq:mid-low-noise-asymptotics-1} and \eqref{eq:mid-low-noise-asymptotics-2}.

\paragraph{Part III: Asymptotic expansion when $\sigma^2 \to +\infty$.} By the definition of $\bysest = \E[\theta \mid X ,y]$, we can make use of the following Pythagorean theorems
\begin{align*}
    \E_{y \sim \pi_{t}'} \brk{\norm{X \bysest - y}^2 \mid X} & = \E_{y \sim \pi_{t}'} \brk{\E_{\theta \mid X, y} \brk{\norm{X \theta - y}^2 - \norm{X (\theta - \bysest)}^2 \mid X, y}}, \\
    \E_{\theta \mid X, y} \brk{\norm{X (\theta - \bysest)}^2 \mid X, y} & = \E_{\theta \mid X, y} \brk{\norm{X \theta}^2 \mid X, y}  - \norm{X \bysest}^2,
\end{align*}
and obtain that
\begin{align}
    \tfunc(t) & = \frac{1}{n}\E_{y \sim \pi_{t}'} \brk{\norm{X \bysest - y}^2 \mid X} \nonumber \\
    & = \frac{1}{n} \E_{y \sim \pi_{t}'} \brk{\E_{\theta \mid X, y} \brk{\norm{X \theta - y}^2 - \norm{X \theta}^2 + \norm{X \bysest}^2\mid X, y}}  \nonumber \\
    & = \frac{1}{n} \E_{\theta \sim \pi, \tau \sim \normal(0, I_n), y=X\theta + \sigma\tau} \brk{\norm{X \theta -y}^2} - \frac{1}{n} \E_{\theta \sim \pi} \brk{\norm{X\theta}^2 \mid X} + \frac{1}{n} \E_{y \sim \pi_{t}'} \brk{\norm{X \bysest}^2 \mid X} \nonumber  \\
    & = t - \frac{1}{n} \E_{\theta \sim \pi, \tau \sim \normal(0, I_n), y=X\theta + \sigma\tau}  \brk{\norm{X (\theta - \bysest)}^2  \mid X} \nonumber  \\
    & = t - \frac{1}{n} \cdot \E \brk{\norm{X \theta}^2 \mid X}+ \frac{1}{n} \E\brk{\norm{X \bysest}^2  \mid X}. \label{eq:mid-high-noise-asymptotics-1} 
\end{align}
In the preceding displays, we marginalize over $y$ and $\theta$, using $y=X\theta + \sigma \tau$. The last two lines naturally give general lower and upper bounds as
\begin{align*}
    t - \mathsf{V}_{\pi} = t - \frac{1}{n} \cdot \E \brk{\norm{X \theta}^2 \mid X} \leq \tfunc(t) \leq t.
\end{align*}
Next, we derive the high noise asymptotics when $t \to +\infty$ based on the preceding displays. Recall that
\begin{align*}
    \bysest = \E \brk{\theta \mid X, y} = \frac{\int \theta \exp \brc{-\frac{\norm{X \theta - y}^2}{2 t}} \pi(\de \theta) }{\int \exp \brc{-\frac{\norm{X \theta - y}^2}{2 t}} \pi(\de \theta) }.
\end{align*}
We can then compute that
\begin{align*}
    &\E \brk{\bysest \bysest^\top} = \E_{y \sim \pi_{t}'}{\E \brk{\bysest \bysest^\top \mid X, y}} \nonumber \\
    &= \E_{y' \sim \pi', \tau \sim \normal(0, I_n)} \brk{ \frac{\iint \theta \tilde{\theta}^\top \exp \brc{-\frac{\norm{X \theta - y' - \sqrt{t}\tau}^2}{2 t}-\frac{\norm{X \tilde{\theta} - y' - \sqrt{t} \tau}^2}{2 t}} \pi(\de \theta)\pi(\de \tilde{\theta}) }{\prn{\int \exp \brc{-\frac{\norm{X \theta - y' - \sqrt{t} \tau}^2}{2 t}} \pi(\de \theta)}^2 } \,\,\,\Bigg| \,\,\, X}.
\end{align*}
Since $\int \norm{\theta}^2 \pi(d\theta) = \E[\norm{\theta}^2] < +\infty$, we can apply dominated convergence theorem to take limit under the integral
\begin{align*}
    & \lim_{t \to +\infty} \E_{y \sim \pi_t'}{\E \brk{\bysest \bysest^\top \mid X, y}} \\
    & = \E_{y' \sim \pi', \tau \sim \normal(0, I_n)} \brk{ \frac{\iint \theta \tilde{\theta}^\top \cdot \lim_{t \to +\infty}\exp \brc{-\frac{\norm{X \theta - y' - \sqrt{t} \tau}^2}{2 t}-\frac{\norm{X \tilde{\theta} - y' - \sqrt{t} \tau}^2}{2 t}} \pi(\de \theta)\pi(\de \tilde{\theta}) }{\prn{\int \lim_{t \to +\infty} \exp \brc{-\frac{\norm{X \theta - y' - \sqrt{t} \tau}^2}{2 t}} \pi(\de \theta)}^2 } \,\,\,\Bigg| \,\,\, X} \\
    & = \E_{\tau \sim \normal(0, I_n)} \brk{ \frac{\iint \theta \tilde{\theta}^\top \exp \brc{-\norm{\tau}^2} \pi(\de \theta)\pi(\de \tilde{\theta}) }{\exp \brc{-\norm{\tau}^2}  } \,\,\,\Bigg| \,\,\, X} = \E \brk{\theta} \E \brk{\theta}^\top = 0.
\end{align*}
Thus, $\bysest \stackrel{\mc{L}^2}{\rightarrow} 0$ by taking trace on the above limit, and
\begin{align*}
    \lim_{t \to +\infty} \frac{1}{n} \E_{y \sim \pi_t'}\brk{\norm{X \bysest}^2  \mid X} = \frac{1}{n} \Tr \prn{X \cdot \lim_{t \to +\infty} \E_{y \sim \pi_{t}'} \brk{\bysest \bysest^\top} \cdot X^\top } = 0.
\end{align*}
The last part of the proof is complete combining with Eq.~\eqref{eq:mid-high-noise-asymptotics-1}.

\subsection{Proof of Corollary~\ref{cor:trerr_too_high__thm:main}}
First, since $\frac{\sigma^4}{\jfunc_\pi^{-1}+\sigma^2} \leq \jfunc_\pi \sigma^4$,
we have
\[\trerr(\what{\theta}) \geq \jfunc_\pi\sigma^4 \ \Longrightarrow \ \trerr(\what{\theta})\geq \frac{\sigma^4}{\jfunc_\pi^{-1}+\sigma^2}.\]
Therefore, we can apply the bound~\eqref{eqn:trerr_too_high__thm:main} from Theorem~\ref{thm:main}. If $\trerr(\what{\theta}) > \jfunc_\pi\sigma^4$, then~\eqref{eqn:trerr_too_high__thm:main} immediately yields $\cost_\Sigma(\what{\theta})>0$. Moreover, if 
$\trerr(\what{\theta}) \geq C\cdot  \jfunc_\pi\sigma^4$ for some $C>1$, then by~\eqref{eqn:trerr_too_high__thm:main} we have
\begin{align*}
    \cost_\Sigma(\what{\theta})
    &\geq \lambda_\Sigma^{-1}\prn{\sqrt{\trerr(\what{\theta})}-\sqrt{\frac{\sigma^4}{\jfunc_\pi^{-1}+\sigma^2}}}^2\\
    &\geq \lambda_\Sigma^{-1}\prn{\sqrt{\trerr(\what{\theta})}-\sqrt{C^{-1}\cdot \trerr(\what{\theta})}}^2\textnormal{\quad since $\frac{\sigma^4}{\jfunc_\pi^{-1}+\sigma^2} \leq \jfunc_\pi\sigma^4\leq C^{-1}\cdot \trerr(\what{\theta})$}\\
    &=\lambda_\Sigma^{-1}(1-C^{-1/2})^2 \cdot \trerr(\what{\theta}).
\end{align*}

\subsection{Proof of Corollary~\ref{cor:trerr_too_low__thm:main}}
First, since $\frac{\sigma^4}{\mathsf{V}_\pi+\sigma^2} \geq \sigma^2 -  \mathsf{V}_\pi$,
we have
\[\trerr(\what{\theta}) \leq \sigma^2 - \mathsf{V}_\pi\ \Longrightarrow \ \trerr(\what{\theta})\leq \frac{\sigma^4}{\mathsf{V}_\pi+\sigma^2}.\]
Therefore, we can apply the bound~\eqref{eqn:trerr_too_low__thm:main} from Theorem~\ref{thm:main}. If $\trerr(\what{\theta}) <\sigma^2 -  \mathsf{V}_\pi$, then~\eqref{eqn:trerr_too_low__thm:main} immediately yields $\cost_\Sigma(\what{\theta})>0$. Moreover, if 
$\trerr(\what{\theta}) \leq \sigma^2 -  C\cdot \mathsf{V}_\pi$ for some $C>1$, then by~\eqref{eqn:trerr_too_low__thm:main} we have
\begin{align*}
    \cost_\Sigma(\what{\theta})
    &\geq \lambda_\Sigma^{-1}\prn{\sqrt{\frac{\sigma^4}{\mathsf{V}_\pi+\sigma^2}} - \sqrt{\trerr(\what{\theta})}}^2\\
    &\geq \lambda_\Sigma^{-1}\prn{\sqrt{\sigma^2 - \mathsf{V}_\pi} - \sqrt{\trerr(\what{\theta})}}^2\textnormal{\quad since $\frac{\sigma^4}{\mathsf{V}_\pi+\sigma^2} \geq \sigma^2 -  \mathsf{V}_\pi\geq \trerr(\what{\theta})$}\\
    &\geq \lambda_\Sigma^{-1}\prn{\sqrt{\sigma^2 - \mathsf{V}_\pi} - \sqrt{\sigma^2-\Delta}}^2\textnormal{\quad defining $\Delta = \sigma^2 - \trerr(\what{\theta})$}\\
    &\geq \lambda_\Sigma^{-1}\prn{\sqrt{\sigma^2 - C^{-1}\cdot \Delta} - \sqrt{\sigma^2 - \Delta}}^2\textnormal{\quad since $\trerr(\what{\theta}) \leq \sigma^2 -  C\cdot \mathsf{V}_\pi$ implies $\Delta\geq C\cdot \mathsf{V}_\pi$}\\
    &\geq \lambda_\Sigma^{-1}\prn{\frac{\Delta - C^{-1}\cdot \Delta}{2\sigma}}^2\textnormal{\quad since $\sqrt{\sigma^2-a} - \sqrt{\sigma^2-b} \geq \frac{b-a}{2\sigma}$ for $0\leq a\leq b\leq \sigma^2$}\\
    &=\frac{(4\lambda_\Sigma)^{-1}(1-C^{-1})^2}{\sigma^2} \cdot \Delta^2.
\end{align*}

\section{Proofs for Section~\ref{sec:applications}} \label{sec:proof-applications}

\subsection{Proof for the isotropic Gaussian prior} \label{proof:applications-isotropic}

We give the proof for the isotropic results first. Since $X\theta\mid X \sim \normal(0, XX^\top /d )$, we can denote by the $n$ eigenvalues of $\what{S}:=XX^\top/d$ by $\lambda_1 \geq \cdots \lambda_n > 0$. The classical results in random matrix theory enable the following almost sure limits (cf.~Lemma~\ref{lem:MP-BY-law} in Appendix~\ref{proof:technical}),
\begin{align*}
    &\lambda_1 \to  \prn{1 + \gamma^{-\half}}^2  =: \lambda^+_\gamma, \qquad \lambda_n \to   \prn{1 - \gamma^{-\half}}^2 =:\lambda^-_\gamma, \\
    &\mu_n(\de \lambda) := \frac{1}{n} \sum_{i=1}^n \delta_{\lambda_i} \cd\frac{\gamma}{2\pi} \cdot \frac{\sqrt{(\lambda^+_\gamma-\lambda)(\lambda - \lambda^-_\gamma)}}{\lambda} \ind_{\lambda^-_\gamma \leq \lambda \leq \lambda^+_\gamma} \de \lambda =: \mu_{\mathsf{MP}, \gamma}(\de \lambda). 
\end{align*}
Since $\Sigma = I$, the parameter $\lambda_\Sigma$ in Proposition~\ref{prop:compare_to_Bayes} then has the exact limit $\lambda_\Sigma = \gamma \lambda_1 \to (1 + \sqrt{\gamma})^2$. For $\mathsf{V}_{\pi} = \frac{1}{nd}\Tr(XX^\top) = \frac{1}{nd} \sum_{i=1}^n \sum_{j=1}^d X_{ij}^2$, by law of large numbers we have $\mathsf{V}_{\pi} \to 1$ with probability one. Alternatively, we can also make use of the semicircle integral $\int_a^b \sqrt{(b-\lambda)(\lambda-a)} \de \lambda = \frac{\pi(b-a)^2}{8}$ to confirm that
\begin{align*}
    \mathsf{V}_{\pi} & = \frac{1}{n}\Tr \prn{\what{S}} = \int \lambda \mu_n(\de \lambda) \to \int \lambda \mu_{\mathsf{MP}, \gamma}(\de \lambda)  = \int \frac{\gamma}{2\pi} \cdot \sqrt{(\lambda^+_\gamma-\lambda)(\lambda - \lambda^-_\gamma)}\ind_{\lambda^-_\gamma \leq \lambda \leq \lambda^+_\gamma} \de \lambda \nonumber \\
    & = \frac{\gamma}{2\pi} \cdot \frac{\pi ((1+\gamma^{-\half})^2- (1-\gamma^{-\half})^2)}{8} = 1.
\end{align*}
Finally, since $\pi' = \normal(0, \what{S})$, we can compute
\begin{align*}
    \jfunc_{\pi} & = \E \brk{\frac{1}{n}\norm{\nabla \log p(X\theta)}^2\mid X} = -\E \brk{\frac{1}{n} \Delta \log p(X\theta) \mid X} = \frac{1}{n} \Tr \prn{\what{S}^{-1}} \nonumber \\
    & = \int \frac{1}{\lambda} \mu_n(\de \lambda) \to \int \frac{1}{\lambda} \mu_{\mathsf{MP}, \gamma}(\de \lambda) = \frac{\gamma}{\gamma-1}.
\end{align*}
The last equality is from the resolvent identity at $z \to i\cdot0+$~\cite[Lemma~3.11]{bai2010spectral}.

For the general sequence of $\Sigma(n)$, we refer the reader for backgrounds in Stieltjes transform and random matrix theory to Lemmas~\ref{lem:generalized-MP-BY-law} and \ref{lem:edge-equations} that are necessary for the general version of Proposition~\ref{prop:applications-isotropic} below.

\begin{proposition} \label{prop:applications-isotropic-general-Sigma} Let Assumption~\ref{assmp:proportional-asymptotics}. Under the setups of Lemmas~\ref{lem:generalized-MP-BY-law} and \ref{lem:edge-equations}, for $\sigma^2 > 0$, we have almost surely
\begin{align*}
    &\mathsf{V}_{\pi} \to \int \lambda \mu(\de \lambda), \qquad \jfunc_{\pi} \to \int \frac{1}{\lambda} \mu(\de \lambda) \qquad \lambda_{I} \to \gamma \lambda_\mu^+,
\end{align*}
where $\mu$ is the generalized Marchenko-Pastur limit for the sequences of covariance matrices $\Sigma(n)$ whose empirical spectrum distributions $\nu_n$ converge to some $\nu$ weakly.
\end{proposition}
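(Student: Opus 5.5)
The plan is to mirror the isotropic computation from the preceding subsection and replace the Marchenko--Pastur law with its generalized version, Lemma~\ref{lem:generalized-MP-BY-law}. With $X = Z\Sigma^{\half}$ and $\theta\sim\normal(0,I_d/d)$, we have $X\theta\mid X\sim\normal(0,\what{S})$, where $\what{S}=XX^\top/d$ has eigenvalues $\lambda_1\ge\cdots\ge\lambda_n>0$. Each of the three quantities is then a functional of the e.s.d.\ $\mu_n$ or of its top edge.
\begin{itemize}
\item For the variance parameter, $\mathsf{V}_\pi=\frac1n\Tr(\what{S})=\int\lambda\,\mu_n(\de\lambda)$.
\item For the Fisher information, the Gaussian identity $\E\norm{\nabla\log p}^2=-\E\Delta\log p$ gives $\jfunc_\pi=\frac1n\Tr(\what{S}^{-1})=\int\lambda^{-1}\mu_n(\de\lambda)$.
\item For the last parameter (with $\Sigma = I$ in the definition of $\lambda_\Sigma$), $\lambda_I=\frac1n\norm{X}^2=\frac dn\lambda_1$.
\end{itemize}

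The first step is to prove the two integral limits from weak convergence $\mu_n\cd\mu$ almost surely, as given by Lemma~\ref{lem:generalized-MP-BY-law}. The integrands $\lambda$ and $1/\lambda$ are unbounded, so weak convergence alone does not suffice. I will therefore use the edge statements of Lemmas~\ref{lem:generalized-MP-BY-law} and \ref{lem:edge-equations}: almost surely, $\lambda_1\to\lambda_\mu^+<\infty$ and $\lambda_n\to\lambda_\mu^->0$. On this event, for all large $n$ the support of $\mu_n$ lies in a fixed compact interval $[\lambda_\mu^-/2,\,2\lambda_\mu^+]$. Both $\lambda$ and $1/\lambda$ are bounded and continuous there, so after modifying them outside the interval, weak convergence gives $\int\lambda\,\mu_n\to\int\lambda\,\mu$ and $\int\lambda^{-1}\mu_n\to\int\lambda^{-1}\mu$. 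Intersecting the relevant probability-one events gives almost sure convergence. For the third limit, $d/n\to\gamma$ together with $\lambda_1\to\lambda_\mu^+$ gives $\lambda_I\to\gamma\lambda_\mu^+$ directly.

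The main obstacle is to make sure the smallest eigenvalue stays bounded away from zero, so that $\jfunc_\pi$ neither blows up nor is dominated by outlier eigenvalues. This needs $\gamma>1$, which Assumption~\ref{assmp:proportional-asymptotics} provides, and it also needs $\lambda_\mu^->0$. The latter requires $\nu$ not to place mass at $0$, together with the assumption in Lemma~\ref{lem:generalized-MP-BY-law} that the extreme eigenvalues of $\Sigma_n$ converge to the edges of $\mathrm{supp}(\nu)$. I would state this as part of the imported setup. I would also confirm that $\lambda_\mu^\pm$ coincide with the edges of $\mathrm{supp}(\mu)$, via Lemma~\ref{lem:edge-equations}, so that the limits are expressed in terms of $\mu$ alone. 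As a consistency check, taking $\Sigma=I$ should recover Proposition~\ref{prop:applications-isotropic}.
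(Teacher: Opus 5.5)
Your proposal is correct and follows the paper's route. The paper simply declares the proof identical to the $\Sigma=I$ case: express $\mathsf{V}_\pi$, $\jfunc_\pi$, $\lambda_I$ as $\frac1n\Tr(\what{S})$, $\frac1n\Tr(\what{S}^{-1})$, $\frac dn\lambda_1$, then pass to the limit with Lemmas~\ref{lem:generalized-MP-BY-law} and~\ref{lem:edge-equations}. Your truncation to a compact interval via edge convergence is exactly the justification the paper spells out in the proof of Proposition~\ref{prop:applications-perturbed-low-rank-positive-eta}.

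One minor point: what you actually need to import is $\lambda_\mu^->0$. This holds whenever $\nu(\{0\})<1-1/\gamma$, as in the $\eta=0$, $\rho>1$ low-rank case, so you do not need to exclude an atom of $\nu$ at $0$.
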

The proof for Prop.~\ref{prop:applications-isotropic-general-Sigma}, is identical to the preceding proof for $\Sigma = I$, and thus we omit the repetitive details. For $\lambda_\Sigma$, as it equals to $\frac{1}{n} \norm{\Sigma^{-\half} \cdot X^\top X \Sigma^{-\half}} = \frac{1}{n} \norm{Z^\top Z}$, the isotropic result applies and $\lambda_\Sigma \to (1+\sqrt{\gamma})^2$.

\subsection{Proofs for the approximately-low-rank Gaussian prior} \label{proof:applications-perturbed-low-rank}

\subsubsection{Proof of Proposition~\ref{prop:applications-perturbed-low-rank-positive-eta}} \label{proof:applications-perturbed-low-rank-positive-eta}
\paragraph{Part I: Limits of the parameters.} Let $\what{S}=X \Omega X^\top$ and its e.s.d.\ be $\mu_n$. Since $X\theta\mid X \sim \normal(0, \what{S})$, the convergence of $\mathsf{V}_{\pi}$ follows from law of large numbers and
\begin{align*}
    \E_X \brk{\mathsf{V}_{\pi}} = \frac{1}{n} \Tr(\E_{X}\brk{\what{S}}) = \frac{1}{d} \Tr(\Sigma) = 1.
\end{align*}
As $\lambda_\Sigma = \frac{1}{n} \norm{X^\top X}$, the same limit as in Proposition~\ref{prop:applications-isotropic} hold. It remains only nontrivial to compute $\jfunc_{\pi}$. Denote by the e.s.d.\ of $d\Omega_n$ by $\nu_n$, it is clear that
\begin{align*}
    \nu_n \cd \nu := \frac{\rho}{\gamma} \delta_{\gamma/\rho + \eta} + \prn{1 - \frac{\rho}{\gamma}} \delta_{\eta}.
\end{align*}
We can then apply Lemma~\ref{lem:generalized-MP-BY-law} since $\what{S} = X(d\Omega) X^\top / d$, which implies
\begin{align*}
    \frac{1}{m_\mu} = -z + \int \frac{\lambda}{1 + \lambda m_\mu \gamma^{-1}} \nu(\de \lambda) = -z + \frac{1 + \eta \rho \gamma^{-1}}{1 +  m_\mu (\rho^{-1} + \eta \rho^{-1})} + \frac{\eta - \eta \rho \gamma^{-1}}{1 + m_\mu \eta \gamma^{-1}}.
\end{align*}
Set $z=0$, this solves the quadratic equation
\begin{align*}
    \eta(\gamma-1) m_\mu^2+\underbrace{\brk{\eta^2(\gamma-\rho-1)+\eta\left(\gamma \rho-\rho^2-\rho\right)+\gamma(\rho-1)}}_{=C} m_\mu-\gamma(\eta+\rho) = 0,
\end{align*}
which yields for the positive branch solution:
\begin{align*}
    m_\mu(0)=\frac{-C+\sqrt{C^2+4 \eta(\gamma-1) \gamma(\eta+\rho)}}{2 \eta(\gamma-1)}.
\end{align*}
We thus have the explicit form for the limit of $\jfunc_\pi$ by identifying
\begin{align*}
    \jfunc_\pi= -\E \brk{\frac{1}{n} \Delta \log p(X\theta) \mid X} = \frac{1}{n} \Tr \prn{\what{S}^{-1}} = \int \frac{1}{\lambda} \mu_n(\de \lambda) \to\int \frac{1}{\lambda} \mu(\de \lambda) = m_{\mu}(0).
\end{align*}
In this step we invoke Lemma~\ref{lem:generalized-MP-BY-law} implicitly, which allows us to restrict ourselves to the compact interval $[\lambda_\mu^{-}-\epsilon, \lambda_\mu^+ + \epsilon]$, since the extreme eigenvalues converge almost surely. Consequently, the function $\lambda \mapsto 1/\lambda$ is bounded on this interval for a suitable choice of $\epsilon$, and thus weak convergence applies for the function $1/\lambda$.

\paragraph{Part II: Asymptotics for $\eta \to 0+$.} 
We compute the asymptotics for $m_\mu(0)$ as the limit of $\jfunc_{\pi}$.
\subparagraph{Case I: $\rho > 1$.} In this case $C(\gamma, \rho, 0+)$ has a positive limit $\gamma(\rho-1)$, and thus
\begin{align*}
    m_{\mu}(0) & = \frac{-C+\sqrt{C^2+4 \eta(\gamma-1) \gamma(\eta+\rho)}}{2 \eta(\gamma-1)} = \frac{2\gamma(\eta+\rho)}{C+\sqrt{C^2+4 \eta(\gamma-1) \gamma(\eta+\rho)}} \nonumber \\
    & \to \frac{\gamma \rho}{C(\gamma, \rho, 0+)} = \frac{\rho}{\rho-1}. 
\end{align*}
\subparagraph{Case II: $\rho = 1$.} When $\rho=1$, taking $C(\gamma, 1, \eta) = \eta(\gamma-2) + o(\eta)$ into the above display gives
\begin{align*}
    m_{\mu}(0) & = \frac{2\gamma \rho + o(\eta)}{o(\eta^\half)+\sqrt{4 \eta(\gamma-1) \gamma \rho + o(\eta)}} = \sqrt{\frac{\gamma}{\eta(\gamma-1)}} + o(\eta^{-\half}).
\end{align*}
\subparagraph{Case III: $0<\rho < 1$.} In this case $C(\gamma, \rho, 0+)$ has a negative limit $\gamma(\rho-1)$, and thus
\begin{align*}
    m_{\mu}(0) & = \frac{-C+\sqrt{C^2+4 \eta(\gamma-1) \gamma(\eta+\rho)}}{2 \eta(\gamma-1)} = \frac{2\gamma(1-\rho) + o(\eta)}{2 \eta(\gamma-1)} = \frac{\gamma(1-\rho)}{\eta(\gamma-1)} + o(\eta^{-1}).
\end{align*}
The proof is complete.

\subsubsection{Proof of Proposition~\ref{prop:applications-perturbed-low-rank-zero-eta}} \label{proof:applications-perturbed-low-rank-zero-eta}
We divide this section into two parts. The main proof of the proposition is in only Part I. In Part II, we analyze the training error for $\eta > 0$ and derive asymptotic solutions under the scaling regime $\sigma = C\sqrt{\eta}$, for some constant $C$, in the limit as $\eta \to 0+$.
\paragraph{Part I: The unperturbed prior $\eta = 0$.} We begin by examining the case $\eta = 0$, as it offers conceptual clarity while avoiding substantial technical complications. It is equivalent to having $X \in \R^{n \times r}$, $r/n \to \rho$, $\Sigma=I_r$ and $\theta \sim \normal(0, I_r/r)$. The posterior mean (i.e.\ the Bayes estimator) for $y \sim \normal(0, XX^\top/r + \sigma^2 I_r)$ is then $\bysest = (X^\top X+\sigma^2 rI_r)^{-1} X^\top y$, and therefore
\begin{align*}
    \trerr(\bysest) & = \E \brk{\frac{1}{n} \norm{X \bysest -y}^2} \nonumber \\
    & = \frac{1}{n} \E \brk{ \norm{X(I - (X^\top X + \sigma^2 r I_r)^{-1} X^\top X) \theta}^2} + \frac{1}{n} \E \brk{ \norm{\prn{I -X(X^\top X + \sigma^2 r I_r)^{-1} X^\top} \epsilon}^2} \nonumber \\
    & = \frac{\sigma^4}{n} \Tr \prn{(XX^\top/r + \sigma^2 I_n)^{-2} (XX^\top/r)} + \frac{\sigma^6}{n} \Tr \prn{(XX^\top/r + \sigma^2 I_n)^{-2}} \nonumber \\
    & = \int \frac{\sigma^4\lambda}{(\lambda + \sigma^2)^2} + \frac{\sigma^6}{(\lambda + \sigma^2)^2} \mu_n(\de \lambda) \nonumber \\
    & \to \int \frac{\sigma^4}{\lambda + \sigma^2}\mu_{\mathsf{MP, \rho}}(\de \lambda)\nonumber \\
    &= \sigma^4 m_{\mu_{\mathsf{MP, \rho}}}(-\sigma^2) = \sigma^4 \cdot \frac{-\left(\sigma^2 \rho-\rho+1\right) + \sqrt{\left(\sigma^2 \rho-\rho+1\right)^2-4 \sigma^2 \rho}}{2 \sigma^2}.
\end{align*}
In the last line we use Lemma~\ref{lem:generalized-MP-BY-law} for $\nu = \delta_1$ and $z=-\sigma^2$. As it holds for $m = m_{\mu_{\mathsf{MP, \rho}}}(-\sigma^2)$ that
\begin{align*}
\frac{1}{m} = \sigma^2 + \frac{1}{1 + m/\rho}.
\end{align*}
Using the asymptotics for $\rho > 1$ that
\begin{align*}
    \frac{-\left(\sigma^2 \rho + \rho - 1\right) + \sqrt{\left(\sigma^2 \rho + \rho - 1\right)^2+4 \sigma^2 \rho}}{2 \sigma^2} & = \frac{4\sigma^2 \rho}{2 \sigma^2 \cdot \prn{\left(\sigma^2 \rho + \rho - 1\right) + \sqrt{\left(\sigma^2 \rho + \rho - 1\right)^2+4 \sigma^2 \rho}}} \nonumber \\
    & = \frac{\rho}{\rho-1} + O(\sigma^2),
\end{align*}
and for $\rho < 1$ that
\begin{align*}
    \frac{-\left(\sigma^2 \rho + \rho - 1\right) + \sqrt{\left(\sigma^2 \rho + \rho - 1\right)^2+4 \sigma^2 \rho}}{2 \sigma^2} & = \frac{2(1-\rho) -\sigma^2 \rho + \frac{\rho(1+\rho)}{1-\rho} \sigma^2 + O(\sigma^4)}{2 \sigma^2} \nonumber \\
    & = \frac{1-\rho}{\sigma^2} + \frac{\rho^2}{1-\rho} + O(\sigma^2).
\end{align*}
we complete the proof for $\eta = 0$.
\paragraph{Part II: The general training error when $\eta > 0$.} Similar to the previous calculations by considering the exact Bayes estimator from the posterior mean of $y \sim \normal(0, X\Omega X^\top + \sigma^2 I_d)$
\begin{align*}
\bysest = \Omega X^\top (X \Omega X^\top+\sigma^2 I_d)^{-1} y.
\end{align*}
Let $\mu_n$ be the empirical spectrum distribution of $X \Omega X^\top$, we then have the exact formula for the Bayes error as (using the same calculations in the first part):
\begin{align*}
    \trerr(\bysest) & = \int \frac{\sigma^4}{\lambda + \sigma^2} \mu_n(\de \lambda) \to \int \frac{\sigma^4}{\lambda + \sigma^2} \mu(\de \lambda) = \sigma^4 m_\mu(-\sigma^2),
\end{align*}
where $\mu$ is the generalized Marchenko-Pastur limit by Lemma~\ref{lem:generalized-MP-BY-law} and $m_\mu(z)$ is its associated Stieltjes transform. Substituting $z = -\sigma^2$ into Lemma~\ref{lem:generalized-MP-BY-law} yields for $m_{\eta} = m_\mu(-\sigma^2)$,
\begin{align*}
    \frac{1}{m_\eta} = \sigma^2 + \frac{1 + \eta \rho \gamma^{-1}}{1 +  m_\eta (\rho^{-1} + \eta \rho^{-1})} + \frac{\eta - \eta \rho \gamma^{-1}}{1 + m_\eta \eta \gamma^{-1}}.
\end{align*}
When $\rho<1$, we substitute in the ansatz $m_\eta = C'/\eta + o(\eta^{-1})$ under the noise asymptotics $\sigma = C \sqrt{\eta}$, the equation simplifies into
\begin{align*}
    \frac{1-\rho}{C'}=C^2+\frac{\gamma-\rho}{\gamma+C'},
\end{align*}
which admits a positive solution $C'=C'(C)$ with $C'(0)= \frac{\gamma(1-\rho)}{\gamma-1}$.

\subsection{Proof for the mixture of approximately-sparse priors} \label{proof:applications-perturbed-sparse}
We provide the proofs in three parts. In the first part we upper bound the Fisher information by convexity. In the second part we utilize the isoperimetric lower bound from Lemma~\ref{lem:EII}.

\paragraph{Part I: Upper bound by convexity of Fisher information.} For a mixture of density $p(y) = \sum_{i=1}^m \alpha_i p_i(y)$ where Fisher information exists for each individual $p_i$. We can upper bound pointwise by Cauchy-Schwarz inequality:
\begin{align*}
    \norm{\nabla p(y)}^2 = \norm{\sum_{i=1}^m \alpha_i \nabla p_i(y)}^2 \leq \prn{\sum_{i=1}^m \alpha_i p_i(y)} \cdot \prn{ \sum_{i=1}^m \alpha_i \frac{\norm{\nabla p_i(y)}^2}{p_i(y)}}= p(y) \cdot \prn{ \sum_{i=1}^m \alpha_i \frac{\norm{\nabla p_i(y)}^2}{p_i(y)}}.
\end{align*}
Apply the pointwise bound to the mixture considered in the example with $m=\binom{d}{K}$, $\alpha_S = \binom{d}{K}^{-1}$ and $p_S = \phi(y; 0, X \Omega_S X^\top)$ and we have
\begin{align*}
    \frac{\norm{\nabla p(y)}^2}{p(y)} \leq \sum_{S} \alpha_S  \frac{\norm{\nabla p_S(y)}^2}{p_S(y)} = \frac{1}{\binom{d}{K}} \sum_S \frac{\norm{\nabla \phi(y; 0, X\Omega_S X^\top)}^2}{\phi(y; 0, X\Omega_S X^\top)}.
\end{align*}
Integrating over $y$ yields
\begin{align*}
    \jfunc_{\pi} &= \int \frac{\norm{\nabla p(y)}^2}{p(y)} \de y \leq \int \frac{1}{\binom{d}{K}} \sum_S \frac{\norm{\nabla \phi(y; 0, X\Omega_S X^\top)}^2}{\phi(y; 0, X\Omega_S X^\top)} \de y \nonumber \\
    & =  \frac{1}{\binom{d}{K}} \sum_S \int \norm{\nabla \log \phi(y; 0, X\Omega_S X^\top)}^2\phi(y; 0, X\Omega_S X^\top) \de y \\
    & = \frac{1}{\binom{d}{K}} \sum_{S} \E_{y \sim \normal(0,X \Omega_S X^\top)} \brk{\norm{\nabla \log \phi(y; 0, X\Omega_S X^\top)}^2}   \\
    &= \frac{1}{\binom{d}{K}} \sum_S \Tr \prn{\prn{X\Omega_S X^\top}^{-1}} \leq \frac{1}{\eta} \Tr \prn{(XX^\top/d)^{-1}}.
\end{align*}
In the last inequality, we use the uniform lower bound $\Omega_S \succeq \eta I_d/d$. Let $\mu_n$ be the e.s.d.\ of $XX^\top/d$, $\mu_n \cd \mu_{\mathsf{MP}, \gamma}$ by Lemma~\ref{lem:MP-BY-law}, and by the resolvent identity at $z \to i \cdot 0+$ in~\cite[Lemma~3.11]{bai2010spectral} we conclude
\begin{align*}
    \jfunc_{\pi}\leq \frac{1}{\eta} \Tr \prn{(XX^\top/d)^{-1}} =\frac{1}{\eta}\int \frac{1}{\lambda} \mu_n(\de \lambda) \to \frac{1}{\eta}\int \frac{1}{\lambda} \mu_{\mathsf{MP}, \gamma}(\de \lambda) \to \frac{\gamma}{\eta(\gamma-1)}.
\end{align*}
We thus obtain the upper bound
\begin{align*}
    \limsup_{n \to +\infty} \jfunc_\pi \leq \frac{\gamma}{\eta(\gamma-1)}.
\end{align*}
\paragraph{Part II: Lower bound by entropic isoperimetric inequality.} We begin by introducing the necessary notation that we will leverage to derive upper and lower bounds for $\jfunc_\pi$. Define the differential entropy for $\mathsf{Ent}_\pi$ for $y=X\theta$ by $\mathsf{Ent}_\pi = \E \brk{-\log p(X\theta)} > 0$ (by Jensen's inequality). With a slight abuse of notations, we also denote $S$ by the discrete random variable uniformly drawn from all $\binom{d}{k}$ subsets of size $K$ of $[d]$. Then we can view the distribution of $y$ as drawing $S$ first and sampling $y \mid S \sim \normal(0, X\Omega_S X^\top)$. We can then define the conditional entropy and mutual information as follows:
\begin{align*}
    \mathsf{Ent}_{\pi \mid S} = \E_S \brk{\E_{y \sim \pi \mid S} \brk{-\log \phi(y \mid S;0, X\Omega_SX^\top)}}, \qquad \mathscr{I}(y; S) = \mathsf{Ent}_{\pi}-\mathsf{Ent}_{\pi \mid S}.
\end{align*}
Combine the following explicit formula for the differential entropy for multivariate Gaussians, and the familiar entropy bound for the mutual information of a discrete random variable,
\begin{align*}
    \mathsf{Ent}_{\pi \mid S} & = \frac{1}{\binom{d}{K}} \sum_S \prn{\frac{n}{2} \log 2 \pi  + \frac{n}{2} + \frac{1}{2} \det X \Omega_S X^\top} = \frac{n}{2} + \frac{n}{2} \log 2 \pi + \frac{1}{2} \frac{1}{\binom{d}{K}} \sum_S \log \det \prn{X \Omega_S X^\top}, \\
    \mathscr{I}(y; S) & \leq \mathscr{H}(S) = \log \binom{d}{K} \stackrel{\mathrm{(i)}}{\leq} K \log \frac{d}{K} + K, 
\end{align*}
where we use the Stirling's bound $K! \geq (K/e)^K$ in (i); we can thus upper bound
\begin{align*}
    \mathsf{Ent}_{\pi} & =\mathsf{Ent}_{\pi \mid S} + \mathscr{I}(y; S)  \nonumber \\
    & \leq \frac{n}{2} + \frac{n}{2} \log 2 \pi + K \log \frac{d}{K} + K + \frac{1}{2} \frac{1}{\binom{d}{K}} \sum_S \log  \det \prn{X \Omega_S X^\top}.
\end{align*}
We are now ready to apply the entropic isoperimetric inequality in Lemma~\ref{lem:EII} on $\jfunc_{\pi}$, which implies
\begin{align*}
    \jfunc_{\pi} = \frac{1}{n} \cdot \E \brk{\norm{\nabla \log p(y)}^2} \geq 2 \pi e \cdot \exp \brc{-\frac{2}{n} \E \brk{-\log p(y)}} = \exp\brc{\log (2\pi) + 1 - \frac{2}{n} \mathsf{Ent}_{\pi}}.
\end{align*}
Comibining with the entropy bound yields
\begin{align}
    \jfunc_{\pi} \geq \exp \brc{-\frac{2K}{n} \log \frac{d}{K} - \frac{2K}{n} - \frac{1}{\binom{d}{K}} \sum_S \frac{1}{n} \log \det (X\Omega_S X^\top) }. \label{eq:mixture-sparse-mid-1}
\end{align}
Utilizing the special structure of $\Omega_S$, we can further upper bound the determinant by the following lemma.
\begin{lemma} \label{lem:log-det-upper-bound}
    For all $S \in [d], |S|=K$, we have
    \begin{align*}
        \frac{1}{n} \log \det (X\Omega_S X^\top) \leq \prn{1 - \frac{K}{n}}\log \eta + \frac{K}{n} \log \frac{d}{K} + \frac{1}{n} \log \det \prn{\frac{XX^\top}{d}}.
    \end{align*}
\end{lemma}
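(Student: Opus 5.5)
The plan is to factor out the isotropic part of $\Omega_S$ and then control the rank-$K$ perturbation with the matrix determinant lemma and a projection argument.

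\emph{Step 1: rewrite $\Omega_S$.} Write $P_S=\sum_{i\in S}e_ie_i^\top$ and
\[
\Omega_S=\frac{\eta}{d}\prn{I_d + c\,P_S}, \qquad c = \frac{(1-\eta)d}{K\eta}.
\]
Let $X_S\in\R^{n\times K}$ denote the columns of $X$ indexed by $S$. Then
\[
X\Omega_S X^\top = \eta\prn{\frac{XX^\top}{d} + \frac{c}{d}X_SX_S^\top}.
\]
Since $X$ has full row rank, $XX^\top$ is invertible, so we may factor it out:
\[
\det(X\Omega_SX^\top) = \eta^n \det\prn{\frac{XX^\top}{d}}\det\prn{I_n + c\,(XX^\top)^{-1}X_SX_S^\top}.
\]
By Sylvester's determinant identity, the last factor equals $\det\prn{I_K + c\,M}$, where
\[
M = X_S^\top (XX^\top)^{-1}X_S .
\]

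\emph{Step 2: bound $\det(I_K+cM)$.} The key observation is that $M$ is the $S\times S$ principal submatrix of
\[
P = X^\top(XX^\top)^{-1}X,
\]
which is the orthogonal projection onto the row space of $X$. Hence $0\preceq P\preceq I_d$, and therefore $0\preceq M\preceq I_K$. It follows that
\[
\det(I_K+cM)\le (1+c)^K .
\]
Next,
\[
1+c=\frac{d(1-\eta)+K\eta}{K\eta}\le \frac{d}{K\eta},
\]
where the inequality uses $K\le d$.

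\emph{Step 3: combine.} Putting the pieces together gives
\[
\log\det(X\Omega_SX^\top)\le n\log\eta + \log\det\prn{\frac{XX^\top}{d}} + K\log\frac{d}{K} - K\log\eta .
\]
Dividing by $n$ yields exactly the claimed inequality, since $(n-K)\log\eta/n=(1-K/n)\log\eta$.

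The only step that is not routine algebra is the bound $M\preceq I_K$ via the projection argument. Everything else is bookkeeping of the constants, which is where care is needed to land on the precise form of the bound in Lemma~\ref{lem:log-det-upper-bound}.
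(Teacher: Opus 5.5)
Your proof is correct and follows essentially the same route as the paper's: factor out $\eta^n\det(XX^\top/d)$, apply Sylvester's identity to get $\det(I_K + c\,X_S^\top(XX^\top)^{-1}X_S)\le(1+c)^K$, and finish with $K\eta+(1-\eta)d\le d$. The only cosmetic difference is how $M\preceq I_K$ is justified. You view $M$ as a principal submatrix of the projection $X^\top(XX^\top)^{-1}X$, while the paper compares it with the projection $X_S^\top(X_SX_S^\top)^\dagger X_S$. Both justifications are valid.
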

\begin{proof}
    Denote by $X_S$ the $K$ columns of $X$ whose indices belong to $S$. Then $X\Omega_S X^\top = \frac{1-\eta}{K} X_S X_S^\top + \frac{\eta}{d} XX^\top$. Making use of the determinant identity $\det(I_p + AB^\top) = \det (I_q + B^\top A)$ for any $A, B \in \R^{p \times q}$, it follows that
    \begin{align*}
        \det \prn{X\Omega_S X^\top } & = \det \prn{\frac{1-\eta}{K} X_S X_S^\top + \frac{\eta}{d} XX^\top} \nonumber \\
        & = \eta^n \cdot \det \prn{\frac{XX^\top}{d}} \cdot \det \prn{I_d + \frac{(1-\eta)d}{\eta K} (XX^\top)^{-\half} X_S X_S^\top (XX^\top)^{-\half}}  \\
        & = \eta^n \cdot \det \prn{\frac{XX^\top}{d}} \cdot \det \prn{I_K + \frac{(1-\eta)d}{\eta K} X_S^\top (XX^\top)^{-1} X_S } \\
        & \leq \eta^n \cdot \det \prn{\frac{XX^\top}{d}} \cdot \prn{1 + \frac{(1-\eta)d}{\eta K}}^K,
    \end{align*}
    where in the last inequality, we use $\norm{X_S^\top (XX^\top)^{-1} X_S} \leq \norm{X_S^\top (X_SX_S^\top)^{\dagger} X_S} = 1$. Taking logarithms on both sides, and applying the inequality $\eta K + (1-\eta)d \leq d$, we complete the proof.
\end{proof}

Applying Lemma~\ref{lem:log-det-upper-bound} to Eq.~\eqref{eq:mixture-sparse-mid-1}, we therefore obtain
\begin{align*}
    \jfunc_{\pi} & \geq \exp \brc{-\prn{1 - \frac{K}{n}}\log \eta -\frac{3K}{n} \log \frac{d}{K} - \frac{2K}{n} - \frac{1}{n} \log \det \prn{\frac{XX^\top}{d}} }.
\end{align*}
We can once more exploit the convergence to the Marchenko–Pastur law as in the previous part, so that
\begin{align*}
\frac{1}{n} \log \det \prn{\frac{XX^\top}{d}} = \int \log \lambda \cdot \mu_n(\de \lambda) \to \int \log \lambda \cdot \mu_{\mathsf{MP}, \gamma} (\de \lambda) = \prn{1-\gamma} \cdot \log \prn{1-\gamma^{-1}} -1.
\end{align*}
The last equality is from~\cite[Example 1.1.1 \& Sec.~9.12.3]{bai2010spectral}. We thus conclude the proof, additionally using $K=o(n)$ and $x\log x \to 0$ for $x \to 0+$,
\begin{align*}
    \liminf_{n \to +\infty} \jfunc_{\pi} \geq \frac{1}{\eta} \cdot e \prn{1-\frac{1}{\gamma}}^{\gamma - 1}.
\end{align*}
Indeed, the lower bound holds for general $K$ by defining $\alpha = K/n \in[0, 1]$ and
\begin{align*}
    \liminf_{n \to +\infty} \jfunc_{\pi} \geq \prn{\frac{1}{\eta}}^{1-\alpha} \cdot \prn{\frac{\alpha}{\gamma}}^{3\alpha} \cdot e^{1-2\alpha} \prn{1-\frac{1}{\gamma}}^{\gamma - 1}.
\end{align*}

\section{Proofs for Section~\ref{sec:fine-grained}}

\subsection{Proof of Lemma~\ref{lem:bayes-train-err-representation}} \label{proof:bayes-train-err-representation}
First, we remark that $p$ does not need to be differentiable for this Lemma to hold, as Gaussian kernels smooth $\mc{L}^1$ functions into smooth functions for any $\sigma^2 > 0$---that is, $p_{\sigma^2}$ is smooth for any $\sigma^2>0$, for any density $p$.

It follows from $\frac{\partial}{\partial y} \exp \brc{-\frac{\norm{X \theta' - y}^2}{2 \sigma^2}} = \frac{X \theta' -y}{\sigma^2} \exp \brc{-\frac{\norm{X \theta' - y}^2}{2\sigma^2}}$ and we can interchange the order of differentiation and integration since the derivative is uniformly bounded, $\norm{\frac{X \theta' -y}{\sigma} \exp \brc{-\frac{\norm{X \theta' - y}^2}{2\sigma^2}}} \leq \sup_{t \in [0,+\infty)} t e^{-t^2/2} \leq e^{-1/2}$. To be specific,
    \begin{align*}
        \frac{\partial}{\partial y} \log \E_{\theta' \sim \pi} \brk{\exp \brc{-\frac{\norm{X \theta' - y}^2}{2 \sigma^2}} \mid X, y} & = \frac{\int \frac{\partial}{\partial y} \exp \brc{-\frac{\norm{X \theta' - y}^2}{2 \sigma^2}} \pi(\de \theta') }{\int \exp \brc{-\frac{\norm{X \theta' - y}^2}{2 \sigma^2}} \pi(\de \theta') }  = \frac{X \bysest - y}{\sigma^2}.
    \end{align*}
    Eq.~\eqref{eq:bayes-train-err-cgf} holds by substituting $y=X\theta + \sigma \tau$. Eq.~\eqref{eq:bayes-train-err-fisher} holds given the existence of $p$ since for $y'=X\theta'$,
    \begin{align*}
    \exp \brc{-\frac{\norm{X \theta' - y}^2}{2 \sigma^2}} \pi(\de \theta) & = \exp \brc{-\frac{\norm{y' - y}^2}{2 \sigma^2}} p(y') \de y' \nonumber \\
    & = C \phi_{\sigma^2}(y-y') p(y') \de y' \nonumber \\
    (X \theta' - y) \cdot \exp \brc{-\frac{\norm{X \theta' - y}^2}{2 \sigma^2}} \pi(\de \theta) & = (y' - y) \cdot \exp \brc{-\frac{\norm{y' - y}^2}{2 \sigma^2}} p(y') \de y' \nonumber \\
    & = \sigma^2 \frac{\partial}{\partial y} \exp \brc{-\frac{\norm{y-y'}^2}{2 \sigma^2}} p(y') \de y' \nonumber \\
    & = C\sigma^2 \nabla \phi_{\sigma^2}(y-y') p(y') \de y',
    \end{align*}
    where $C = (2\pi \sigma^2)^{-\frac{n}{2}}$ is the integration constant. This confirms
    \begin{align*}
        \frac{\int \frac{\partial}{\partial y} \exp \brc{-\frac{\norm{X \theta' - y}^2}{2 \sigma^2}} \pi(\de \theta') }{\int \exp \brc{-\frac{\norm{X \theta' - y}^2}{2 \sigma^2}} \pi(\de \theta') } = \frac{\int \sigma^2 \nabla  \phi_{\sigma^2}(y-y') p(y') \de y' }{\int \phi_{\sigma^2}(y-y') p(y') \de y' } = \frac{\sigma^2 \nabla  p_{\sigma^2}(y) }{p_{\sigma^2}(y) }  = \sigma^2 \nabla \log p_{\sigma^2}(y).
    \end{align*}

\subsection{Proof of Proposition~\ref{prop:monotonicity}} \label{proof:monotonicity}
Let $t=\sigma^2$. The monotonicity of $\tfunc(t)/t^2 = \jfunc(t)$ follows from Stam's Fisher information inequality in Lemma~\ref{lem:FII}, which implies $\jfunc(t)^{-1} \geq \jfunc(0)^{-1} +t$. The rest of the proof is for the monotonicity of $\tfunc(t)$ itself. This requires using the alternative form in Eq.~\eqref{eq:mid-high-noise-asymptotics-1},
\begin{align*}
    \tfunc(t) & = t - \frac{1}{n} \E_{y' \sim \pi', \tau \sim \normal(0, I_n)}  \brk{\norm{y' - X \bysest}^2  \mid X} \nonumber \\
    &= t - \frac{1}{n} \E_{y' \sim \pi', \tau \sim \normal(0, I_n)} \E \brk{\norm{y' - \E \brk{y' \mid y}}^2  \mid y = y' + \sqrt{t} \tau} \nonumber \\
    & =: t - \mmse(1/t),
\end{align*}
identifying the formulation of MMSE in Gaussian channels. When $n=1$, the explicit gradient $\mmse'(s)$ under the time inversion $s = 1/t$ is in \cite[Cor.~2]{guo2011estimation}, given as $\mmse'(s) = -\E \brk{\Cov (y' \mid y)^2 }$. In the vector version, a similar formula indeed holds applying the general multi-dimensional derivative identity~\cite[Prop.~1]{dytso2020general}, which implies the following explicit equation.
\begin{lemma} \label{lemma:mmse-grad} For all $s \in [0, +\infty)$ and $y = y'+ \sqrt{1/s}\; \tau$,
$$\mmse'(s) = - \frac{1}{n} \cdot \E \brk{\norm{\Cov (y' \mid y)}_F^2}.$$
\end{lemma}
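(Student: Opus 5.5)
We prove the identity by writing $\mmse(s)$ explicitly as an integral against the prior and differentiating under the integral sign in $s$. Throughout, fix $y'\sim\pi'$ and set $y = y' + s^{-1/2}\tau$; this is equivalent in law to the channel $\sqrt{s}\,y = \sqrt{s}\,y' + \tau$. The posterior of $y'$ given $y$ has density proportional to $p(y')\exp\{-\tfrac{s}{2}\norm{y-y'}^2\}$.

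\textbf{Step 1 (reparametrization).} Use the observation $z = \sqrt{s}\,y' + \tau$ instead of $y$. Then
\[
\mmse(s) = \frac{1}{n}\E\brk{\norm{y' - \E[y'\mid z]}^2} = \frac{1}{n}\prn{\E\norm{y'}^2 - \E\norm{\E[y'\mid z]}^2},
\]
so only the second moment of the conditional mean must be differentiated. The posterior density is proportional to $p(y')\exp\{\sqrt{s}\, z^\top y' - \tfrac{s}{2}\norm{y'}^2\}$. This is an exponential family in the natural parameters $(\sqrt{s}\,z,\, s)$, so derivatives of posterior moments are posterior covariances. Concretely,
\[
\partial_s \E[f(y')\mid z] = \Cov\prn{f(y'),\ \tfrac{1}{2\sqrt{s}}\, z^\top y' - \tfrac{1}{2}\norm{y'}^2 \,\middle|\, z},
\]
and differentiation in $z$ yields $\nabla_z \E[y'\mid z] = \sqrt{s}\,\Cov(y'\mid z)$.

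\textbf{Step 2 (differentiate).} Write $\E\norm{\E[y'\mid z]}^2$ as an integral over $z$ against the marginal density $q_s(z) = \int \phi(z-\sqrt{s}\,y')\,p(y')\,\de y'$. Differentiating gives two terms: one from $\partial_s q_s$ and one from $\partial_s \E[y'\mid z]$. There are two possible routes.

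\emph{Direct route.} Use the I-MMSE heat-equation structure: $\partial_s q_s$ can be expressed through the $z$-derivatives $\nabla_z\cdot(\E[y'\mid z]\, q_s)/(2\sqrt{s})$. After integrating by parts in $z$ and using $\nabla_z \E[y'\mid z] = \sqrt{s}\,\Cov(y'\mid z)$, the terms should combine to $\E\norm{\Cov(y'\mid z)}_F^2$.

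\emph{Citation route.} Invoke \cite[Prop.~1]{dytso2020general}, which gives for the vector Gaussian channel $\frac{\de}{\de s}\E[\Cov(y'\mid z)] = -\E[\Cov(y'\mid z)^2]$. This holds in matrix form, since $\Cov$ is symmetric so its square equals $\Cov\,\Cov^\top$. Taking the trace, dividing by $n$, and noting $\mmse(s) = \frac{1}{n}\Tr\E[\Cov(y'\mid z)]$ together with $\Tr(\Cov^2) = \norm{\Cov}_F^2$ gives the claim immediately. Conditioning on $z$ is the same as conditioning on $y$, since $y = z/\sqrt{s}$.

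I would present the citation route, with the direct computation as a sanity check in the scalar case against \cite[Cor.~2]{guo2011estimation}.

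\textbf{Main obstacle.} The hard part is justifying the interchange of derivative and integral, uniformly near each $s$, including the endpoint $s=0$, where the formula is read as a one-sided derivative with $\Cov(y'\mid y) = \Cov(y')$. At $s>0$, the posterior moments are bounded by moments of $y'$ weighted by Gaussian factors, and $\E\norm{y'}^4$ may be needed. Since only $\mathsf{V}_\pi < \infty$ is assumed, I would first prove the identity for priors with bounded support, or with finite fourth moments. I would then extend it by truncation, using monotone convergence of $\E\norm{\Cov}_F^2$ and continuity of $\mmse$ in the prior. Alternatively, I would state the identity as holding whenever the right side is finite, which is all that is needed for the monotonicity argument in Proposition~\ref{prop:monotonicity}: there $\mmse'\le 0$ in $s=1/t$ means that $\mmse(1/t)$ is nondecreasing in $t$ with slope at most $1$.
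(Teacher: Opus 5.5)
There is a genuine gap: neither route actually derives the identity.

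\textbf{The citation route.} You attribute to \cite[Prop.~1]{dytso2020general} the matrix formula $\frac{\de}{\de s}\E[\Cov(y'\mid z)] = -\E[\Cov(y'\mid z)^2]$. The paper, however, uses that proposition only for the Jacobian identity $\partial \E[y'\mid y]/\partial y = s\,\Cov(y'\mid y)$, which is the same fact you already wrote in Step~1 as $\nabla_z \E[y'\mid z] = \sqrt{s}\,\Cov(y'\mid z)$. The matrix formula you cite is, after taking a trace, precisely the lemma. So this route assumes the result rather than proving it.

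\textbf{The direct route.} It does not close with the ingredients you list. Write $m = \E[y'\mid z]$ and $C = \Cov(y'\mid z)$. The $\partial_s q_s$ term gives $\E[m^\top C m]$ after one integration by parts. The $\partial_s m$ term, however, equals $\E[s^{-1/2} m^\top C z - m^\top \Cov(y',\norm{y'}^2\mid z)]$, which contains third posterior moments. Cancelling them requires three further ingredients:
\begin{itemize}
\item the cumulant identity $\partial_{z_j} C_{ik} = \sqrt{s}\,\kappa_3(y_i',y_j',y_k'\mid z)$, where $\kappa_3$ is the joint third posterior cumulant;
\item a second integration by parts;
\item Tweedie's formula $\nabla \log q_s(z) = \sqrt{s}\,m - z$.
\end{itemize}
Your phrase ``the terms should combine'' hides all of this.

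\textbf{The missing idea.} What makes the paper's proof short is the orthogonality (envelope) principle. Differentiate $n\,\mmse(s) = \E\norm{y' - m_s(z_s)}^2$. The explicit $s$-dependence of the estimator contributes $2\E[(m_s(z_s) - y')^\top (\partial_s m_s)(z_s)] = 0$. This vanishes because $(\partial_s m_s)(z_s)$ is a function of the observation, and the estimation error is orthogonal to every such function. Only the motion of the observation survives, and the chain rule with the Jacobian identity finishes the proof. With $\de z_s/\de s = y'/(2\sqrt{s})$,
\[
n\,\mmse'(s) = \E\Big[2(m-y')^\top \sqrt{s}\,C\,\tfrac{y'}{2\sqrt{s}}\Big] = \E\Big[\Tr\big(C\,\E[y'(m-y')^\top \mid z]\big)\Big] = -\E\big[\Tr(C^2)\big],
\]
since $\E[y'(m-y')^\top\mid z] = mm^\top - \E[y'y'^\top\mid z] = -C$.

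The paper runs the same computation in the parametrization $y = y' + s^{-1/2}\tau$. It uses $\de y/\de s = -\tfrac12 s^{-3/2}\tau$ and $\E[(y-y')(y'-\E[y'\mid y])^\top \mid y] = -\Cov(y'\mid y)$, and leaves the vanishing of the estimator-derivative term implicit.

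Your concern about interchanging derivative and expectation, and about the endpoint $s=0$, is legitimate. The paper does not address it either, but it is separate from the gap above.
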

\begin{proof}
    Given the Jacobian identity from~\cite[Prop.~1]{dytso2020general}, we have 
    \begin{align*}
        \frac{\partial \E \brk{y' \mid y}}{\partial y} = s\Cov (y' \mid y),
    \end{align*}
    and thus
    \begin{align*}
        \mmse'(s) & = \frac{1}{n} \cdot \E_y\brk{\E \brk{2\prn{y' - \E \brk{y' \mid y}}^\top \prn{-\frac{\partial \E \brk{y' \mid y}}{\partial y}}  \mid y} \cdot \frac{\de y}{\de s}}\\
        & = \frac{1}{n} \cdot \E_y\brk{\E \brk{2\prn{y' - \E \brk{y' \mid y}}^\top \prn{-s \Cov(y' \mid y)}  \mid y} \cdot \prn{-\frac{1}{2} s^{-3/2}} \tau}\\
        & = \frac{1}{n} \cdot \E \brk{\Tr \prn{\E \brk{\tau/\sqrt{s} \prn{y' - \E \brk{y' \mid y}}^\top \mid y} \cdot \Cov(y' \mid y)} }.
    \end{align*}
    We then use that $\tau/\sqrt{s} = y-y'$ and conditional on $y$,
    \begin{align*}
        \E \brk{\tau/\sqrt{s} \prn{y' - \E \brk{y' \mid y}}^\top \mid y} = -\E \brk{\prn{y' - \E \brk{y' \mid y}}\prn{y' - \E \brk{y' \mid y}}^\top \mid y} = - \Cov(y' \mid y),
    \end{align*}
    and the proof is done by
    \begin{align*}
    \mmse'(s) & = - \frac{1}{n} \cdot \E \brk{\Tr \prn{\Cov(y' \mid y)^2}} = -\frac{1}{n} \cdot \E \brk{\norm{\Cov (y' \mid y)}_F^2}.
    \end{align*}
\end{proof}

Given the derivative formula, by a variable transformation, we have
\begin{align*}
    \tfunc'(t) & = 1 - \mmse'(1/t) \cdot \frac{\de }{\de t} \prn{\frac{1}{t}} = 1 - \frac{1}{nt^2} \cdot \E \brk{\norm{\Cov (y' \mid y)}_F^2 } \\
    & = 1 - \frac{1}{nt^2} \cdot \E \brk{\norm{\E \brk{\prn{y' - \E \brk{y' \mid y}}\prn{y' - \E \brk{y' \mid y}}^\top}}_F^2 } \\
    & \stackrel{\mathrm{(i)}}{\geq} 1 - \frac{1}{nt^2} \cdot \E \brk{\norm{\E \brk{(y' - y)(y' - y)^\top}}_F^2 } \nonumber \\
    & = 1 - \frac{1}{nt^2} \cdot \E \brk{\norm{t \E\brk{\tau \tau^\top}}_F^2 } = 1- \frac{1}{nt^2} \cdot nt^2 = 0,
\end{align*}
where in (i) we use the simple fact that conditional on $y$,
\begin{align*}
    &\E \brk{\prn{y' - \E \brk{y' \mid y}}\prn{y' - \E \brk{y' \mid y}}^\top \mid y} + \prn{y - \E \brk{y' \mid y}}\prn{y - \E \brk{y' \mid y}}^\top \nonumber \\
    &= \E \brk{(y' - y)(y' - y)^\top \mid y}.
\end{align*}
Then $\tfunc'(t) \geq 0$ and $\tfunc$ does not decrease in $[0, +\infty)$. The proof is complete.

\subsection{Training error formula for the numerical simulation} \label{sec:simulation-calculations}
In this section we give details for calculating the curves shown in Figure~\ref{fig:monotonic}.
Let $t= \sigma^2$. Since $\pi' = \frac{1}{2} \normal(-1, \eta) + \frac{1}{2} \normal(1, \eta)$, one has $\pi'_t = \frac{1}{2} \normal(-1, \eta+t) + \frac{1}{2} \normal(1, \eta+t)$. In this case, we can compute the explicit formula for its Fisher information $\jfunc(t)$ from
\begin{align*}
    \jfunc(t) & = \E_{y \sim \pi'_t} \brk{\brc{\frac{\de}{\de y} \log \prn{\frac{1}{2} \phi_{\eta + t}(y-1) + \frac{1}{2} \phi_{\eta+ t}(y+1) } }^2} \\
    & = \frac{1}{(\eta + t)^2} \cdot \E_{y \sim \pi'_t} \brk{\prn{\frac{(y-1) \phi_{\eta+ t}(y-1)+(y+1) \phi_{\eta+ t}(y+1)}{\phi_{\eta+ t}(y-1)+\phi_{\eta+ t}(y+1)}}^2} \nonumber \\
    & = \frac{1}{(\eta + t)^2} \cdot \E_{y \sim \pi'_t} \brk{\prn{y - \frac{\exp \brc{\frac{2y}{\eta + t}} - 1}{\exp \brc{\frac{2y}{\eta + t}} + 1}}^2} = \frac{1}{\eta+t}-\frac{1}{(\eta+t)^2} \mathbb{E}_{\tau \sim \normal(0,1)}\left[\frac{4 \exp \left(\frac{2(1+\sqrt{\eta+t} \tau)}{\eta+t}\right)}{\left(1+\exp \left(\frac{2(1+\sqrt{\eta+t} \tau)}{\eta+t}\right)\right)^2}\right].
\end{align*}
At each value of $(\eta, t)$, we can run Monte-Carlo to evaluate $\jfunc(t)$.

\section{Higher order asymptotics for the Bayes training error} \label{sec:higher-order}
In this section, we provide higher order asymptotics of $\trerr(\sigma^2)$ when $\sigma^2 \to 0+$. We will use $t=\sigma^2$ in this section. As partially raised in Remark~\ref{remark:suggested-asymptotics}, the main technical hurdle that prevents us from directly taking limits such as $\lim_{t \to 0+} \ifunc(t) = \ifunc(0)$ hinges on the general insufficient regularities in the underlying noiseless data distribution $\pi'$ along with its associated density $p$ on $\R^n$. 

We will need to introduce some necessary notations to facilitate delivering our results. On $\R^n$, we use the shorthand $\partial_i := \partial/\partial x_i$ when the context is clear, and denote by $\nabla := [\partial_1, \cdots, \partial_n]^\top \in \R^n$ the standard vector differential operator. Furthermore, we use the notation $u \cdot v$ to represent inner products, both between vectors and between differential operators and vector. We can then write the Hessian operator as $\nabla^2 = \nabla \nabla^\top$ and define the Laplace operator by $\Delta := \nabla \cdot \nabla = \Tr(\nabla^2) = \sum_{i=1}^n \partial_i^2$.

To gain intuition and especially on the derivatives of $\jfunc(t)$ at $t=0+$, we first perform an informal analysis using formal power series in what follows, before studying properties of the Bayes training error function $\tfunc(t)$ at $t=0+$ with full mathematical rigor. Let the formal power series ring $\R[[\sigma]]$ be the completion of the polynomial ring $\R[\sigma]$, we will write $\wt{o}(\sigma^k)$ to represent a power series with the leading order strictly larger than $k$, e.g. $\sum_{l=k+1}^\infty \sigma^l = \wt{o}(\sigma^k)$. We have:
\begin{proposition} \label{prop:bayes-train-informal}
Assume arbitrary differentiability of the density $p > 0$ of $\pi'$, and we can always interchange differentiation with integration. Consider the following formal power series at any $X,y$ and $\sigma$ for $u_k \in \R^n$ and $\alpha_k \in \R$,
\begin{align*}
    \frac{1}{\sqrt{n}} \prn{X\bysest - y} & = \sum_{k=0}^{+\infty} u_k \sigma^k, \qquad \trerr(\sigma^2) = \sum_{k=0}^{+\infty} \alpha_k \sigma^{k},
\end{align*}
whose coordinates and value belong to the polynomial ring $\R[\sigma]$. We have for the leading terms,
\begin{align*}
    \frac{1}{\sqrt{n}} \prn{X\bysest - y} & = \frac{\sigma^2 \nabla \log p(y)}{\sqrt{n}} + \frac{\sigma^4}{2\sqrt{n}} \prn{\nabla (\Delta \log p(y)) + 2\nabla^2 \log p(y) \nabla \log p(y) } + \wt{o}(\sigma^4), \\
    \trerr(\sigma^2) & = \frac{\sigma^4}{n} \cdot \E_{y \sim \pi'} \brk{\norm{ \nabla \log p(y) }^2} -\frac{\sigma^6}{n} \cdot \E_{y \sim \pi'} \brk{\norm{ \nabla^2 \log p(y) }_F^2} + \wt{o}(\sigma^6).
\end{align*}
\end{proposition}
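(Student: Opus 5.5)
We rely on Tweedie's formula from Section~\ref{sec:Tweedie}: $X\bysest - y = \sigma^2 \nabla \log p_{\sigma^2}(y)$. Both expansions then reduce to a formal small-$t$ expansion of the smoothed score $\nabla\log p_t$, with $t=\sigma^2$. Since $p_t = p * \phi_t$ solves the heat equation $\partial_t p_t = \tfrac12 \Delta p_t$, the formal series is
\[
p_t = p + \tfrac{t}{2}\Delta p + \wt{o}(t).
\]
Dividing $\nabla p_t$ by $p_t$ and expanding gives
\[
\nabla \log p_t = \nabla \log p + \tfrac{t}{2}\,\nabla\!\left(\frac{\Delta p}{p}\right) + \wt{o}(t).
\]
Next we use the identity $\Delta p / p = \Delta \log p + \norm{\nabla \log p}^2$. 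Its gradient is
\[
\nabla(\Delta\log p) + 2\nabla^2\log p\,\nabla\log p .
\]
Multiplying by $\sigma^2/\sqrt n$ yields the claimed expansion of $\frac{1}{\sqrt n}(X\bysest - y)$, with $u_0=u_1=u_3=0$, $u_2 = \nabla\log p/\sqrt n$, and $u_4$ as stated. Under the standing assumption that differentiation and integration can be interchanged, this step is purely algebraic.

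For the training error, we use Lemma~\ref{lem:bayes-train-err-representation}: $\trerr(\sigma^2) = \frac{t^2}{n}\E_{y\sim\pi'_t}\norm{\nabla\log p_t(y)}^2$. We expand it in two layers.

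\textbf{Inner layer.} Substituting the score expansion gives
\[
\norm{\nabla\log p_t}^2 = \norm{\nabla\log p}^2 + t\,\nabla\log p\cdot\nabla\!\left(\frac{\Delta p}{p}\right) + \wt{o}(t).
\]

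\textbf{Outer layer.} The expectation is over $\pi'_t$, not $\pi'$. For any smooth test function $f$, the heat semigroup gives
\[
\E_{\pi'_t} f = \E_{\pi'} f + \tfrac t2 \E_{\pi'}\Delta f + \wt{o}(t).
\]
This follows by moving the Laplacian onto $f$ through integration by parts. Collecting terms, the coefficient $\alpha_6$ of $t^3$ equals $\frac1n$ times
\[
\E_{\pi'}\!\left[\nabla\log p\cdot\nabla\!\left(\frac{\Delta p}{p}\right)\right] + \tfrac12\E_{\pi'}\!\left[\Delta\norm{\nabla\log p}^2\right].
\]

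The main obstacle is showing that this coefficient simplifies to $-\E_{\pi'}\norm{\nabla^2\log p}_F^2$. Write $s=\nabla\log p$ and $L=\log p$, and use the integration-by-parts rule $\E_{\pi'}[s\cdot\nabla g] = -\E_{\pi'}[\Delta g]$, valid because $p\,s=\nabla p$.

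\textbf{First term.} Taking $g = \Delta p/p = \Delta L + \norm{s}^2$, the first term becomes $-\E[\Delta\Delta L] - \E[\Delta\norm{s}^2]$. The quantity $\E[\Delta\Delta L]$ integrates to something of the form $\int p\,\Delta\Delta L$, which I expect to reduce by a further integration by parts to $-\E[s\cdot\nabla\Delta L]$.

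\textbf{Second term.} Bochner's identity gives
\[
\tfrac12\Delta\norm{s}^2 = \norm{\nabla^2 L}_F^2 + s\cdot\nabla\Delta L .
\]

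\textbf{Combining.} Adding the two terms, the third-order contributions $s\cdot\nabla\Delta L$ should cancel against the boundary-free integration-by-parts terms. What remains is $\E\norm{\nabla^2L}_F^2 - 2\E\norm{\nabla^2L}_F^2 = -\E\norm{\nabla^2 L}_F^2$.

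I would carry out this bookkeeping carefully, tracking the signs of each integration by parts. As a sanity check, I would verify the final formula on the Gaussian case $p=\normal(0,\Sigma)$. There $\trerr = \frac1n\Tr(t^2(\Sigma+tI)^{-1})$ expands as $\frac{t^2}{n}\Tr\Sigma^{-1} - \frac{t^3}{n}\Tr\Sigma^{-2}$, matching $\E\norm{\nabla^2\log p}_F^2 = \Tr\Sigma^{-2}$.
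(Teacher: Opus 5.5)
Your proposal is correct, and it differs from the paper mainly in how the residual is expanded.

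\textbf{Residual.} The paper starts from the posterior-mean ratio $\sigma\int z\,\phi_1(z)p(y+\sigma z)\,dz\big/\int\phi_1(z)p(y+\sigma z)\,dz$. It Taylor-expands $\exp\{\ell(y+\sigma z)\}$ to third order and evaluates the Gaussian moments, including $\E[Z\,T[Z,Z,Z]]$ and $\E[(g\cdot Z)\,Z^\top HZ]$, with Isserlis's theorem. Only then does it expand the quotient. You instead use Tweedie's formula, the formal heat-flow expansion $p_t=p+\tfrac t2\Delta p+O(t^2)$, and the identity $\Delta p/p=\Delta\log p+\|\nabla\log p\|^2$. This is the same computation in disguise: the paper's denominator is $p_t(y)/p(y)$, and by Stein's identity its numerator is $\sigma\nabla p_t(y)/p(y)$. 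Your version avoids third-order tensors and fourth moments, and the absence of odd powers of $\sigma$ comes for free.

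\textbf{Training error.} Here the two routes essentially coincide. Your identity $\E_{\pi'_t}f=\E_{\pi'}f+\tfrac t2\E_{\pi'}\Delta f$ is what the paper obtains by expanding the score at $y'+\sigma\tau$ and averaging over $\tau$. Both arguments finish with integration by parts and Bochner's identity. You integrate by parts on $s\cdot\nabla(\Delta p/p)$ and $\Delta\Delta L$; the paper does so on $g'^\top H'g'$.

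\textbf{Hedged steps.} The steps you left hedged do close. The identity $\E[\Delta\Delta L]=-\E[s\cdot\nabla\Delta L]$ is your own rule applied with $g=\Delta L$. The $s\cdot\nabla\Delta L$ terms then cancel exactly, leaving $-\E\|\nabla^2 L\|_F^2$.

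Your bookkeeping is cleaner than the paper's displayed intermediate step. The paper writes $\E[g'^\top H'g']=-\E[\Delta\|g'\|^2]$, but since $\nabla\|g'\|^2=2H'g'$ the correct value is $-\tfrac12\E[\Delta\|g'\|^2]$. Only with the $\tfrac12$ does Bochner yield $-\|H'\|_F^2$.

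Your framing also shows directly that the $t^3$ coefficient of $t^2\jfunc(t)$ is $\jfunc'(0)$. This is a formal version of the identity $\jfunc'(t)=-\frac1n\E_{\pi'_t}\|\nabla^2\log p_t\|_F^2$, which the paper later uses to make the expansion rigorous in Theorem~\ref{thm:bayes-train-asymptotics}.
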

See the proof in Appendix~\ref{proof:bayes-train-informal}. We emphasize that the $\wt{o}(\cdot)$ terms in Proposition~\ref{prop:bayes-train-informal} do not yield explicit rates of convergence nor any finite radius of convergence in $\sigma$. But rather, they encapsulate only higher-order terms in a formal power series expansion. While this perspective is conceptually informative, it does not provide rigorous analytical guaranties. The central idea underlying the above informal results is to expand $p(y) = \exp\{\log p(y)\}$ into its Taylor series. However, deriving explicit convergence rates, justifying the interchange of integration and differentiation, and determining the radius of convergence all require substantially more sophisticated mathematical tools.

By employing powerful techniques such as the Bakry–\'{E}mery calculus, we rigorously establish the following characterizations of $\tfunc$ up to $o(\sigma^6)$ (distinct from the previous term $\wt{o}(\sigma^6))$), when $\sigma^2 \to 0+$, validating the informal calculations in Proposition~\ref{prop:bayes-train-informal}. The proof is provided in Appendix~\ref{proof:bayes-train-asymptotics}.

\begin{theorem} \label{thm:bayes-train-asymptotics}
    Let $\pi'$ be the probability distribution of $X\theta$ induced by $\theta \sim \pi$.  Suppose that $\pi'$ has a density $p > 0$ on $\R^n$ and $ p$ is twice differentiable. If $\pi'$ has finite Fisher information $\jfunc_{\pi} = \jfunc(0) < +\infty$, the following quantities are well-defined for sufficiently small $t > 0$ and the limit holds
        \begin{align*}
            \lim_{t \to 0+} \frac{1}{n} \E_{y \sim \pi_t'} \brk{\norm{\nabla^2 \log p_t(y)}_F^2} = \frac{1}{n} \E_{y \sim \pi'} \brk{\norm{\nabla^2 \log p(y)}_F^2} =: \prn{-\jfunc_{\pi}'} < +\infty.
        \end{align*}
        then for sufficiently small $t > 0$,
        \begin{align*}
            \jfunc'(t) = - \frac{1}{n} \E_{y \sim \pi_t'} \brk{\norm{\nabla^2 \log p_t(y)}_F^2},
        \end{align*}
        and $\jfunc'(0)$ exists and is equal to $\jfunc_{\pi}'$ above. When $\sigma^2 \to 0+$, $|\tfunc(\sigma^2) - \sigma^4 \jfunc_{\pi} - \sigma^6 \jfunc_{\pi}'| = o(\sigma^6)$, i.e.\
        \begin{align*}
            & \left|\tfunc(\sigma^2) - \frac{\sigma^4}{n} \cdot \E_{y \sim \pi'} \brk{\norm{\nabla \log p(y)}^2} + \frac{\sigma^6}{n} \cdot \E_{y \sim \pi'} \brk{\norm{\nabla^2 \log p(y)}_F^2} \right|  = o(\sigma^6).
        \end{align*}
\end{theorem}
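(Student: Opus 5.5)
The plan is to treat $\jfunc(t)$ exactly as Part~II of the proof of Theorem~\ref{thm:Bayes_train_err} treated it, but one derivative higher. There are three steps: the de Bruijn-type derivative formula for $t>0$, right-continuity of that derivative at $0$, and a Taylor expansion.

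\textbf{Derivative for $t>0$.} Since $p_t = p*\phi_t$ solves the heat equation $\partial_t p_t = \tfrac12\Delta p_t$, write $\jfunc(t) = \frac1n\int \norm{\nabla p_t}^2/p_t\,\de y$ and differentiate under the integral. This is justified for $t>0$ because $p_t$ is smooth with Gaussian-controlled derivatives. Then integrate by parts twice. This is the classical Bakry--\'Emery $\Gamma_2$ computation:
\[
\frac{\de}{\de t}\int p_t\norm{\nabla \log p_t}^2 = -\int p_t \norm{\nabla^2\log p_t}_F^2 .
\]
Boundary terms vanish for $t>0$ by truncating with a smooth cutoff $b_R$ and letting $R\to\infty$, using integrability of $p_t\norm{\nabla\log p_t}^4$ and of the Hessian terms. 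These follow from the representation $\nabla\log p_t(y)=\E[(y'-y)\mid y]/t$ and its second-order analogue via conditional covariance, which give polynomial moments. This yields $\jfunc'(t) = -\frac1n\E_{\pi'_t}\norm{\nabla^2\log p_t}_F^2$. Note this is just the Fisher-information analogue of Lemma~\ref{lemma:mmse-grad}.

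\textbf{Continuity of the derivative at $0^+$.} I would show $K(t):=\frac1n\E_{\pi'_t}\norm{\nabla^2\log p_t}_F^2 \to K(0)$ by the same two-sided scheme as in Part~II.

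For the lower bound $\liminf_{t\to0^+} K(t)\ge K(0)$, I would use a variational formula of the type in Lemma~\ref{lem:variational-principle}, now for matrix fields $B\in\mc{C}_c^\infty$ tested against $\nabla^2 p_t/p_t - \nabla\log p_t\nabla\log p_t^\top$. Weak convergence of $p_t$ handles the terms linear in $p_t$. The quadratic-in-score term is controlled using convergence of $\jfunc(t)\to\jfunc(0)$ already established in Part~II, together with $L^2(p_t)$ convergence of the scores, which follows from equality in that limit.

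For the upper bound, use the $\Gamma_2$ convexity: $t\mapsto K(t)$ is non-increasing. This follows by differentiating once more and applying the matrix form of Bakry--\'Emery, or equivalently a Stam-type inequality for Hessians under Gaussian convolution. Monotonicity gives $\limsup_{t\to 0^+} K(t)\le K(0)$.

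\textbf{Expansion and the main obstacle.} Combining the two bounds gives $K(t)\to K(0)$. Finiteness of $K(0)$ comes from $K(t)\le K(0)$, or must be obtained by showing $K$ is bounded near $0$. By the mean value theorem, $\jfunc'(0)$ exists and equals $-K(0)=\jfunc_\pi'$. Then
\[
\jfunc(t)=\jfunc_\pi + t\jfunc_\pi' + o(t),
\]
and multiplying by $t^2$, using $\tfunc(t)=t^2\jfunc(t)$ from Lemma~\ref{lem:bayes-train-err-representation}, gives the claimed $o(\sigma^6)$ statement.

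The main obstacle I expect is the monotonicity/upper bound for $K(t)$ with only a twice-differentiable $p$, together with finiteness of $K(0)$ itself. If the $\Gamma_2$ monotonicity argument is too delicate, the fallback is a direct route. Write $\nabla^2\log p_t = (\nabla^2 p*\phi_t)/p_t - \nabla\log p_t\nabla\log p_t^{\top}$. Then apply Jensen/Cauchy--Schwarz to the conditional expectation of $y'$ given $y$, which is exactly the convexity argument of Part~I of the sparse-mixture proof. This bounds $K(t)$ by $\frac1n\E_{\pi'}\norm{\nabla^2\log p}_F^2$ plus a cross term that vanishes as $t\to0$.
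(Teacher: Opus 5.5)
Your Steps 1 and 3 are the paper's proof. For $t>0$ the paper simply quotes Ledoux's heat-flow identity $\jfunc'(t)=-K(t)$. Your $\Gamma_2$ derivation, with moments controlled through $\nabla\log p_t(y)=\E[y'-y\mid y]/t$ and $\nabla^2\log p_t(y)=\Cov(y'\mid y)/t^2-I_n/t$, is a fine substitute. The paper then obtains $\jfunc'(0)=-K(0)$ from the right-continuity of $\jfunc$ at $0$ (Part II of the proof of Theorem~\ref{thm:Bayes_train_err}) via $(\jfunc(t)-\jfunc(0))/t=\frac1t\int_{0+}^t\jfunc'(s)\,\de s$, and your mean-value-theorem step reproduces this.

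The difference is Step 2, where you are proving something the theorem \emph{assumes}. The convergence $K(t)\to K(0)<\infty$ is a hypothesis: the paper's proof invokes ``the assumption that $\lim_{t\to0+}\jfunc'(t)$ exists'', and Lemma~\ref{lem:regularity-higher-order} is offered afterwards as a sufficient condition for it. Step 2 can therefore be deleted. As an attempt to derive that hypothesis, it has genuine gaps.

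First, $K(0)<\infty$ cannot be derived from $p>0$ twice differentiable and $\jfunc(0)<\infty$. Perturbing $-\|y\|^2/2$ by disjoint high-frequency, low-amplitude bumps keeps $\nabla\log p$ a bounded perturbation of $-y$ while making $\E_{\pi'}\|\nabla^2\log p\|_F^2=\infty$. Your own liminf argument then gives $K(t)\to\infty$. Also, the inequality $K(t)\le K(0)$ would bound $K(0)$ from below, not above, so it cannot supply finiteness.

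Second, the monotonicity route to $\limsup_{t\to0+}K(t)\le K(0)$ does not close. Suppose $K$ is non-increasing on $(0,\infty)$; this is convexity of Fisher information along the heat flow, a nontrivial theorem even in dimension one, not a routine $\Gamma_2$ step. Monotonicity on the open interval still only gives $\lim_{t\to0+}K(t)=\sup_{t>0}K(t)$, with no comparison to $K(0)$. You would need $K(p*\phi_t)\le K(p)$ directly. The mechanism behind $\jfunc(t)\le\jfunc(0)$ is joint convexity of $\|\nabla p\|^2/p$, with $p_t$ a mixture of translates of $p$. That mechanism does not transfer, because $\|p\nabla^2p-\nabla p\nabla p^\top\|_F^2/p^3$ is not jointly convex in $(p,\nabla p,\nabla^2p)$. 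For $n=1$ and $p=1$ it reduces to $(c-b^2)^2$ in $(b,c)$, which is not convex.

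Your fallback is the right mechanism. Write $g=\nabla\log p$, $H=\nabla^2\log p$ and $y=y'+\sqrt t\,\tau$. Then $\nabla^2\log p_t(y)=\E[H(y')\mid y]+\Cov(g(y')\mid y)$, and Jensen plus Minkowski give $\sqrt{K(t)}\le\sqrt{K(0)}+n^{-1/2}\bigl(\E\|\Cov(g(y')\mid y)\|_F^2\bigr)^{1/2}$.

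However, the last term does not vanish for free. What you get directly is $\E\,\Tr\Cov(g(y')\mid y)=n(\jfunc(0)-\jfunc(t))\to0$. Upgrading this to the squared Frobenius norm requires $\E_{\pi'}\|\nabla\log p\|^4=\int\|\nabla p\|^4/p^3<\infty$, which is the quantity $\Psi$ in Lemma~\ref{lem:regularity-higher-order}. That bound must be proved from $K(0)<\infty$ or assumed.

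With $K(0)<\infty$ given and that point filled in, your liminf argument plus the fallback would establish the convergence the paper takes as a hypothesis. As written, though, Step 2 is not a proof, and the statement does not require it.
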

As a final remark concluding Theorem~\ref{thm:bayes-train-asymptotics}, we provide in the following lemma a sufficient condition under which the convergence $\jfunc'(t) \to \jfunc_\pi'$ holds. See its proof in Appendix~\ref{proof:regularity-higher-order}.
\begin{lemma} \label{lem:regularity-higher-order}
    Let $p>0$ be twice differentiable. If $\nabla p, \nabla^2 p$ are integrable, and
    \begin{align*}
        \Phi(y) := \frac{\norm{\nabla^2 p(y)}_F^2}{p(y)}, \qquad \text{and} \qquad \Psi(y) := \frac{\norm{\nabla p(y)}^4}{p(y)^3}
    \end{align*}
    belong to $\mc{L}^{1+\delta}(\R^n)$ for some $\delta > 0$. Then $\jfunc'(t)$ exists for all $t > 0$. In addition, $\jfunc_\pi'$ exists and is finite and $\jfunc'(t) \to \jfunc_\pi'$.
\end{lemma}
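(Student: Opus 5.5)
We need to prove Lemma~\ref{lem:regularity-higher-order}: under the integrability assumptions on $\Phi$ and $\Psi$, the derivative $\jfunc'(t)$ exists for all $t>0$, $\jfunc_\pi'$ is finite, and $\jfunc'(t)\to\jfunc_\pi'$. The plan has three steps: finiteness at $t=0$, the derivative formula for $t>0$, and continuity as $t\to 0+$.

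\emph{Step 1: finiteness of $\jfunc_\pi'$.} Write
\[
\nabla^2\log p=\frac{\nabla^2 p}{p}-\frac{\nabla p\,\nabla p^\top}{p^2},
\qquad
\norm{\nabla^2\log p}_F^2\,p\le 2\Phi+2\Psi .
\]
So the integral defining $-\jfunc_\pi'$ is finite once $\Phi,\Psi\in\mc{L}^1$. I would obtain $\Phi,\Psi\in\mc{L}^1$ from the $\mc{L}^{1+\delta}$ hypothesis by splitting $\R^n$ into $\{\Phi\le p^{a}\}$ and its complement, with a suitable $a>0$. Hölder then handles the piece where $p$ is small. Similarly $\jfunc_\pi<\infty$, since $\norm{\nabla p}^2/p\le(\Psi\,p)^{1/2}$ and Cauchy--Schwarz gives $\int\norm{\nabla p}^2/p\le(\int\Psi)^{1/2}$.

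\emph{Step 2: the derivative for $t>0$.} Here $p_t=p*\phi_t$ is smooth, so $\jfunc'(t)=-\frac1n\E\norm{\nabla^2\log p_t}_F^2$ should follow from the heat equation $\partial_t p_t=\frac12\Delta p_t$. This is the de Bruijn-type second identity. Differentiating $\int\norm{\nabla p_t}^2/p_t$ in $t$ and integrating by parts twice gives the Bakry--Émery $\Gamma_2$ identity $\int p_t\,\Gamma_2(\log p_t)=\int p_t\norm{\nabla^2\log p_t}_F^2$. To justify moving derivatives inside integrals and discarding boundary terms, I would use Jensen for the convex perspective maps:
\[
\frac{\norm{\nabla^2 p_t}_F^2}{p_t}\le\Big(\frac{\norm{\nabla^2 p}_F^2}{p}\Big)*\phi_t,
\qquad
\frac{\norm{\nabla p_t}^4}{p_t^3}\le\Psi*\phi_t .
\]
Both maps are jointly convex and $1$-homogeneous, and convolution is an average, so Jensen applies. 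This makes $\Phi_t,\Psi_t$ uniformly dominated in $\mc{L}^{1+\delta}$, since convolution with a probability kernel is a contraction there. Dominated convergence combined with cutoff functions in the integration by parts then handles the boundary terms.

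\emph{Step 3: continuity at $0$.} I would show $\Phi_t\to\Phi$ and $\Psi_t\to\Psi$ pointwise, using continuity of $p,\nabla p,\nabla^2p$ together with $\nabla p_t=(\nabla p)*\phi_t$, which holds because $\nabla p,\nabla^2p\in\mc{L}^1$. The dominating functions $\Phi*\phi_t$ and $\Psi*\phi_t$ converge in $\mc{L}^1$, even in $\mc{L}^{1+\delta}$. By the generalized dominated convergence theorem, $\int p_t\norm{\nabla^2\log p_t}_F^2\to\int p\norm{\nabla^2\log p}_F^2$, i.e.\ $\jfunc'(t)\to\jfunc_\pi'$. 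Combined with the continuity $\jfunc(0+)=\jfunc(0)$ from the proof of Theorem~\ref{thm:Bayes_train_err} and the mean value theorem, this also gives $\jfunc'(0)=\jfunc_\pi'$.

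The main obstacle is the cross term: pointwise convergence of $\norm{\nabla^2\log p_t}_F^2p_t$ requires domination of the whole expression, not just its pieces. Using the bound by $2\Phi_t+2\Psi_t$ resolves this. A second delicate point is the rigorous $\Gamma_2$ integration by parts in Step 2, where tail decay of $p_t$ must be controlled. There I would lean on the $\mc{L}^{1+\delta}$ uniform integrability, and also on \cite{dytso2020general}-type identities if needed.
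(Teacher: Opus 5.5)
Your skeleton is the paper's. It uses the bound $\norm{\nabla^2\log p_t}_F^2p_t\le 2(\Phi_t+\Psi_t)$, the convex-perspective (H\"{o}lder) domination $\Phi_t\le\Phi*\varphi_t$ and $\Psi_t\le\Psi*\varphi_t$, and a.e.\ convergence from $\nabla^k p_t=(\nabla^k p)*\varphi_t$. The paper closes with a uniform $\mc{L}^{1+\delta}$ bound and Vitali; you close with generalized dominated convergence. The genuine gap is Step 1, on which Step 3 rests. On $\R^n$ with Lebesgue measure, $\mc{L}^{1+\delta}\not\subset\mc{L}^1$. Your splitting only helps if $\{\Phi>p^a\}$ has finite Lebesgue measure, so that H\"{o}lder gives $\int_{\{\Phi>p^a\}}\Phi\le\norm{\Phi}_{1+\delta}\,|\{\Phi>p^a\}|^{\delta/(1+\delta)}$. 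Nothing in the hypotheses provides that, and likewise for $\Psi$. Without $\Phi,\Psi\in\mc{L}^1$, the $\mc{L}^1$-convergence $\Phi*\varphi_t\to\Phi$ that your generalized dominated convergence needs is unavailable. The ``$\mc{L}^{1+\delta}$ uniform integrability'' you invoke in Step 2 has the same defect: it rules out local concentration of mass, but not mass escaping to infinity.

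This cannot be patched, because the conclusion fails under the stated hypotheses. Take $n=1$ and, for $|y|\ge 2$, set $p(y)=c\,|y|^{-\alpha}h(|y|^{\beta})$ with $h(u)=1+\epsilon\sin u$, $\alpha>1$, $0<\epsilon<1$, $\beta=(\alpha+3)/4$. Extend $p$ smoothly and positively to $|y|<2$.
\begin{itemize}
\item $|p'|\lesssim|y|^{-(3\alpha+1)/4}$ and $|p''|\lesssim|y|^{-(\alpha+1)/2}$ are integrable.
\item $\Phi,\Psi\lesssim|y|^{-1}$, so both lie in $\mc{L}^{1+\delta}$ for every $\delta>0$.
\item $\jfunc_\pi<\infty$.
\end{itemize}
However, $(\log p)''=\beta^2|y|^{2\beta-2}(\log h)''(|y|^{\beta})+O(|y|^{\beta-2})$. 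Hence $((\log p)'')^2p=c\beta^4|y|^{-1}G(|y|^{\beta})+O(|y|^{-1-\beta})$, where $G=((\log h)'')^2h\ge 0$ is periodic with positive mean. Since $\int_2^\infty y^{-1}G(y^\beta)\,dy=\beta^{-1}\int_{2^\beta}^\infty u^{-1}G(u)\,du=+\infty$, you get $\jfunc_\pi'=-\infty$, and by Fatou $\jfunc'(t)\to-\infty$.

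The paper's proof has the same hole. On an infinite-measure space, a uniform $\mc{L}^{1+\delta}(\R^n)$ bound does not give the uniform integrability Vitali's theorem needs, because tightness is missing. The right repair is to add $\Phi,\Psi\in\mc{L}^1(\R^n)$ to the hypotheses. With that, your Step 3 is complete as written. It is also cleaner than the paper's route: $\mc{L}^1$-convergence of the dominators $2(\Phi+\Psi)*\varphi_t$ supplies uniform integrability and tightness at once.
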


\subsection{Proof of Proposition~\ref{prop:bayes-train-informal}} \label{proof:bayes-train-informal}
To avoid heavy notations, we write $o_{\R[\sigma]}(\cdot)$ as $o(\cdot)$ in this proof, which subsumes higher-order terms in the polynomial ring $\R[\sigma]$.
\paragraph{Step I: Formal power series conditional on $y$.} We first consider expanding the normalized residual term for fixed $X, y$ in $\R^n$. By a variable transformation $z = (X\theta - y)/\sigma$,
\begin{align}
    \frac{1}{\sqrt{n}} \prn{X\bysest - y} & = \frac{1}{\sqrt{n}}\E \brk{X\theta -y \mid X, y} = \frac{1}{\sqrt{n}}\frac{\int (X\theta-y) \exp \brc{-\frac{\norm{X \theta - y}^2}{2 \sigma^2}} \pi(\de \theta) }{\int \exp \brc{-\frac{\norm{X \theta - y}^2}{2 \sigma^2}} \pi(\de \theta) } \nonumber \\
    & = \frac{\sigma}{\sqrt{n}}\frac{\int z \exp \brc{-\frac{\norm{z}^2}{2}}p(y+\sigma z) \de z }{\int \exp \brc{-\frac{\norm{z}^2}{2}} p(y + \sigma z) \de z } = \frac{\sigma}{\sqrt{n}}\frac{\int z \phi_1(z) p(y+\sigma z) \de z }{\int \phi_1(z) p(y + \sigma z) \de z }. \label{eq:mid-informal-1}
\end{align}
Let $\ell(y) = \log p(y)$, and define
\begin{align*}
    g = \nabla \ell(y) \in \R^n, \qquad H = \nabla^2 \ell(y) \in \R^2, \qquad T = \nabla^3 \ell(y) \in \R^3,
\end{align*}
where $T$ is the symmetric tensor for the third-order derivatives $T_{ijk} = \partial_i\partial_j\partial_k \ell(y)$. With the tensor-vector product notation $T[z,z,z] := \sum_{i,j,k} T_{ijk} z_iz_jz_k$, we can then explicitly write out the expansion of $p$ in its leading terms
\begin{align*}
    & p(y+\sigma z)  = \exp \brc{\log p(y+\sigma z)} = \exp \brc{\ell(y) + \sigma g \cdot z + \frac{\sigma^2}{2} z^\top H z +  \frac{\sigma^3}{6} T[z,z,z]  +  o(\sigma^3)} \nonumber \\
    & = p(y) \cdot \brc{1 + \sigma g \cdot z + \frac{\sigma^2}{2} \prn{z^\top H z  + (g\cdot z)^2}  + \frac{\sigma^3}{6} \prn{T[z, z, z]+ 3(g \cdot z) \cdot z^{\top} H z + (g \cdot z)^3} + o(\sigma^3)}.
\end{align*}
Since $\phi_1(z) \de z$ is the density for $\normal(0, I_n)$, to substitute the above display into Eq.~\eqref{eq:mid-informal-1} requires computation of moments the standard Gaussian vector $Z \sim \normal(0, I_n)$. Indeed, by Isserlis's formula in Lemma~\ref{lem:isserlis},
\begin{align*}
    \E[Z_iZ_jZ_kZ_l] = \ind_{i=j} \ind_{k=l} + \ind_{i=k} \ind_{j=l} + \ind_{i=l} \ind_{j=k},
\end{align*}
we can compute the following non-zero expectations (we omit the zero terms of odd-moments by symmetry),
\begin{align*}
    &\E \brk{Z (g \cdot Z)} = \nabla \ell(y), \qquad \E \brk{Z^\top H Z + (g \cdot Z)^2} = \Tr (H) + \norm{g}^2 = \Delta \ell(y) + \norm{\nabla \ell(y)}^2, \\
    &\E \brk{Z T[Z, Z, Z]} = 3 [\partial_1 \Delta \ell(y), \cdots, \partial_n \Delta \ell(y)]^\top = 3\nabla (\Delta \ell(y)), \\
    & \E \brk{(g \cdot Z) \cdot Z^\top HZ} = \Tr(H)g + 2Hg = \Delta \ell(y) \nabla \ell(y) + 2\nabla^2 \ell(y) \nabla \ell(y), \\
    & \E \brk{(g \cdot Z)^3} = 3 \norm{g}^2 g = 3 \norm{\nabla \ell(y)}^2 \nabla \ell(y).
\end{align*}
Combining the above displays with the expansion for $p(y+\sigma z)$ yields
\begin{align*}
    \frac{\E[p(y+\sigma Z)]}{p(y)} & =  1 + \frac{\sigma^2}{2} \prn{\Delta \ell (y) + \norm{\nabla \ell (y)}^2} + o(\sigma^3), \\
    \frac{\E[Zp(y+\sigma Z)]}{p(y)} & = \sigma \nabla \ell (y) + \frac{\sigma^3}{2} \prn{\nabla (\Delta \ell(y)) + \Delta \ell(y) \nabla \ell(y) + 2\nabla^2 \ell(y) \nabla \ell(y) + \norm{\nabla \ell(y)}^2 \nabla \ell(y) } + o(\sigma^3).
\end{align*}
Finally, identifying the preceding equations as the denominator and numerator for Eq.~\eqref{eq:mid-informal-1} we have
\begin{align*}
    &\frac{1}{\sqrt{n}} \prn{X\bysest - y} \\
    & = \frac{\sigma}{\sqrt{n}}\cdot \frac{\sigma \nabla \ell (y) + \frac{\sigma^3}{2} \prn{\nabla (\Delta \ell(y)) + \Delta \ell(y) \nabla \ell(y) + 2\nabla^2 \ell(y) \nabla \ell(y) + \norm{\nabla \ell(y)}^2 \nabla \ell(y) } + o(\sigma^3)}{1 + \frac{\sigma^2}{2} \prn{\Delta \ell (y) + \norm{\nabla \ell (y)}^2} + o(\sigma^3)} \nonumber \\
    & = \frac{\sigma^2}{\sqrt{n}}\cdot \brc{\nabla \ell(y) + \frac{\sigma^2}{2} \prn{\nabla (\Delta \ell(y)) + 2\nabla^2 \ell(y) \nabla \ell(y) } + o(\sigma^2)},
\end{align*}
where in the last equality we make use of $(1+cx)^{-1}=1-cx+o(x)$.
\paragraph{Step II: Computing $\trerr_X(\bysest)$ marginalizing over $y$.}

To compute the training error, it requires further using the expansion $y=y' + \sigma \tau$. Similarly we define the gradient, Hessian and third-order derivative tensor as $g'$, $H'$ and $T'$ at $y'$. We denote by $T'[\tau, \tau] \in \R^n$ whose $i$-th entry is $\sum_{j,k} T'_{ijk} \tau_j \tau_k$. Using those notations, we obtain the following.
\begin{align*}
    &\trerr(\sigma^2) \nonumber \\
    &= \frac{\sigma^4}{n} \cdot \E_{y' \sim \pi', \tau \sim \normal(0, I_n)} \brk{\norm{ g' + \sigma \cdot H' \tau + \frac{\sigma^2}{2} \prn{T'[\tau, \tau] + \nabla (\Delta \ell(y')) + 2 H' g' } + o(\sigma^2)}^2} \\
    & = \frac{\sigma^4}{n} \cdot  \E_{y' \sim \pi', \tau \sim \normal(0, I_n)} \brk{ \norm{ \nabla \ell (y') }^2 + \sigma^2 \prn{\norm{H'}_F^2 + g' \cdot T'[\tau, \tau]+ g' \cdot \nabla(\Delta \ell(y')) + 2 g'H'g'} + o(\sigma^2)},
\end{align*}
where we use $\E[\tau] = 0$ and $\E[\tau \tau^\top] = I_n$. To fully remove $\tau$, it still remains to calculate the expectation of $g' \cdot T'[\tau, \tau]$ over $\tau$ in what follows.
\begin{align*}
    \E_{\tau \sim \normal(0, I_n)} \brk{g' \cdot T'[\tau, \tau]} &= \sum_{i,j} g'_i T_{ijj} = g' \cdot \nabla(\Delta \ell(y')).
\end{align*}
Combined with integration by parts for $g'H'g'$,
\begin{align*}
    \E \brk{g'H'g'} & = \E \brk{\nabla \norm{\nabla \ell(y')}^2 \cdot \nabla \ell(y')} = -\E \brk{\Delta \norm{\nabla \ell(y')}^2} = - \E \brk{\Delta \norm{g'}^2},
\end{align*}
we have
\begin{align*}
    &\trerr(\sigma^2)  = \frac{\sigma^4}{n} \cdot  \E \brk{ \norm{ \nabla \ell (y') }^2 + \sigma^2 \prn{\norm{H'}_F^2 + 2g' \cdot \nabla(\Delta \ell(y')) - 2 \Delta \norm{g'}^2}}  + o(\sigma^6).
\end{align*}
In the last step, we use Bochner's identity~\cite[Prop.~9.2.2]{petersen2006riemannian} for Euclidean space
\begin{align*}
    \frac{1}{2} \Delta \norm{g'}^2 = g' \cdot \nabla(\Delta \ell(y')) + \norm{H'}_F^2,
\end{align*}
we conclude the proof with
\begin{align*}
    &\trerr(\sigma^2) = \frac{\sigma^4}{n} \cdot  \E \brk{ \norm{ \nabla \ell (y') }^2 - \sigma^2 \norm{H'}_F^2}  + o(\sigma^6).
\end{align*}

\subsection{Proof of Theorem~\ref{thm:bayes-train-asymptotics}} \label{proof:bayes-train-asymptotics}

By  Theorem~\ref{thm:Bayes_train_err}, we have $\jfunc(t) = \jfunc(0) + o(t)$. It remains to be shown that $\jfunc'(0) = \lim_{t \to 0+} \jfunc'(t)$---due to the smoothing by the Gaussian kernels, the differentiability of $\jfunc(t)$ when $t > 0$ is free, since we can always interchange integration and differentiation. However, the exact formula for $\jfunc'(t)$, is much more involved, requiring \citeauthor{bakry2006diffusions}'s calculus~\citep{bakry2006diffusions, bakry2013analysis}. We refer to the following explicit formula\footnote{We adapt to our scaling convention $t=\sigma^2$. The original form in the referenced paper is for the scaling $t = \sigma$.} from \cite[Eq.~(3.2)]{ledoux2016heat} as
\begin{align*}
    \jfunc'(t) = -\frac{1}{n} \E_{y \sim \pi_{t}'} \brk{\norm{\nabla^2 \log p_t(y)}_F^2}.
\end{align*}
We then use Theorem~\ref{thm:Bayes_train_err} to show the differentiability of $\jfunc(t)$ at $t=0+$. Indeed,
\begin{align*}
    \frac{\jfunc(t) - \jfunc(0)}{t} = \lim_{\epsilon \to 0+} \frac{\jfunc(t) - \jfunc(\epsilon)}{t} = \frac{1}{t} \int_{0+}^t \jfunc'(t) \de t. 
\end{align*}
Taking $t \to 0$, and by the assumption that $\lim_{t \to 0+}\jfunc'(t)$ exists, we conclude
\begin{align*}
    \jfunc'(0) = \lim_{t \to 0} \frac{1}{t} \int_{0+}^t \jfunc'(t) \de t = \lim_{t \to 0+} \prn{-\frac{1}{n} \E_{y \sim \pi_{t}'} \brk{\norm{\nabla^2 \log p_t(y)}_F^2}} = -\frac{1}{n} \E_{y \sim \pi'} \brk{\norm{\nabla^2 \log p(y)}_F^2}. 
\end{align*}
The proof is complete.

\subsection{Proof of Lemma~\ref{lem:regularity-higher-order}} \label{proof:regularity-higher-order}
Since $p, \nabla p, \nabla^2p$ are integrable, by Lebesgue differentiation theorem, one has almost surely
\begin{align*}
    p_t =p *\varphi_t \to p, \qquad \nabla p_t = \nabla p* \varphi_t \to \nabla p, \qquad \nabla^2p_t = \nabla^2 p * \varphi_t \to \nabla^2 p,
\end{align*}
where we use for any two differentiable functions $f$ and $g$ on the same space, $\nabla (f*g) = \nabla f* g= f*\nabla g$. Therefore, $p_t\nabla^2 \log p_t \to p\nabla^2 \log p$. To upgrade a.s.\ convergence to convergence in the mean, we want to invoke Vitali's convergence theorem, which requires uniform integrability certifications. We construct auxiliary functions
\begin{align*}
    \Phi_t(y) := \frac{\norm{\nabla^2 p_t(y)}_F^2}{p_t(y)}, \qquad \text{and} \qquad \Psi_t(y) := \frac{\norm{\nabla p_t(y)}^4}{p_t(y)^3},
\end{align*}
and we can write the upper bound
\begin{align}
    \norm{\nabla^2 \log p_t}_F^2 p_t = \norm{\frac{\nabla^2 p_t}{p_t}-\frac{\nabla p_t\left(\nabla p_t\right)^{\top}}{p_t^2}}_F^2 p_t \leq 2 (\Phi_t + \Psi_t). \label{eq:regularity-mid-1}
\end{align}
By H\"{o}lder's inequality, for any number $q > 0$ and functions $f$ and $g>0$,
\begin{align*}
    \norm{f(\cdot) \varphi_t(y-\cdot)}_{\mc{L}^1} \leq \norm{\frac{f(\cdot)}{g^{\frac{q}{q+1}}(\cdot)} \varphi_t^{\frac{1}{q+1}}(y-\cdot)}_{\mc{L}^{q+1}} \norm{g^{\frac{q}{q+1}}(\cdot) \varphi_t^{\frac{q}{q+1}}(y-\cdot)}_{\mc{L}^{q^{-1} + 1}}.
\end{align*}
Rearranging terms gives
\begin{align*}
    \frac{(f*\varphi_t)^{q+1}}{(g*\varphi_t)^q} \leq \prn{\frac{f^{q+1}}{g^q}}*\varphi_t.
\end{align*}
Applying the above inequality pointwise to $\Phi_t$ and $\Psi_t$ gives
\begin{align*}
    \Phi_t & = \frac{\norm{\nabla^2 p_t}_F^2}{p_t} = \frac{\norm{\nabla^2 p * \varphi_t}_F^2}{p * \varphi_t} \leq \Phi *\varphi_t, \\
    \Psi_t & = \frac{\norm{\nabla p_t}^4}{p_t^3} = \frac{\norm{\nabla p*\varphi_t}^4}{(p*\varphi_t)^3} \leq \Psi * \varphi_t.
\end{align*}
Return to Eq.~\eqref{eq:regularity-mid-1}, the above display then implies
\begin{align*}
    \norm{\norm{\nabla^2 \log p_t}_F^2 p_t}_{\mc{L}^{1+\delta}} \leq 2 \norm{\Phi + \Psi}_{1+\delta} \norm{\varphi_t}_1 = 2 \norm{\Phi + \Psi}_{1+\delta}.
\end{align*}
Hence, the functions $\{\norm{\nabla^2 \log p_t}_F^2 p_t\}$ indexed by $t$ are uniformly integrable in $\mc{L}^1$. By Vitali's convergence theorem~\citep[Thm.~16.14]{billingsley1995probability}, we conclude that
\begin{align*}
    \jfunc'(t) =-\int \norm{\nabla^2 \log p_t(y)}_F^2 p_t(y) \de y \to -\int \norm{\nabla^2 \log p(y)}_F^2 p(y) \de y = \jfunc_{\pi}'.
\end{align*}
The proof is complete.

\end{document}